\documentclass{article}
\usepackage{preprint}
\usepackage{authblk}

\usepackage{subfigure}
\usepackage{float}
\usepackage{cases}
\usepackage[shortlabels]{enumitem}
\usepackage{thmtools,thm-restate}
\usepackage{bbm}     

\usepackage{algpseudocode}
\usepackage[linesnumbered,ruled,vlined]{algorithm2e}




\newcommand{\subto}{\mathrm{subject\ to}}















\def\1{\bm{1}}

\newtheorem{definition}{Definition}

\newtheorem{assumption}{Assumption}

\newtheorem{lemma}{Lemma}

\newtheorem{theorem}{Theorem}
\newtheorem{remark}{Remark}
\newtheorem{corollary}{Corollary}

\newcommand{\std}{\mathrm{std}}
\DeclareMathOperator{\supp}{supp}
\newcommand{\Abs}[1]{\left|#1\right|}






\def\rmX{{\mathbf{X}}}

\def\rmZ{{\mathbf{Z}}}





\DeclareMathAlphabet{\mathsfit}{\encodingdefault}{\sfdefault}{m}{sl}
\SetMathAlphabet{\mathsfit}{bold}{\encodingdefault}{\sfdefault}{bx}{n}


\def\gB{{\mathcal{B}}}

\def\gE{{\mathcal{E}}}

\def\gG{{\mathcal{G}}}

\def\gM{{\mathcal{M}}}

\def\gO{{\mathcal{O}}}
\def\gP{{\mathcal{P}}}



\def\sR{{\mathbb{R}}}








\newcommand{\E}{\mathbb{E}}

\newcommand{\R}{\mathbb{R}}

\newcommand{\Var}{\mathrm{Var}}

\newcommand{\Cov}{\mathrm{Cov}}


\DeclareMathOperator*{\argmin}{arg\,min}

\DeclareMathOperator{\Tr}{Tr}

%

%


\newcommand{\Bt}{{B^0}}

\newcommand{\N}{\mathcal{N}}
\newcommand{\Omegat}{\Omega^0}
\newcommand{\diag}{\mathrm{diag}}
\newcommand{\Sigmat}{\Sigma^0}
\newcommand{\Thetat}{\Theta^0}
\newcommand{\const}{\mathrm{const.}}
\newcommand{\bfx}{\mathbf{x}}

 \newcommand{\PA}{\mathrm{PA}}
 \newcommand{\Erdos}{Erdős} 
\newcommand{\Renyi}{Rényi }
\newcommand{\SigmaBOmega}{\Sigma(B,\Omega)}
\newcommand{\ThetaBOmega}{\Theta(B,\Omega)}
\newcommand{\quasimcp}{p_{\lambda,\delta}}
\newcommand{\gEmin}{\gE_{\min}}

\def\papertitle{Markov Equivalence and Consistency in\\Differentiable Structure Learning}
\title{\papertitle}

\author[$\dag$]{\bf Chang Deng\thanks{Correspondence to \texttt{changdeng@chicagobooth.edu} }}
\author[$\dag$,$\ddag$]{\bf Kevin Bello}
\author[$\ddag$]{\bf Pradeep Ravikumar}
\author[$\dag$]{\bf Bryon Aragam}
\affil[$\dag$]{Booth School of Business, The University of Chicago}
\affil[$\ddag$]{Machine Learning Department, Carnegie Mellon University}

\begin{document}

\maketitle

\begin{abstract}
Existing approaches to differentiable structure learning of directed acyclic graphs (DAGs) rely on strong identifiability assumptions in order to guarantee that global minimizers of the acyclicity-constrained optimization problem identifies the true DAG. Moreover, it has been observed empirically that the optimizer may exploit undesirable artifacts in the loss function. We explain and remedy these issues by studying the behaviour of differentiable acyclicity-constrained programs under general likelihoods with multiple global minimizers. By carefully regularizing the likelihood, it is possible to identify the sparsest model in the Markov equivalence class, even in the absence of an identifiable parametrization or even faithfulness. We first study the Gaussian case in detail, showing how proper regularization of the likelihood defines a score that identifies the sparsest model. These results are then generalized to general models and likelihoods, where the same claims hold. Furthermore, under standard faithfulness assumptions, our approach also recovers the Markov equivalence class. These theoretical results are validated empirically, showing how this can be done using standard gradient-based optimizers, thus paving the way for differentiable structure learning under general models and losses. 
\end{abstract}

\section{Introduction}
\label{sec:introduction}

Directed acyclic graphs (DAGs) are the most common graphical representation for causal models \citep{pearl2009causality,spirtes2000,peters2017elements}, where nodes represent variables and directed edges represent cause-effect relationships among variables.
We are interested in the problem of structure learning, i.e. learning DAGs from passively observed data, also known as causal discovery.
 
Our focus will mainly be on score-based approaches to DAG learning~\citep{chickering2002,heckerman1995learning}, where the structure learning problem is formulated as optimizing a given score or loss function $s(B;\rmX)$ that measures how well the graph, represented as an adjacency matrix $B\in \{0,1\}^{p\times p}$, fits the observed data $\rmX$, constrained to the graphical structure $B$ being acyclic. 
This combinatorial optimization problem is known to be NP-complete~\citep{chickering1996learning,chickering2004}.

Recent advances have introduced a continuous representation of DAGs, replacing the combinatorial acyclicity constraint with a continuous one through a differentiable function that exactly characterizes DAG structures~\citep{zheng2018dags,zheng2020learning}.
In this case, the discrete adjacency matrix $B \in \{0,1\}^{p\times p}$ is first relaxed to the space of real matrices, i.e., $B \in \sR^{p\times p}$, and then a differentiable function $h: \sR^{p\times p} \to [0,\infty)$ is devised so that $h(B) = 0$ if and only if $B$ is a DAG~\citep{zheng2018dags,bello2022dagma}.
This results in the following optimization problem:
\begin{align}\label{eq:notears}
    \min_{B\in \sR^{p\times p}} s(B;\rmX) \quad \subto \quad  h(B) = 0.
\end{align}
Considering a differentiable score function $s$, the differentiable program \eqref{eq:notears} facilitates the use of gradient-based optimization techniques along with the use of richer models, such as neural networks, for modeling the functional relationships among the variables
~\citep{zheng2020learning,yu2019dag,ng2020role,lachapelle2019gradient,pamfil2020dynotears,kyono2020castle,zhu2020causal}. {One of the most attractive features of this approach is that it applies to general models, losses, and optimizers, in contrast to prior work. Moreover, it cleanly separates computational and statistical concerns, so that each can be studied in isolation, in the same spirit as the graphical lasso \citep{meinshausen2006high,yuan2007model,friedman2008sparse}.}

Looking back at the inception of the continuous DAG learning framework by \citet{zheng2018dags,zheng2020learning}, however, most developments in this framework have focused on the design of alternative differentiable acyclicity functions $h$ with better numerical/computational properties~\citep{bello2022dagma,lee2019scaling,zhang2022truncated,yu2019dag}, placing little emphasis on which score function to use~\citep{ng2020role}.
In fact, and unfortunately, regardless of the modeling assumptions, it has become a rather standard practice~\citep{yu2019dag,zheng2020learning,bello2022dagma,deng2023optimizing,lee2019scaling,kyono2020castle} to simply use the least squares (LS) loss (a.k.a. ``reconstruction loss'') as the score by default, following the original paper by \citet{zheng2018dags}, despite its known statistical limitations \citep{van2013ell_,loh2014high,aragam2019globally}.

As a result,
\citet{reisach2021beware} flagged the empirical successes of continuous structure learning (CSL) methods as largely due to the high agreement between the order of marginal variances of the nodes and the topological order of the underlying simulated DAGs, a concept they describe as ``varsortability''.
Then, \citet{reisach2021beware} empirically showed that the performance in structure recovery of CSL methods drops significantly after simple data standardization.
More recently, \citet{ng2024structure} demonstrated that this phenomenon may not be explained by varsortability, and instead pointed out that the explanations are due to the score function, albeit without proposing which score function to use. These observations motivate a deeper consideration of the choice of score.

Unfortunately, despite the fact that several score functions have been proposed for learning Bayesian networks (such as BIC~\citep{heckerman1995learning}, BDeu~\citep{maxwell1997efficient}, and MDL~\citep{bouckaert1993probabilistic}), their application to CSL methods is not well understood.
This paper is precisely concerned with finding a suitable and general score function with strong statistical properties for CSL methods.
That is, our objective is to find a score function that is: (1) differentiable so that it is amenable to gradient-based optimization; (2) applicable to general models; (3) scale-invariant; (4) capable of identifying the sparsest model under proper regularization; and (5) connects nicely with classical concepts from Bayesian networks such as faithfulness and Markov equivalence classes.

\textbf{Contributions.} The main contribution of our work is to show that a properly regularized, likelihood-based score function has the five properties outlined above. We begin with Gaussian models to convey the main ideas, and then discuss generalizations.
In more detail:
\begin{enumerate}
    \item (Section~\ref{sec:gauss}) Starting with Gaussian models,
    we show that using the log-likelihood with a quasi-MCP penalty \eqref{eq:quasi_MCP} as the scoring function leads to optimal solutions of \eqref{eq:notears} that correspond to the sparsest DAG structure which is Markov to $P(X)$ (Theorem \ref{thm:main}). Furthermore, under the faithfulness assumption, all optimal solutions are {the sparsest within the same} Markov equivalence class (Theorem \ref{thm:MEC}).
    \item (Section~\ref{sec:general_models}) We provide general conditions on the log-likelihood under which similar results hold for general models (Theorem~\ref{thm:general_main_theorem}). 
    \item (Section~\ref{sec:scale}) We show that for Gaussian models, the log-likelihood score is scale-invariant. This means that rescaling or standardizing the data does not change the DAG structure 
    (Theorem \ref{thm:sample_EC}), and hence is not susceptible to varsortability.
    \item We conduct experiments in multiple settings to evaluate the advantages of using a likelihood-based scoring method. The findings from these experiments are detailed in Section \ref{sec:experiments} and Appendix~\ref{sec:exp_details}. The empirical results support our theoretical claims: The likelihood-based score is robust and scale invariant. 
\end{enumerate}

\section{Related work}
\label{sec:related_work}

Most methods for learning DAGs fall into two primary categories: Constraint-based algorithms, which depend on tests of conditional independence, and score-based algorithms, which aim to optimize a specific score or loss function. As our focus is on score-based methods, we only briefly mention classical constraint-based methods \cite{spirtes1991algorithm,margaritis1999bayesian,tsamardinos2003algorithms}.
Within the umbrella of score-based methods,
the linear Gaussian models is covered in works such as~\citep{aragam2019globally,aragam2015concave,ghoshal2017learning,ghoshal18,Meinshausen.2006,peters2014identifiability}, while studies on linear non-Gaussian SEMs are found in \citep{loh2014high,shimizu2006linear}. {Regarding nonlinear SEMs, significant contributions have been made in additive models \citep{buhlmann2014cam,ernest2016causal,voorman2014graph}, additive noise models \citep{hoyer2008nonlinear,peters2014identifiability,mooij2016distinguishing}, generalized linear models \citep{park2017learning,park2019high,gu2019penalized}, and broader nonlinear SEMs \citep{monti2020causal,goudet2018learning}.}
	
Works that are more directly connected to our research include those developed in the continuous structure learning (CSL) framework \cite[e.g.][]{zheng2018dags,zheng2020learning,deng2023optimizing,bello2022dagma,deng2023global,lachapelle2019gradient,zhu2020causal,ng2020role,moraffah2020causal,kyono2020castle,pamfil2020dynotears}.
Most of these papers focus on empirical and computational aspects, and only a few study the theoretical properties of the CSL framework in \eqref{eq:notears}. These include: \citet{dennis2020,ng2022convergence} studied the optimization and convergence subtleties of problem \eqref{eq:notears}; \citet{deng2023optimizing} studied optimality guarantees for more general types of score functions and proposed a bi-level optimization method to guarantee local minima; \citet{deng2023global} designed an optimization scheme that converges to the global minimum of the least squares score in the bivariate case.

Finally, among the few works that study score functions under this framework, we note: \citet{ng2020role} studied the properties of the $\ell_1$-regularized profile log-likelihood, which leads to quasi-equivalent models to the ground-truth DAG; and the authors in \citet{seng2023learning} claim that a family of likelihood-based scores reduce to the least square loss, {although this only holds under knowledge of the noise variances \citep{loh2014high}.} Perhaps most closely related to our work is \citet{brouillard2020differentiable}, who proved a similar identifiability result under the likelihood score.
However, they used an $\ell_0$ regularizer along with the faithfulness assumption, which leads to an inherently non-differentiable optimization problem that is much simpler to analyze. On the other hand, they also consider interventional data, which we do not pursue in this work. Extending our results to include interventional data and interventional Markov equivalence is an important direction for future work.
In contrast to the aforementioned works, we also prove that the log-likelihood has desirable properties such as being scale invariant, and when regularized by nonconvex and differentiable approximations of the $\ell_0$ function, it provably leads to useful solutions that are minimal models and Markov equivalent to the underlying structure, {without assuming faithfulness.}

\section{Preliminaries}\label{sec:preliminary}

We let $G = (V,E)$ denote a directed graph on $p$ nodes, with vertex set $V = [p]\coloneq \{1,\ldots,p\}$ and edge set $E\subset V\times V$, where $(i,j)\in E$ indicates the presence of a directed edge from node $i$ to node $j$. 

We associate each node $i\in V$ to a random variable $X_i$, and let $X = (X_1,\ldots,X_p)$.

\textbf{Structural equation models (SEMs).} An SEM $(X,f,P(N))$ over the random vector $X=(X_1,\ldots,X_p)$ is a collection of $p$ structural equations of the form:
\begin{align}\label{eq:sem}
    X_j = f_j(X,N_j), 
    \quad \partial_k f_j = 0 \text{ if } k\notin\PA_j,
\end{align}
where $f=(f_j)_{j=1}^p$ is a collection of functions $f_j:\R^{p+1} \rightarrow \R$, here $N = (N_1,\ldots,N_p)$ is a vector of independent noises with distribution $P(N)$, and $\PA_j$ denotes the set of parents of node $j$. 
Here, $\partial_k f_j$ denotes the partial derivative of $f_j$ w.r.t. $X_k$, which is identically zero when $f_j$ is independent of $X_k$, i.e. $f_j(X,N_j)=f_j(\PA_j,N_j)$.

The graphical structure induced by the SEM, assumed to be a DAG, will be represented by the following $p\times p $ weighted adjacency matrix $B$:
\begin{align}\label{eq:B}
    B = B(f),\qquad B_{ij} = \|\partial_i f_j\|_2,
\end{align}
and we use $G(B)$ to denote the corresponding binary adjacency matrix

For any set $\mathcal{B}$ of SEMs, we let
\begin{align}
\label{def:GB}
    \gG(\gB) \coloneq \{G(B(f)): (X,f,P(N))\in\gB\},
\end{align}
i.e. $\gG(\gB)$ is the collection of all the DAGs implied by $\gB$. If $\mathcal{D}$ is a set of DAGs and $\gB$ is a set of SEM, we also abuse notation by writing $\mathcal{D}=\gB$ to indicate $\mathcal{D}=\gG(\gB)$.

The SEM \eqref{eq:sem} is general enough to include many well-known models, such as 
linear SEMs \citep[e.g.,][]{loh2014high,peters2014identifiability}, 
generalized linear models~\citep{park2017learning,park2019identifiability,gao2020polynomial},
and additive noise models~\citep{hoyer2008nonlinear,peters2014causal}, 
post-nonlinear models~\citep{zhang2012identifiability, zhang2015estimation} and general nonlinear SEM ~\citep{monti2020causal,goudet2018learning,kalainathan2022structural,zheng2020learning}.
To illustrate some of these models: In linear SEMs we have $X_j = f_j(\PA_j) + N_j$, where $f_j$ is a linear map; in causal additive models (CAM) we have $X_j = \sum_{k\in \PA_j} f_{j,k}(X_k) + N_j$, where $f_{j,k}$ is a univariate function; in post-nonlinear models we have $X_j = f_{j,1}(f_{j,2}(\PA_j) + N_j)$. 
In fact, essentially any distribution can be represented as an SCM of the form \eqref{eq:sem}; see Proposition~7.1 in \citet{peters2017elements}.

\textbf{Faithfulness and sparsest representations.} 
It is well-known that the DAG $G$ is \emph{not} always identifiable from $X$, and there is a well-developed theory on what can be identified based on $X$ under certain assumptions. This leads to the concepts of faithfulness and sparsest representations, which we briefly recall here; we refer the reader to \cite{spirtes2000,pearl2009causality,peters2017elements} for details. 
Let $\mathcal{I}(P)$ denote the set of conditional independence relations implioed by the distribution $P$, and let $\mathcal{I}(G)$ denote the set of $d$-separations implied by the graph $G$. 
Then $P$ is \emph{Markov} to $G$ if $\mathcal{I}(G) \subset \mathcal{I}(P)$, and \textit{faithful} to $G$ if $\mathcal{I}(P) \subset \mathcal{I}(G)$. 
When both conditions hold, i.e. $\mathcal{I}(P)=\mathcal{I}(G)$, then $G$ is called a \emph{perfect map} of $P$. 
Following common convention, we will simply call $P$ faithful when $\mathcal{I}(G)=\mathcal{I}(P)$.
When $P$ is faithful to $G$, the Markov equivalence class (MEC) of $G$ is identifiable and can be represented by a CPDAG. 
\begin{definition}\label{def:MEC}
   For any DAG $G$, the Markov equivalence class is $\gM(G)=\{\widetilde{G}: \mathcal{I}(\widetilde{G}) = \mathcal{I}(G)\}$
\end{definition}

Since faithfulness may not always hold, there has been progress in understanding what can be identified under weaker conditions. One approach which we will use is the notion of a \emph{sparsest (Markov) representation} (SMR), introduced in \cite{raskutti2018learning}. A sparsest representation of $P$ is a Markovian DAG $G$ that has strictly fewer edges than any other Markovian DAG $G'$, {and such sparsest representation is unique up to Markov equivalence class}.
Theorem~2.4 in \cite{raskutti2018learning} shows that if $P$ is faithful to $G$, then $G$ must be a sparsest representation of $P$. {This notion is closely related to the notion of minimality we adopt in Definition~\ref{def:minimal_in_EC} (cf. Lemma \ref{lemma:SMR} in the Appendix).} These ideas can be generalized and weakened even further; see \cite{lam2022greedy,lam2023causal} for details.

\textbf{Parameters and the negative log-likelihood (NLL).} For positive integers $m,s$, we will use $\psi \in \Psi \subseteq \sR^m$ and $\xi \in \Xi \subseteq \sR^s$ to denote the model parameters for $f = (f_1,\ldots,f_p)$ and $N$, respectively.\footnote{Given that $\psi$ describes all the parameters for the functions $f$, we will also use $B(\psi)$ to denote $B(f)$ in \eqref{eq:B}.}
Then we denote the distribution of $X$ by $P(X;\psi,\xi)$. 
Let $\bfx\in \R^p$ denote one observation of $X$. 
Given $n$ i.i.d. samples $\rmX = (\bfx_1,\ldots,\bfx_n)^\top$ where $\bfx_i\sim P(X;\psi,\xi)$, the negative log-likelihood and expected negative log-likelihood can be written as:
\begin{align}\label{eq:logll}
 \ell_n(\psi,\xi) = -\frac{1}{n}\sum_{i=1}^n\log P(\bfx_i;\psi,\xi), \qquad \ell(\psi,\xi) = -\E [\log P(\bfx;\psi,\xi) ],
\end{align}
where the subscript $n$ in $\ell_n$ is used to indicate the sample version of the log-likelihood. 

\textbf{Identifiability.}
Let $\psi^0$ (resp. $\xi^0$) denote the model parameters for the ground truth $f^0$ (resp. $N^0$), let $B^0=B^0(\psi^0)\in\R^{p\times p}$ denote the induced weighted adjacency matrix, and let $G(B^0)\in\{0,1\}^{p\times p}$ denote the induced binary adjacency matrix. For example, in the general linear Gaussian model \eqref{eq:linear_uniden}, $\psi = B$ represents the adjacency matrix, and $\xi = \Omega$ denotes the variance of the Gaussian noise. In another case, if $f_j$ is approximated by a multilayer perceptron (MLP), with $N_j$ as Gaussian noise, then $\psi$ includes all the parameters of the MLP, while $\xi$ represents the variance of the Gaussian noise. Additionally, $(B)_{ij} = [B(\psi)]_{ij} = \|\text{i-th column of } A_j^{(1)}\|$, where $A_j^{(1)}$ is the first hidden layer in $f_j$ \citep{zheng2020learning}. Thus, by our definitions, $P(X;\psi^0,\xi^0)$ is the true distribution.
Here, there are two types of identifiability questions: 
\begin{enumerate}
    \item \textit{Parameter identifiability}: Is it possible to uniquely determine the parameters $(\psi^0,\xi^0)$ based on observations from $P(X;\psi^0,\xi^0)$? 
    Formally, is there any $(\widetilde{\psi},\widetilde{\xi})\ne (\psi^0,\xi^0)$, such that $P(X,\psi^0,\xi^0) = P(X,\widetilde{\psi},\widetilde{\xi})$ almost surely?  
    \item \textit{Structure identifiability}: Is it possible to uniquely determine the DAG $G(B^0)$ based on observations from $P(X;\psi^0,\xi^0)$? In other words, is there any $(\widetilde{\psi},\widetilde{\xi})\ne (\psi^0,\xi^0)$ such that $P(X,\psi^0,\xi^0) = P(X,\widetilde{\psi},\widetilde{\xi})$ but $G(B^0)\ne G(B(\widetilde{\psi}))$. 
\end{enumerate}

In general, parameter identifiability implies structural identifiability since the ability to uniquely determine parameter values often means that the structure they induce is also identifiable.
However, the converse is not generally true, i.e. structural identifiability does not always imply parameter identifiability, as different parameter values can lead to the same structure. 
Classical results on identifiability of SEMs include: 
linear SEM with equal variance \citep{loh2014high}, 
linear SEM with non-Gaussian noises~\citep{shimizu2006linear,shimizu2011directlingam}, 
causal additive models with Gaussian noises \citep{buhlmann2014cam}, 
additive models with continuous noise \citep{peters2014causal}, 
and post-nonlinear models~\citep{zhang2012identifiability,zhang2015estimation,immer2023identifiability}. 

In models where parameter identifiability is possible, the population NLL $\ell(\psi,\xi)$ serves as a natural choice for the score function because it attains a unique minimum at the true parameters $(\psi^0,\xi^0)$. 
However, this approach is not straightforward for nonidentifiable models, where multiple parameter sets can induce the same data distribution $P(X;\psi^0,\xi^0)$, leading to ambiguities in parameter or structure estimation. 
In such cases, regularizing the log-likelihood can alleviate this issue. 
These regularizers enforce specific characteristics like sparsity, guiding the model towards more meaningful solutions (e.g. faithful or sparsest), despite the lack of identifiability.

\section{General linear Gaussian SEMs: A nonidentifiable model}
\label{sec:gauss}

Although our results apply to general models,  we begin by outlining the main idea with one of the simplest nonidentifiable models, the Gaussian model. 
Our goal in this section is to theoretically show how the NLL with nonconvex differentiable regularizers can lead to meaningful solutions such as minimal-edge models and elements of Markov equivalent classes.
We also discuss and prove the scale invariance of NLL, making it amenable to CSL approaches and addressing concerns raised in previous work \citep{reisach2021beware,ng2024structure}.
Then, in Section \ref{sec:general_models}, we extend these results to general models.

\subsection{Gaussian DAG models}

A linear SEM $(B,\Omega)$ over $X$ with independent Gaussian noises $N$, a special case of \eqref{eq:sem}, is well-known to be nonidentifiable in terms of parameters and structure~\citep[see][for discussion]{aragam2015concave}. 
We write the model as follows:
\begin{align}\label{eq:linear_uniden}
    X = B^\top X + N,
\end{align}
where $B\in \R^{p\times p}$ is a matrix of coefficients with $G(B)$ being a DAG, and $N \in \R^p$ is the vector of independent noises with covariance matrix $\Omega = \diag(\omega_1^2,\ldots,\omega_p^2)$.\footnote{In terms of notation given in Section \ref{sec:preliminary}, we have parameters $\psi = B $ and $\xi = (\omega_1^2,\ldots,\omega_p^2)$, and parameter spaces $\Psi = \R^{p\times p},\Xi = \R^p_{> 0}$.}

Given the model \eqref{eq:linear_uniden} it is easy to see that the distribution $P(X)$ is Gaussian and is fully characterized by the pair $(B,\Omega)$. That is:
\begin{align}\label{eq:distribution_X}
    X\sim \N(0,\Sigma), \quad \Sigma =  \Sigma_f(B,\Omega) \coloneq (I - B)^{-\top}\Omega(I - B)^{-1},
\end{align}
where $\Sigma$ is the covariance matrix of $X$.
In the sequel, we use the subscript $f$ to refer to a function. In this case, $\Sigma_f$ denotes a function with arguments $(B,\Omega)$ and returns the covariance matrix.
Moreover, we use $\Theta$ to denote the corresponding precision matrix (inverse of the covariance matrix):
\begin{align}\label{eq:Theta}
    \Theta=\Theta_f(B,\Omega)  \coloneq (I - B)\Omega^{-1}(I - B)^\top.
\end{align}
Let $\rmX = (\bfx_1,\ldots,\bfx_n)^\top$ be $n$ i.i.d. samples of $X$. 
Then, let the sample covariance matrix be $\widehat{\Sigma} = \frac{1}{n}\sum_{i=1}^n \bfx_i \bfx_i^\top$. 
The sample NLL function is given by:
\begin{align*}
    \ell_n(B,\Omega) = -&\frac{1}{n}\log \prod_{i=1}^n P(\bfx_i;B,\Omega)
     =\frac{1}{2}\log \det \Omega - \log \det (I-B)+\frac{1}{2}\Tr(\widehat{\Sigma}\Theta(B,\Omega))+ \const
\end{align*}
{The corresponding population NLL function is
\begin{align*}
    \ell(B,\Omega) = -\E_X\log P(X;B,\Omega) = \frac{1}{2}\log \det \Omega - \log \det (I-B)+\frac{1}{2}\Tr({\Sigma}\Theta(B,\Omega))+ \const
\end{align*}}
The full derivation can be found in Appendix \ref{subsec:logll}. 
Here, it is important to note that the distribution of $X$ is fully determined by either the precision matrix $\Theta$ or the covariance matrix $\Sigma$. 

\subsection{Equivalence and nonidentifiability}\label{subsec:equ_nonident}

Our goal is to identify $(B, \Omega)$: Unfortunately, the model is inherently nonidentifiable in terms of both parameter and structure.
This means that multiple pairs $(B, \Omega)$ for model \eqref{eq:linear_uniden} can induce the same data distribution $P(X)$ given in \eqref{eq:distribution_X}, thus resulting also in the same precision matrix $\Theta$. 
To address this, we define the equivalence class $\gE(\Theta)$ as the set of all pairs $(B, \Omega)$ such that $\Theta_f(B, \Omega) = \Theta$. 

\begin{align}\label{eq:equivalent_class}
    \gE(\Theta)\coloneq \{(B,\Omega):\Theta_f(B,\Omega) = \Theta\}.
\end{align}

{It is worth noting that the size of $\gE(\Theta)$ is finite and at most $p!$, which corresponds to the number of permutations for $p$ variables \citep{aragam2015concave}.} For more comprehensive details on this class, see Appendix \ref{subsec:EC}.
This ambiguity naturally leads to the question: which pair $(B,\Omega)$ should we estimate? Since any pair would be indistinguishable based only on observational data, a natural objective is to estimate the ``simplest'' DAG, for example, a DAG that induces the precision matrix $\Theta$ with the smallest number of edges. 
In other words, our goal is to estimate the matrix $B$ that has the minimal number of nonzero entries in the equivalence class. 
Let $s_B = |\{(i,j):B_{ij}\ne 0\}|$.
\begin{definition}[Minimality]\label{def:minimal_in_EC}
   $(B,\Omega)$ is called a minimal-edge I-map\footnote{This generalizes the classical definition for DAGs \citep[e.g.][]{van2013ell_}  to refer to the entire model with the distribution and graph encoded by the matrix $B$ and the error variance $\Omega$.} in the equivalence class $\gE(\Theta)$ if $s_B\leq s_{\widetilde{B}},\forall (\widetilde{B},\widetilde{\Omega})\in \gE(\Theta)$. The set of all minimal-edge I-maps in the equivalence class $\gE(\Theta)$ is referred to as the minimal equivalence class $\gEmin (\Theta)$:
    \begin{align*}
        \gEmin (\Theta) = \{(B,\Omega): (B,\Omega) \text{ is minimal-edge I-map}, (B,\Omega)\in \gE(\Theta)\}.
    \end{align*}
\end{definition}
In the sequel, for brevity, we will often refer to such models as ``minimal models''.

Unlike faithfulness, which may not always hold, the minimal equivalence class $\gE_{\min}(\Theta)$ is always well-defined. {Moreover, as detailed in Lemma \ref{lemma:SMR} in the Appendix, Definition \ref{def:minimal_in_EC} is  closely related to the SMR assumption \citep{raskutti2018learning}: Under the SMR assumption (and hence also faithfulness) for $G$, we have $\gM(G) =\gEmin(\Theta)$, i.e., $\gEmin(\Theta)$ is the Markov equivalence class of $G$.}
However, there could be multiple pairs $(B,\Omega)$ within $\gE_{\min}(\Theta)$. Nevertheless, our goal is to recover one  element from $\gE_{\min}(\Theta)$. The elements in $\gE_{\min}(\Theta)$ not only represent the ``simplest'' DAG model for $X$ in terms of edge count, but also bear a deep connection to classical notions such as Markov equivalence. For example, under faithfulness, all these elements describe the same independence statements. 
\begin{lemma}\label{lemma:MEC}
Let $X$ follow model \eqref{eq:linear_uniden} with $(B^0,\Omega^0)$ and $\Theta^0 = \Theta_f(B^0,\Omega^0)$.
Assume that $P(X)$ is faithful to $G^0\coloneq G(B^0)$. Then $\gM(G^0) =\gE_{\min}(\Theta^0).$
\end{lemma}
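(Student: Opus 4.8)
The plan is to prove the two inclusions $\gM(G^0) \subseteq \gE_{\min}(\Theta^0)$ and $\gE_{\min}(\Theta^0) \subseteq \gM(G^0)$ separately, exploiting the bridge between $d$-separation (a purely graphical notion) and conditional independence in Gaussian SEMs (where CI relations are exactly encoded by zero patterns in $\Theta$ and its submatrices). The key classical facts I would invoke are: (i) every DAG $G$ with associated SEM $(B,\Omega)$ is Markov to $P$ whenever $\Theta_f(B,\Omega) = \Theta$ (this is just the factorization property of linear Gaussian SEMs); (ii) a faithful DAG is a sparsest Markov representation, i.e. Theorem~2.4 of \cite{raskutti2018learning}, which is already quoted in the preliminaries; and (iii) two DAGs are Markov equivalent iff they have the same skeleton and same v-structures (Verma--Pearl), and Markov equivalent DAGs have the same number of edges.

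For the inclusion $\gM(G^0) \subseteq \gE_{\min}(\Theta^0)$: take any DAG $\widetilde G \in \gM(G^0)$. Since $\widetilde G$ is Markov equivalent to $G^0$, it has the same number of edges as $G^0$, and $P(X)$ is Markov to $\widetilde G$ as well (Markov equivalence preserves the set of Markov distributions). Because the linear Gaussian family is ``rich enough,'' there is a parametrization $(\widetilde B, \widetilde\Omega)$ with $G(\widetilde B) = \widetilde G$ realizing $\Theta^0$ — concretely, factor $\Theta^0$ according to the topological order of $\widetilde G$ via the Cholesky-type decomposition, which is exactly the content of the finiteness/structure discussion of $\gE(\Theta)$ in Appendix~\ref{subsec:EC}; so $(\widetilde B, \widetilde\Omega) \in \gE(\Theta^0)$. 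It remains to show it is minimal: since $P(X)$ is faithful to $G^0$, Theorem~2.4 of \cite{raskutti2018learning} says $G^0$ is a sparsest representation, so no DAG Markov to $P(X)$ has fewer edges than $G^0$; in particular $s_{\widetilde B} = s_{B^0} \le s_{B'}$ for every $(B',\Omega') \in \gE(\Theta^0)$ (each such $B'$ gives a DAG Markov to $P(X)$ by fact (i)). Hence $(\widetilde B,\widetilde\Omega) \in \gE_{\min}(\Theta^0)$.

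For the reverse inclusion $\gE_{\min}(\Theta^0) \subseteq \gM(G^0)$: take $(\widetilde B, \widetilde\Omega) \in \gE_{\min}(\Theta^0)$ and set $\widetilde G = G(\widetilde B)$. By fact (i), $P(X)$ is Markov to $\widetilde G$. By minimality, $s_{\widetilde B} = s_{B^0}$ (since $B^0 \in \gE(\Theta^0)$ realizes the minimum by the previous paragraph's argument). Now I want to conclude $\widetilde G$ is Markov equivalent to $G^0$. The cleanest route: invoke the uniqueness-up-to-MEC part of the SMR story — a sparsest Markov representation is unique up to Markov equivalence — which is quoted verbatim in the preliminaries (``such sparsest representation is unique up to Markov equivalence class''), together with Lemma~\ref{lemma:SMR} from the Appendix identifying minimal I-maps with sparsest representations under faithfulness. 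Since $\widetilde G$ is Markov to $P(X)$ and has the minimum possible edge count, it is a sparsest representation, hence Markov equivalent to $G^0$, i.e. $\widetilde G \in \gM(G^0)$.

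The main obstacle is the reverse inclusion, specifically justifying the leap from ``Markov to $P(X)$ with minimal edge count'' to ``Markov equivalent to $G^0$.'' A minimum-edge Markovian DAG need not a priori be a \emph{sparsest Markov representation} in the strict sense of \cite{raskutti2018learning} (which asks for strictly fewer edges than any \emph{non-equivalent} Markovian DAG), and one has to rule out the possibility of two non-equivalent DAGs both attaining the minimum; this is exactly where faithfulness is essential and where I would lean on Theorem~2.4 of \cite{raskutti2018learning} plus the stated uniqueness result, cross-checked against Lemma~\ref{lemma:SMR}. A secondary technical point worth spelling out is the realizability claim (that every DAG Markov equivalent to $G^0$ admits a linear-Gaussian parametrization realizing $\Theta^0$, hence lies in $\gE(\Theta^0)$), which follows from the ordered Cholesky/LDL factorization of $\Theta^0$ along any topological order of the target DAG; this should be stated explicitly and attributed to the $\gE(\Theta)$ analysis in Appendix~\ref{subsec:EC}.
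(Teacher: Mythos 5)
Your proposal is correct and follows essentially the same route as the paper: faithfulness implies restricted faithfulness, hence the SMR property of \cite{raskutti2018learning}, and then every minimal-edge element of $\gE(\Theta^0)$ is a Markov DAG with at most $s_{B^0}$ edges and so must lie in $\gM(G^0)$. You are in fact more explicit than the paper on the inclusion $\gM(G^0)\subseteq\gE_{\min}(\Theta^0)$ (the realizability of each Markov-equivalent DAG inside $\gE(\Theta^0)$ via the ordered Cholesky factorization of Appendix~\ref{subsec:EC}, using faithfulness to ensure the minimal I-map along a topological order of $\widetilde G$ is $\widetilde G$ itself), a step the paper's proof leaves implicit.
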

Recall our convention that this means that $\gM(G^0) =\gG(\gE_{\min}(\Theta^0))$, i.e. the DAG structures contained in $\gE_{\min}(\Theta^0)$ coincide with $\gM(G^0)$.
Thus, under the faithfulness assumption, recovering $\gE_{\min}(\Theta^0)$ is the same as recovering the MEC, which is the usual goal in causal discovery. Moreover, we emphasize that these apply generally: For non-Gaussian $X$ following the model specified in \eqref{eq:sem}, the same conclusion can be made; see Lemma \ref{lemma:general_MEC} in the Appendix.
{Finally, we note that the commonly used LS loss does \emph{not} have the same minimizers as the log-likelihood when the noise variances are different; see Appendix~\ref{subsec:counterexample}.}

\subsection{Regularization}

In order to distinguish elements in $\gE(\Theta)$ from the minimal elements in $\gEmin(\Theta)$, we need to somehow account for the number of edges when evaluating the score function. The common approach to this is to use BIC, or equivalently the $\ell_0$ penalty.
Although both approaches effectively penalize the number of nonzero entries in $B$, their non-differentiability makes them unsuitable for \emph{differentiable} structure learning. The $\ell_1$ penalty, while amenable to differentiable approaches,\footnote{Although $\ell_1$ is nonsmooth, standard smoothing techniques can be applied to $\ell_1$ regularizers as in  \cite{zheng2018dags,bello2022dagma}.} is not effective in precisely counting the number of edges, and also biased in parameter estimation\footnote{See Appendix \ref{subsec:biased} for examples.}. To mitigate these shortcomings 
 
alternatives such as the smoothly clipped absolute deviation (SCAD) penalty \citep{fan2001variable} and the minimax concave penalty (MCP) \citep{zhang2010nearly} have been proposed. We choose to use a reparametrized version of MCP, termed quasi-MCP, defined as follows:
\begin{align}\label{eq:quasi_MCP}
   \text{quasi-MCP:}\qquad  \quasimcp(t) = \lambda[ (|t| - \frac{t^2}{2\delta} )\mathbbm{1}(|t|<\delta)+\frac{\delta}{2}\mathbbm{1}(|t|>\delta) ]\qquad 
\end{align}
\begin{figure}[t]
    \centering
    \includegraphics[width=0.7\linewidth]{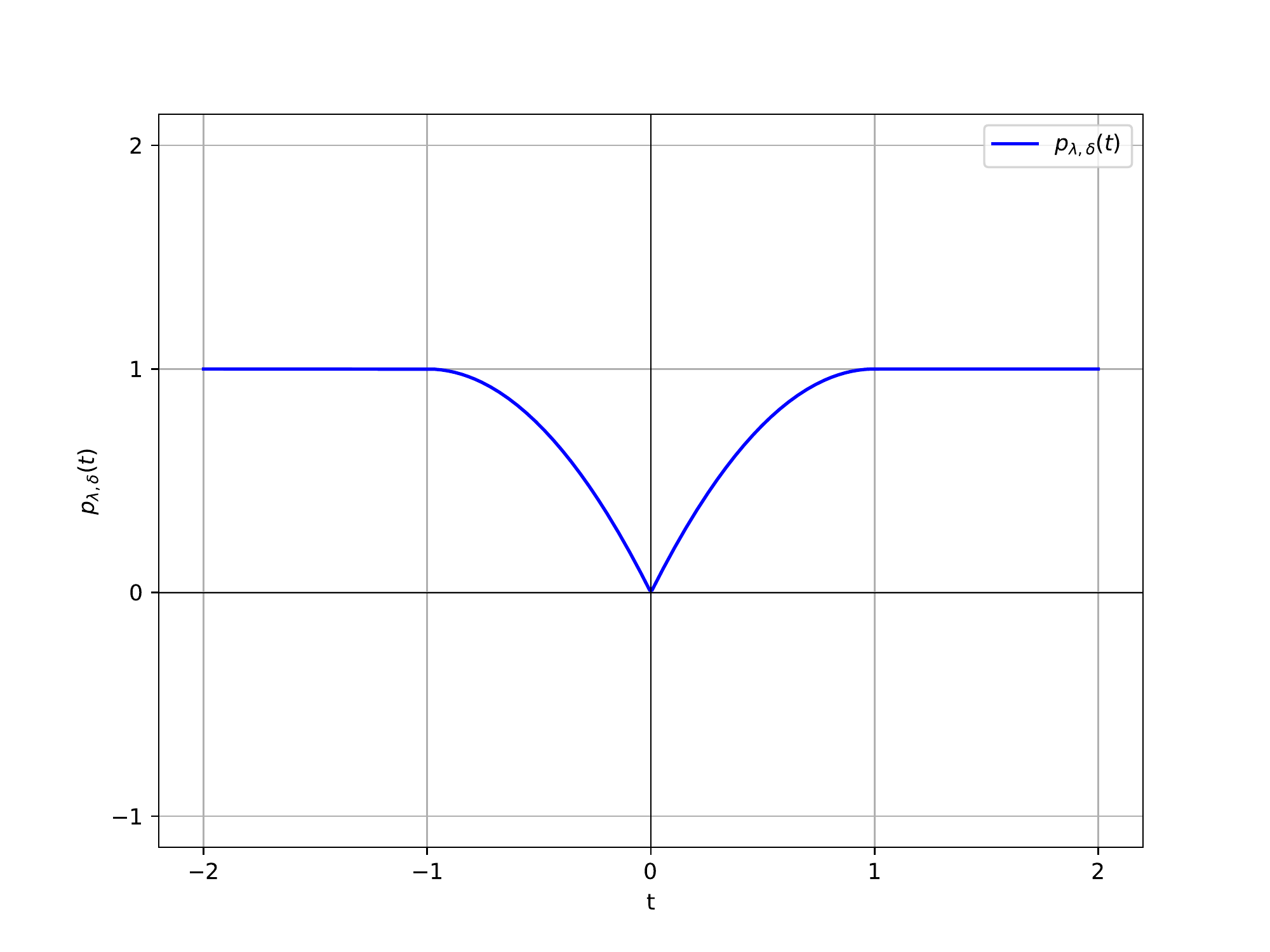}
    \caption{The plot of $p_{\lambda,\delta}(t)$ with 
 $\lambda = 2, \delta = 1$}
\end{figure}
Here, $\mathbbm{1}(\cdot)$ is the indicator function;  Similar to MCP, quasi-MCP is a symmetric function that takes on a quadratic form between $[0,\delta]$ and remains constant for values greater than $\delta$. The function is smooth, and for values $|t|>\delta$, it approximates the behaviour of the $\ell_0$ penalty, thus serving to penalize the number of non-zero coefficients in $B$.

The score function in \eqref{eq:notears} can be naturally written as 
\begin{align}\label{eq:score}
    s(B,\Omega;\lambda,\delta,\rmX) = \ell_n(B,\Omega)+\quasimcp(B)
\end{align}
where $p_{\lambda,\delta}(B) = \sum_{i\ne j}p_{\lambda,\delta}(B_{ij})$. 
Then, the optimization problem can be written as 
\begin{align}\label{eq:opt_quasi_mcp}
    \min_{B,\Omega}  s(B,\Omega;\lambda,\delta,\rmX)\quad \subto \quad & h(B) = 0,\ \Omega> 0.
\end{align}

It is worth noting that for any $B$, the corresponding optimal $\Omega$ that minimizes $s(B,\Omega;\lambda,\delta,\rmX)$ can be easily be expressed in terms of $B$ as $\Omega_f(B)$ (see Appendix \ref{subsec:logll}). Therefore, we can always plug $\Omega_f(B)$ into \eqref{eq:opt_quasi_mcp} to profile out $\Omega$. 

\subsection{Provably recovering minimal models}

Solving problem \eqref{eq:opt_quasi_mcp} requires minimizing $\ell_n(B,\Omega)$ and $p_{\lambda,\delta}$ simultaneously. To study the behaviour of these minimizers, let us define the set of global minimizers,
\begin{align}
\label{def:globalmin}
    \mathcal{O}_{n,\lambda,\delta} = \{(B^*,\Omega^*): (B^*,\Omega^*) \text{ is a minimizer of \eqref{eq:opt_quasi_mcp}} \}.
\end{align}

Ideally, we would like $\mathcal{O}_{n,\lambda,\delta} = \gE_{\min}(\Theta^0)$, however,
it is unclear whether there exist values of $\lambda$ and $\delta$ such that any optimal solution $(B^*, \Omega^*)$ lies within $\gE_{\min}(\Theta)$. 
The following theorem provides an affirmative answer to this question.
In the sequel, we say that a property $S(x)$ holds for all sufficiently small $x>0$ if  there is some fixed $\epsilon>0$ such that for every $x \leq \epsilon$, the property $S(x)$ holds.

\begin{theorem}\label{thm:main}
    Let $X$ follow model \eqref{eq:linear_uniden} with $(B^0,\Omega^0)$ and $\Theta^0 =\Theta_f(B^0,\Omega^0)$. 
    Let $\rmX$ be $n$ i.i.d. samples from $P(X)$, and $\mathcal{O}_{n,\lambda,\delta}$ be defined as in \eqref{def:globalmin}. 
    Then, for all sufficiently small $\lambda,\delta>0$ (independent of $n$),  it holds that  $P(\mathcal{O}_{n,\lambda,\delta} = \gE_{\min}(\Theta^0))\rightarrow 1$ as  $n\rightarrow \infty$.
\end{theorem}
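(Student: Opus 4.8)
The plan is to split the statement into two parts matching the finite-sample/population dichotomy, proving first a population-level statement and then transferring it to the sample level via uniform convergence. For the population version, I would define $\mathcal{O}_{\lambda,\delta} = \{(B^*,\Omega^*): (B^*,\Omega^*) \text{ minimizes } \ell(B,\Omega) + p_{\lambda,\delta}(B) \text{ subject to } h(B)=0,\ \Omega>0\}$ and show that for all sufficiently small $\lambda,\delta > 0$ we have $\mathcal{O}_{\lambda,\delta} = \gE_{\min}(\Theta^0)$ exactly. The key observation is that the population NLL $\ell(B,\Omega)$ is minimized precisely on $\gE(\Theta^0)$ (every pair inducing the true precision matrix is a global minimizer of the likelihood, with common minimal value), so the optimization effectively reduces to $\min_{(B,\Omega)\in\gE(\Theta^0)} p_{\lambda,\delta}(B)$ \emph{plus} a correction term accounting for how $\ell$ behaves just outside $\gE(\Theta^0)$. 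Since $\gE(\Theta^0)$ is finite (at most $p!$ elements, as noted after \eqref{eq:equivalent_class}), the penalty $p_{\lambda,\delta}$ restricted to this finite set is minimized exactly at those $B$ with the fewest nonzeros, i.e. $\gEmin(\Theta^0)$ — provided $\delta$ is small enough that for every nonzero entry $B_{ij}$ appearing in any element of $\gE(\Theta^0)$ we have $|B_{ij}| > \delta$, so that $p_{\lambda,\delta}(B) = \lambda \tfrac{\delta}{2} s_B$ exactly counts edges.

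The subtle point in the population argument is ruling out minimizers $(B^*,\Omega^*)\notin\gE(\Theta^0)$: such a pair pays an NLL excess $\ell(B^*,\Omega^*) - \ell(B^0,\Omega^0) > 0$, and we must ensure this excess cannot be offset by a penalty saving. Here I would argue that any $B$ with strictly more than $\max_{\gE(\Theta^0)} s_B$ potential nonzeros cannot help (penalty only grows), while any $B$ with fewer nonzeros than $\min_{\gE(\Theta^0)} s_B = s^0$ that is \emph{not} in $\gE(\Theta^0)$ incurs a strictly positive NLL gap; since the penalty savings relative to a minimal element is at most $\lambda\tfrac{\delta}{2}\cdot p^2$, choosing $\lambda\delta$ small compared to the (fixed, positive) minimal NLL gap over the relevant region eliminates these. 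One must be slightly careful because $B$ ranges over a noncompact set, but the NLL $\to\infty$ as $B$ approaches the boundary of the acyclic region or as $\Omega$ degenerates, so the effective search is over a compact set and infima of NLL gaps over closed subsets avoiding $\gE(\Theta^0)$ are attained and strictly positive. I expect this compactness/coercivity bookkeeping plus the quantitative comparison of the NLL gap against the penalty budget to be the main obstacle — it is where the phrase ``sufficiently small $\lambda,\delta$ independent of $n$'' gets its teeth.

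For the transfer to finite samples, I would use that $\widehat\Sigma \to \Sigma^0$ almost surely (hence $\ell_n \to \ell$ uniformly on compact sets, since $\ell_n(B,\Omega) - \ell(B,\Omega) = \tfrac12\Tr((\widehat\Sigma - \Sigma^0)\Theta(B,\Omega))$ is linear in $\widehat\Sigma - \Sigma^0$ with coefficient bounded uniformly on compacts). Combined with the coercivity argument confining all near-optimal solutions to a fixed compact set $K$ with high probability, standard epigraphical/M-estimation consistency arguments give that, with probability $\to 1$, every minimizer of the sample objective lies in an arbitrarily small neighborhood of $\mathcal{O}_{\lambda,\delta} = \gEmin(\Theta^0)$. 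The final step is to upgrade ``in a neighborhood of'' to ``equal to'': since $\gE(\Theta^0)$ is a finite set of isolated points and the sample minimizers must satisfy the stationarity/KKT conditions, a local uniqueness or local rigidity argument (the equivalence class points are locally the only ones achieving the likelihood minimum up to $o(1)$, and the edge-counting penalty strictly orders them) pins down the minimizers to be exactly the elements of $\gEmin(\Theta^0)$. I would invoke Lemma~\ref{lemma:MEC} only for interpretation, not for the proof itself. The cleanest way to organize this is: (i) population lemma $\mathcal{O}_{\lambda,\delta} = \gEmin(\Theta^0)$; (ii) uniform convergence + coercivity confining minimizers to compact $K$; (iii) consistency giving convergence of minimizer sets; (iv) finiteness of $\gE(\Theta^0)$ closing the gap.
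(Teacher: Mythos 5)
Your population-level argument is essentially the paper's own proof. The paper likewise uses the finiteness of $\gE(\Theta^0)$ to pick $\delta_0$ below the smallest nonzero entry magnitude $\tau$ over the whole equivalence class, so that $\quasimcp$ reduces to $\tfrac{\lambda\delta}{2}s_B$ on $\gE(\Theta^0)$ and strictly orders its members by edge count; it then splits the remaining competitors by penalty value and, for those with penalty smaller than $\quasimcp(B^0)$, invokes a positive-gap lemma (Lemma~\ref{lemma:positive_gap}, proved by exactly the coercivity-of-$\ell$ / compact-sublevel-set bookkeeping you anticipate) to choose $\lambda$ below the ratio of the NLL gap to the maximal penalty saving. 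One point you should make explicit rather than leave implicit: the dangerous competitors are not those with fewer nonzeros than $s_{B^0}$ but those with smaller \emph{penalty value}, which includes dense matrices all of whose entries are tiny. The reason the NLL gap over this set is bounded away from zero is that $\delta<\delta_0$ forces every such matrix to lie at distance at least $\tau-\delta>0$ from every member of $\gE(\Theta^0)$: if $B$ were within $\tau-\delta$ of some $\bar B$ in the class, every entry of $B$ sitting over a nonzero entry of $\bar B$ would exceed $\delta$ and the penalty could not be smaller. This is the one step where your phrase ``closed subsets avoiding $\gE(\Theta^0)$'' needs an actual argument, and it is exactly the $\mathrm{dist}(\bar B,A_3)>0$ computation the paper supplies.

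On the finite-sample half: the paper's proof simply declares that ``it suffices to consider the population case'' and stops, so your attempt to carry out the transfer goes beyond what is written; but your step (iv) — upgrading ``minimizers lie in a small neighborhood of $\gEmin(\Theta^0)$'' to ``$\mathcal{O}_{n,\lambda,\delta}=\gEmin(\Theta^0)$'' — has a genuine gap. For finite $n$ the sample NLL is minimized on $\gE(\widehat{\Theta})$, not $\gE(\Theta^0)$, so the minimizing parameters are almost surely \emph{not} elements of $\gEmin(\Theta^0)$: even when the support is recovered exactly, the surviving coefficients sit in the flat part of the quasi-MCP and equal the support-restricted sample MLE, an $O_P(n^{-1/2})$ perturbation of the population values. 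Exact equality of parameter sets is therefore unattainable; at best one can match the induced graphs, $\gG(\mathcal{O}_{n,\lambda,\delta})=\gG(\gEmin(\Theta^0))$, which is how the paper phrases its finite-sample claim in Theorem~\ref{thm:sample_EC}. Even at the level of supports, ``local rigidity'' does not suffice: a generic $\widehat{\Theta}$ has no zeros in any of its Cholesky factorizations, so every element of $\gE(\widehat{\Theta})$ is a complete DAG, and what is actually needed is an oracle-type argument that the kink of $\quasimcp$ at the origin zeros out the $O_P(n^{-1/2})$ spurious entries while $\lambda,\delta$ remain below the population thresholds $\lambda_0,\delta_0$. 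That quantitative step is the real content of the probabilistic statement and is missing from your sketch (and, to be fair, from the paper's proof as well).
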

In other words, we can always guarantee that $\mathcal{O}_{n,\lambda,\delta} = \gE_{\min}(\Theta^0)$ by taking $\lambda,\delta$ sufficiently small, which is easily accomplished in practice. 
In the following, we use the superscript $0$ to denote ground truth parameters. Additionally, we can assume that $B^0$ always belongs to $\gEmin(\Theta^0)$, ensuring that our reference to the ground truth aligns with the simplest or minimal representation within the equivalence class.
Moreover, by Lemma~\ref{lemma:MEC}, under the faithfulness assumption, Theorem \ref{thm:main} can be interpreted as recovering the Markov equivalence class $\gM(G^0)$:
\begin{theorem}\label{thm:MEC}
    Consider the setup in Theorem \ref{thm:main} and assume additionally that $P(X)$ is faithful to $G^0\coloneq G(B^0)$. 
    Then, for all sufficiently small $\lambda,\delta>0$ (independent of $n$), it holds that 
    $P(\mathcal{O}_{n,\lambda,\delta} = \gM(G^0) )\rightarrow 1$ 
    as $n\rightarrow \infty$.
\end{theorem}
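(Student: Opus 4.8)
The plan is to obtain Theorem~\ref{thm:MEC} as an immediate corollary of Theorem~\ref{thm:main} and Lemma~\ref{lemma:MEC}, being careful about the convention (stated just after Lemma~\ref{lemma:MEC}) that an equality between a set of parameter pairs $(B,\Omega)$ and a set of DAGs is to be read after applying the map $\gG(\cdot)$ that extracts binary adjacency matrices.

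First I would fix $\lambda,\delta>0$ small enough that the conclusion of Theorem~\ref{thm:main} holds. This choice is independent of $n$ and makes no reference to faithfulness, so exactly the same range of $(\lambda,\delta)$ is available here. Let $E_n$ be the event $\{\mathcal{O}_{n,\lambda,\delta}=\gEmin(\Theta^0)\}$, understood as an equality of sets of pairs $(B,\Omega)$. By Theorem~\ref{thm:main}, $P(E_n)\to 1$ as $n\to\infty$. Next, on $E_n$, applying $\gG(\cdot)$ to both sides gives $\gG(\mathcal{O}_{n,\lambda,\delta})=\gG(\gEmin(\Theta^0))$. Since $P(X)$ is assumed faithful to $G^0\coloneq G(B^0)$, Lemma~\ref{lemma:MEC} yields $\gM(G^0)=\gG(\gEmin(\Theta^0))$; chaining the two equalities gives $\gG(\mathcal{O}_{n,\lambda,\delta})=\gM(G^0)$ on $E_n$. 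Hence the event $\{\mathcal{O}_{n,\lambda,\delta}=\gM(G^0)\}$ (read in the stated convention) contains $E_n$ and therefore has probability tending to $1$, which is the claim.

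There is essentially no obstacle beyond this bookkeeping: the substantive content has already been established, namely the population-to-sample concentration argument and the characterization of the minimizers of the regularized NLL in Theorem~\ref{thm:main}, and the purely graphical fact (via Theorem~2.4 of \citet{raskutti2018learning}) that faithfulness forces the sparsest Markov representation of $P(X)$ to coincide with the Markov equivalence class of $G^0$, which is Lemma~\ref{lemma:MEC}. The only point worth a sentence of care is the type mismatch: $\mathcal{O}_{n,\lambda,\delta}$ and $\gEmin(\Theta^0)$ are sets of parameter pairs while $\gM(G^0)$ is a set of DAGs, so all identities must be interpreted through $\gG(\cdot)$; it is also worth noting that $G^0\in\gM(G^0)=\gG(\gEmin(\Theta^0))$ is consistent with the running assumption $B^0\in\gEmin(\Theta^0)$, which under faithfulness is precisely guaranteed by the sparsest-representation theorem.
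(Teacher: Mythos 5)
Your argument is exactly the paper's proof: invoke Theorem~\ref{thm:main} to get $\mathcal{O}_{n,\lambda,\delta}=\gEmin(\Theta^0)$ with probability tending to one, then apply Lemma~\ref{lemma:MEC} under faithfulness to identify $\gG(\gEmin(\Theta^0))$ with $\gM(G^0)$, reading the set equality through the map $\gG(\cdot)$ as the paper's convention prescribes. Your extra care about the type mismatch between sets of parameter pairs and sets of DAGs is a welcome clarification but does not change the substance; the proof is correct.
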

Theorem \ref{thm:MEC} indicates with properly chosen hyperparameters, the optimal solution from optimization \eqref{eq:opt_quasi_mcp} will produce a graph that adheres to the same independence statements as $G^0$. This implies that the structure learned through the optimization process accurately reflect the underlying causal or conditional independence structure of underlying data generating process.

\begin{remark}\label{remark:mcp}
   Although we use quasi-MCP (mainly for its simplicity), it turns out MCP or SCAD can also be used. 
   See Corollary \ref{cor:mcpscad} in Appendix \ref{sec:addtional_thm_lemma} for details. 
\end{remark}

\subsection{Scale invariance and standardization}
\label{sec:scale}

\citet{reisach2021beware} demonstrate that many differentiable DAG learning algorithms \citep{zheng2018dags,ng2020role,bello2022dagma} exploit the tendency of marginal variances to increase along the causal order in generic additive noise models. As a result, these methods may not be \emph{scale-invariant}, i.e. re-scaling the data (and in particular, standardizing it) can drastically change the structure.
Here we show that by using a different score---in this case the log-likelihood---fixes this and results in (provable) scale invariance. Thus, the choice of score function is crucial if certain properties such as scale invariance are desired.

\begin{lemma}\label{lemma:scale_invariant}
    Let $X\sim \N(0,\Sigma)$, suppose $\Sigma$ is a positive definite covariance matrix and let $\Theta\coloneq \Sigma^{-1}$, suppose $D$ is a diagonal matrix with positive diagonal entries. Then $\mathcal{G}(\gE(\Theta)) =  \mathcal{G}(\gE(D\Theta D)).$
\end{lemma}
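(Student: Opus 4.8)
The plan is to produce an explicit, structure-preserving bijection between $\gE(\Theta)$ and $\gE(D\Theta D)$ via conjugation by $D$. Recall that a pair $(B,\Omega)$ lies in $\gE(\Theta)$ precisely when $G(B)$ is a DAG, $\Omega$ is diagonal with positive entries, and $(I-B)\Omega^{-1}(I-B)^\top = \Theta$ (using the conventions of Section~\ref{sec:gauss}). Given such a pair, I would set $\tilde B \coloneq D B D^{-1}$ and $\tilde\Omega \coloneq D^{-1}\Omega D^{-1}$. The first step is to check that this stays in the right space: since $\tilde B_{ij} = D_{ii}B_{ij}D_{jj}^{-1}$ and $D$ has strictly positive diagonal, we have $\tilde B_{ij}\neq 0 \iff B_{ij}\neq 0$; in particular the diagonal of $\tilde B$ still vanishes, so $G(\tilde B) = G(B)$ and hence $G(\tilde B)$ is a DAG, while $\tilde\Omega$ is again diagonal and positive definite.

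The second step is the elementary identity $D(I-B) = (I - DBD^{-1})D = (I-\tilde B)D$, valid because $D$ is diagonal. Plugging this in,
\[
(I-\tilde B)\,\tilde\Omega^{-1}\,(I-\tilde B)^\top = (I-\tilde B)\,D\Omega^{-1}D\,(I-\tilde B)^\top = D(I-B)\Omega^{-1}(I-B)^\top D = D\Theta D,
\]
so $(\tilde B,\tilde\Omega)\in\gE(D\Theta D)$ with $G(\tilde B)=G(B)$. Ranging over all $(B,\Omega)\in\gE(\Theta)$ gives $\mathcal{G}(\gE(\Theta))\subseteq\mathcal{G}(\gE(D\Theta D))$.

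The third step is the reverse inclusion, which follows by symmetry: $D^{-1}$ is again diagonal with positive entries and $D^{-1}(D\Theta D)D^{-1}=\Theta$, so applying the same construction to the matrix $D\Theta D$ and the scaling $D^{-1}$ yields $\mathcal{G}(\gE(D\Theta D))\subseteq\mathcal{G}(\gE(\Theta))$. Combining the two inclusions gives $\mathcal{G}(\gE(\Theta))=\mathcal{G}(\gE(D\Theta D))$.

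I do not expect a genuine obstacle here; the only point that needs care is the implicit DAG/positivity constraint built into $\gE$, i.e. verifying that conjugating $B$ by a positive diagonal matrix neither moves the diagonal off zero nor creates or destroys edges (immediate from $\tilde B_{ij}=D_{ii}B_{ij}D_{jj}^{-1}$). It is worth noting in passing that this map is just the parameter-level shadow of rescaling the variables: the change of variables $X\mapsto D^{-1}X$ sends $\Sigma\mapsto D^{-1}\Sigma D^{-1}$, equivalently $\Theta\mapsto D\Theta D$, and turns the linear SEM $X=B^\top X+N$ into $X = \tilde B^\top X + \tilde N$ with $\tilde B = DBD^{-1}$ and $\tilde N = D^{-1}N$, so the graph is manifestly unchanged; the computation above merely records this at the level of the $(B,\Omega)$ parametrization.
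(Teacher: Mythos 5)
Your proof is correct, and it takes a genuinely different route from the paper's. The paper proves the lemma by invoking the permutation/Cholesky characterization of $\gE(\Theta)$ (Lemma~\ref{lemma:EC}): every element is some $(\widetilde{B}(\pi),\widetilde{\Omega}(\pi))$ recoverable by population least-squares regressions consistent with a topological order $\pi$, and the proof then computes how the regression coefficients transform under $\Sigma\mapsto D^{-1}\Sigma D^{-1}$, namely $\bar{\beta}_{Sj}=D_{SS}\beta_{Sj}D_{jj}^{-1}$, which preserves supports. Your argument instead exhibits the support-preserving bijection $(B,\Omega)\mapsto(DBD^{-1},D^{-1}\Omega D^{-1})$ directly at the level of the parametrization, using only the identity $D(I-B)=(I-DBD^{-1})D$ and the definition $\Theta_f(B,\Omega)=(I-B)\Omega^{-1}(I-B)^\top$. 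The two are really the same map seen in different coordinates (your $B\mapsto DBD^{-1}$ rescales each column of regression coefficients exactly as the paper's $D_{SS}\beta_{Sj}D_{jj}^{-1}$ does), but your version is more self-contained: it needs neither the enumeration of $\gE(\Theta)$ by permutations nor the least-squares recovery fact, and it yields slightly more than the statement asks for, namely an explicit graph-preserving bijection between the two equivalence classes rather than just equality of the induced graph sets. You also correctly handle the one point that needs care, checking that conjugation by a positive diagonal matrix keeps the diagonal of $B$ at zero and neither creates nor destroys edges, and the reverse inclusion by applying the construction with $D^{-1}$ is clean.
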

It indicates that scaling the covariance matrix does not alter the structure of DAG. Put differently, scaling does not change the support for any $(B,\Omega)\in \gE(\Theta)$. 

Lemma \ref{lemma:scale_invariant} has appealing consequences for standardization. Given raw data $\rmX$, denote its standardized version by $\rmZ$ (explicit formulas can be found in Appendix \ref{subsec:standardization}). 
Ideally, structure learning algorithms should output the same structure regardless of whether $\rmX$ or $\rmZ$ is used as input. Consequently, according to Lemma \ref{lemma:scale_invariant}, re-scaling $X$ does not alter the structure of the DAG that is recovered from the optimization \eqref{eq:opt_quasi_mcp}. Thus, the solutions to \eqref{eq:opt_quasi_mcp} are scale-invariant.

\begin{theorem}\label{thm:sample_EC}
    Under the same setting as Theorem \ref{thm:main}. For any positive integer $n$, let
    \begin{align*}
        \mathcal{O}_{n,\lambda,\delta}(\rmX) = &\{(B^*,\Omega^*): (B^*,\Omega^*) \text{ is a minimizer of \eqref{eq:opt_quasi_mcp} with data }\rmX \},\\
        \mathcal{O}_{n,\lambda,\delta}(\rmZ) = &\{(B^*,\Omega^*): (B^*,\Omega^*) \text{ is a minimizer of \eqref{eq:opt_quasi_mcp} with data }\rmZ \},
    \end{align*}
    where $\rmZ$ is the standardized version of $\rmX$. Then, for all sufficiently small $\lambda,\delta\ge0$ and all $n$, we have
    $\mathcal{G}(\mathcal{O}_{n,\lambda,\delta}(\rmX)) =\mathcal{G}(\mathcal{O}_{n,\lambda,\delta}(\rmZ))$. 
    Moreover, for all sufficiently small $\lambda,\delta>0$ we have 
    $$P\left[\mathcal{G}(\mathcal{O}_{n,\lambda,\delta}(\rmX)) =\mathcal{G}(\mathcal{O}_{n,\lambda,\delta}(\rmZ)) = \mathcal{G}(\gEmin(\Theta_f(\Bt,\Omegat)))\right]\rightarrow 1\text{ as }n\rightarrow \infty.$$
\end{theorem}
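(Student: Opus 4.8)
The plan is to reduce the scale-invariance claim to the population-level geometric fact in Lemma~\ref{lemma:scale_invariant}, combined with the consistency result of Theorem~\ref{thm:main}. The key observation is that standardization acts on the data only through the sample covariance matrix: if $\widehat{\Sigma}$ is the sample covariance of $\rmX$, then the sample covariance of $\rmZ$ is $\widehat{D}^{-1}\widehat{\Sigma}\widehat{D}^{-1}$ where $\widehat{D} = \diag(\widehat{\Sigma})^{1/2}$ is the diagonal matrix of empirical standard deviations (with the exact formulas deferred to Appendix~\ref{subsec:standardization}). Since the NLL $\ell_n(B,\Omega)$ depends on the data only through $\widehat{\Sigma}$, and the quasi-MCP penalty does not depend on the data at all, the whole objective in \eqref{eq:opt_quasi_mcp} transforms in a controlled way under this diagonal rescaling.

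First I would establish the deterministic (finite-$n$, all-$n$) part: $\mathcal{G}(\mathcal{O}_{n,\lambda,\delta}(\rmX)) = \mathcal{G}(\mathcal{O}_{n,\lambda,\delta}(\rmZ))$. The natural route is to show there is a bijection between minimizers that preserves the binary structure $G(B)$. Given a candidate $(B,\Omega)$ for the $\rmX$-problem, one constructs a corresponding candidate $(B',\Omega')$ for the $\rmZ$-problem by an explicit reparametrization tied to $\widehat{D}$ — concretely, if $\Theta = \Theta_f(B,\Omega)$ is the precision matrix implied by $(B,\Omega)$, then the $\rmZ$-problem ``wants'' precision $\widehat{D}\Theta\widehat{D}$, and by the proof of Lemma~\ref{lemma:scale_invariant} there is a corresponding $(B',\Omega') \in \gE(\widehat{D}\Theta\widehat{D})$ with $G(B') = G(B)$. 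The content here is twofold: (i) checking that this map actually sends the $\rmX$-objective value to the $\rmZ$-objective value up to an additive constant independent of the structure (so optimality is preserved) — this requires care because the quasi-MCP penalty is \emph{not} scale-invariant, so one must verify that the rescaling Lemma~\ref{lemma:scale_invariant} can be applied at the level of \emph{minimal} models where the penalty contribution is pinned down by edge count alone; and (ii) confirming that the NLL terms $\tfrac12\log\det\Omega - \log\det(I-B) + \tfrac12\Tr(\widehat\Sigma\,\Theta_f(B,\Omega))$ transform additively under $B \mapsto B'$, $\widehat\Sigma \mapsto \widehat D^{-1}\widehat\Sigma\widehat D^{-1}$. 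I expect this to go through because $\Tr(\widehat\Sigma\,\Theta) = \Tr\big((\widehat D^{-1}\widehat\Sigma\widehat D^{-1})(\widehat D\Theta\widehat D)\big)$ exactly, and the $\log\det$ terms pick up only $\log\det\widehat D$-type constants.

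Second, for the asymptotic statement I would invoke Theorem~\ref{thm:main} twice. Applied to $\rmX$ (which is i.i.d.\ from $P(X) = \N(0,\Sigma_f(\Bt,\Omegat))$), it gives $P(\mathcal{O}_{n,\lambda,\delta}(\rmX) = \gEmin(\Theta_f(\Bt,\Omegat))) \to 1$ for all sufficiently small $\lambda,\delta>0$. For $\rmZ$, the standardized data are i.i.d.\ samples from $\N(0, D_0^{-1}\Sigma D_0^{-1})$ where $D_0 = \diag(\Sigma)^{1/2}$ is the \emph{population} standard-deviation matrix — or, more precisely, $\rmZ$ uses the empirical $\widehat D$, so one first argues $\widehat D \to D_0$ a.s.\ and a short perturbation/continuity argument transfers the consistency; alternatively, one reruns the proof of Theorem~\ref{thm:main} directly for the triangular-array version. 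Either way, $P(\mathcal{O}_{n,\lambda,\delta}(\rmZ) = \gEmin(\Theta_f(\widetilde B, \widetilde\Omega))) \to 1$ for some $(\widetilde B,\widetilde\Omega)$ with precision $D_0\Theta_f(\Bt,\Omegat)D_0$. Finally, Lemma~\ref{lemma:scale_invariant} with $D = D_0$ gives $\mathcal{G}(\gEmin(\Theta_f(\Bt,\Omegat))) = \mathcal{G}(\gEmin(D_0\Theta_f(\Bt,\Omegat)D_0))$ — one should double-check that Lemma~\ref{lemma:scale_invariant}, stated for $\gE$, also yields the identity for $\gEmin$, which follows since the bijection $(B,\Omega)\mapsto(B',\Omega')$ preserves $G(B)$ hence edge counts hence minimality. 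Combining the three displays yields the claimed chain of equalities on an event of probability tending to $1$.

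The main obstacle I anticipate is the interaction between the penalty and the rescaling in the finite-$n$ deterministic claim: $p_{\lambda,\delta}$ is genuinely \emph{not} scale-equivariant, so the ``objective transforms additively'' story cannot hold for arbitrary $(B,\Omega)$ — it can only hold once we know the relevant minimizers are \emph{minimal} models, where the penalty value is (for $\lambda,\delta$ small enough, by the same argument as in Theorem~\ref{thm:main}) determined by the edge count $s_B$, which is a scale-invariant quantity. So the deterministic part is not really ``for all $(B,\Omega)$'' but ``at optimality,'' and making that precise — i.e.\ first characterizing $\mathcal{O}_{n,\lambda,\delta}(\cdot)$ as living in minimal models for small $\lambda,\delta$, \emph{then} applying the structure-preserving bijection — is the delicate step. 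The rest ($\log\det$ bookkeeping, $\widehat D \to D_0$, invoking Theorem~\ref{thm:main}) is routine.
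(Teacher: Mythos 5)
Your proposal is correct and follows essentially the same route as the paper's proof: the paper likewise observes that the objective depends on the data only through $\widehat{\Sigma}$, applies the (population-level) argument of Theorem~\ref{thm:main} with $\widehat{\Theta}$ and $\widehat{D}\widehat{\Theta}\widehat{D}$ in place of $\Theta^0$ to identify $\mathcal{O}_{n,\lambda,\delta}(\rmX)=\gEmin(\widehat{\Theta})$ and $\mathcal{O}_{n,\lambda,\delta}(\rmZ)=\gEmin(\widehat{D}\widehat{\Theta}\widehat{D})$ for sufficiently small $\lambda,\delta$, and then invokes Lemma~\ref{lemma:scale_invariant} to equate the graph sets, passing to the limit via $\widehat{\Sigma}\to\Sigma$. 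Your ``delicate step'' (first pinning the minimizers to minimal models so the non-scale-equivariance of the penalty is moot) is exactly how the paper handles it, and your extra care about $\widehat{D}\to D_0$ for the $\rmZ$-side asymptotics is if anything more explicit than the paper's treatment.
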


Thus, \emph{even on finite samples}, the set of DAG structures $\gG(\gO_{n,\lambda,\delta}(\rmX))$ derived from the raw (unstandardized) data $\rmX$ will always be the same as $\gG(\gO_{n,\lambda,\delta}(\rmZ))$, which is derived from standardized data $\rmZ$.
As a result, standardizing Gaussian data does not affect the recovered DAG structure if the optimization problem \eqref{eq:opt_quasi_mcp} can be solved exactly. 

\begin{remark}\label{remark:invar}
   Theorem~\ref{thm:sample_EC} applies to \emph{global} optimization of the objective \eqref{eq:opt_quasi_mcp}. Of course, in practice, algorithms can get stuck in local optima, but the \emph{global} solutions (even for finite samples $n$) will always be scale invariant.
\end{remark}

\section{Nonconvex regularized log-likelihood for general models}
\label{sec:general_models}

The results in the previous section are not specific to Gaussian models, although this helps with interpretability in a familiar setting.
We now extend these results from linear Gaussian SEMs to more general SEMs. 
Here, we assume that $X$ follows model \eqref{eq:sem} and the induced distribution is denoted by $P(X;\psi^0,\xi^0)$.
Let us define the equivalence class $\gE(\psi^0,\xi^0)$,
\begin{align*}
     \gE(\psi^0,\xi^0)=\{(\psi,\xi):P(x;\psi,\xi) = P(x;\psi^0,\xi^0), \forall x\in \R^p\}.
\end{align*}
That is, $\gE(\psi^0,\xi^0)$ is a set of pairs $(\psi,\xi)$ that induce the same distribution $P(X;\psi^0,\xi^0)$. As a result, any pair $(\psi,\xi)$ within this equivalence class will be a minimizer of the NLL $\ell(\psi,\xi)$. Analogously to Definition \ref{def:minimal_in_EC}, we can also define the collection of minimal elements in the equivalence class $\gE(\psi^0, \xi^0)$.
\begin{definition}\label{def:minimal_in_EC_general}
 $({\psi},{\xi})$ is called a minimal-edge I-map in the equivalence class $\gE(\psi^0,\xi^0)$ if $s_{B(\psi)}\leq s_{B(\widetilde{\psi})},\forall (\widetilde{\psi},\widetilde{\xi})\in \gE(\psi^0,\xi^0)$. We further define
 \begin{align*}
    \gE_{\min}(\psi^0,\xi^0)=\{(\psi,\xi):({\psi},{\xi}) \text{ is minimal-edge I-map, } ({\psi},{\xi})\in \gE(\psi^0,\xi^0) \}.
\end{align*}
\end{definition}
Here, it is crucial that our concept of minimality concerns $s_{B(\psi)}$, which is the number of nonzero entries in the weighted adjacency matrix $B(\psi)$,\footnote{Recall that $B(\psi)$, represents the induced weighted adjacency matrix of the DAG implied by the parameterization $\psi$ of the functions $f$, see \eqref{eq:B}.} rather than the number of nonzero entries in the parameter $\psi$ itself. Therefore, $s_{B(\psi)}$ essentially counts the number of edges in the adjacency matrix.
\begin{assumption}\label{assumption:finite}
     \begin{enumerate*}[label=(\arabic*)]
         \item\label{assumption:finite:1} $|\gE(\psi^0,\xi^0)|$ is finite. 
         \item\label{assumption:finite:2} $B(\psi)$ is L-Lipschitz w.r.t. $\psi$, i.e. $ \frac{\|B(\psi_1)-B(\psi_2)\|_2}{\|\psi_1 - \psi_2\|_2} \leq L$.
     \end{enumerate*}
\end{assumption}

\begin{assumption}\label{assumption:bounded_leve_set}
For any $\alpha$ such that $\ell(\psi^0,\xi^0)<\alpha$, the level set $\{(\psi,\xi): \ell (\psi,\xi)\leq \alpha\}$ is bounded, where $\ell(\psi,\xi)$ is the expected NLL defined in \eqref{eq:logll}.
\end{assumption}
Assumption \ref{assumption:finite}\ref{assumption:finite:1} is relatively mild; it requires that the equivalence class contains only finitely many points. This assumption is satisfied by Gaussian models, generalized linear models with continuous output \citep{ye2024federated}, binary output \citep{zheng2020learning,bello2022dagma}, and most exponential families. 
It is also obviously satisfied by any identifiable model since $|\gE(\psi^0,\xi^0)|=1$.
Assumption \ref{assumption:finite}\ref{assumption:finite:2} is a mild continuity requirement on $B(\psi)$. Assumption \ref{assumption:bounded_leve_set} simply guarantees that the optimization problem has a minimizer, and is standard~\citep{boyd2004convex}. Without this type of assumption, score-based learning is not even well-defined.
\begin{remark}
   It is important to emphasize that if $\quasimcp$ is replaced by the $\ell_0$ penalty in \eqref{eq:opt_quasi_mcp} or \eqref{eq:general_opt_quasi_mcp}, then Assumptions \ref{assumption:finite} and \ref{assumption:bounded_leve_set} can be omitted, and all the results still hold. In this case, the theoretical justification would be significantly simplified. However, the use of the differentiable quasi-MCP, in contrast to the $\ell_0$ penalty, introduces new complications, necessitating some additional assumptions that are fundamentally different from existing results~\cite[e.g.][]{brouillard2020differentiable}. Specifically, Assumptions \ref{assumption:finite} and \ref{assumption:bounded_leve_set} are exactly what is required to make the problem suitable for gradient-based optimization. Moreover, Assumptions \ref{assumption:finite} and \ref{assumption:bounded_leve_set} can also be relaxed. For further discussion, see Appendix \ref{subsec:assumption}.
\end{remark}
Similar in spirit to Theorem \ref{thm:main}, we can show that by combining the NLL with quasi-MCP for appropriate $\lambda,\delta$, solving the following problem, we recover elements of $ \gE_{\min}(\psi^0,\xi^0)$:
\begin{align}\label{eq:general_opt_quasi_mcp}
\min_{\psi\in\Psi,\xi\in\Xi}  \ell_n(\psi,\xi)+p_{\lambda,\delta}(B(\psi))\quad  \subto\quad  h(B(\psi)) = 0,
\end{align}
where $p_{\lambda,\delta}(\cdot)$ is quasi-MCP defined in \eqref{eq:quasi_MCP}.
Next, define its set of global minimizers.
\begin{align*}
    \gO_{n,\lambda,\delta} = \{(\psi^*,\xi^*): (\psi^*,\xi^*)\text{ is minimizer of \eqref{eq:general_opt_quasi_mcp}}\}.
\end{align*}
\begin{theorem}\label{thm:general_main_theorem}
Let $X$ follow model \eqref{eq:sem} with parameters $(\psi^0,\xi^0)$ and let $\rmX$ be $n$ i.i.d. samples from $P(X;\psi^0,\xi^0)$. 
Under Assumptions \ref{assumption:finite}-\ref{assumption:bounded_leve_set}, for all sufficiently small $\lambda,\delta>0$ (independent of $n$), it holds that $P(\gO_{n,\lambda,\delta} = \gE_{\min}(\psi^0,\xi^0))\rightarrow 1$ as $n\rightarrow\infty$.
\end{theorem}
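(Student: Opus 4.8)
The plan is to reduce the finite-sample claim to a statement about the \emph{population} objective $\ell(\psi,\xi)+p_{\lambda,\delta}(B(\psi))$ and then transfer it via a uniform law of large numbers, mirroring the proof of Theorem~\ref{thm:main} but with Assumptions~\ref{assumption:finite}--\ref{assumption:bounded_leve_set} now playing the role that the explicit Gaussian formulas played there.

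\emph{Step 1 (population characterization).} By the information inequality, $\ell$ is minimized exactly on $\gE(\psi^0,\xi^0)$, with common value $\ell(\psi^0,\xi^0)$. Since $\gE(\psi^0,\xi^0)$ is finite (Assumption~\ref{assumption:finite}\ref{assumption:finite:1}), the finitely many matrices $B(\psi)$ it produces share a strictly positive smallest nonzero magnitude $2\delta_0$, so for every $\delta<\delta_0$ the quasi-MCP evaluates on $\gE(\psi^0,\xi^0)$ to exactly $(\lambda\delta/2)\,s_{B(\psi)}$; hence over $\gE(\psi^0,\xi^0)$ the penalized objective is minimized precisely on the minimal-edge I-maps $\gE_{\min}(\psi^0,\xi^0)$. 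To see that shrinking $\lambda$ cannot move a global minimizer out of $\gE(\psi^0,\xi^0)$, invoke Assumption~\ref{assumption:bounded_leve_set}: the set $\{\ell\le\ell(\psi^0,\xi^0)+1\}$ is bounded, hence its closure is compact, and by continuity of $\ell$ together with the fact that $\gE(\psi^0,\xi^0)$ is the \emph{exact} minimizer set, for each $r>0$ there is a gap $c(r)>0$ with $\ell\ge\ell(\psi^0,\xi^0)+c(r)$ on all feasible points at distance $\ge r$ from $\gE(\psi^0,\xi^0)$ (points outside the compact set are handled directly by the level-set bound). As $p_{\lambda,\delta}\le(\lambda\delta/2)\,p(p-1)$ everywhere, choosing $\lambda\delta<2c(r)/(p(p-1))$ confines every global minimizer to the distance-$r$ neighborhood. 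Shrinking $r$ and combining the $L$-Lipschitzness of $B(\cdot)$ (Assumption~\ref{assumption:finite}\ref{assumption:finite:2}) with the kink of $p_{\lambda,\delta}$ at $0$ --- switching on an entry of size $t$ raises $p_{\lambda,\delta}$ by at least $(\lambda/2)|t|$, while $\ell$ changes only by $O(t^2)$ near one of its minimizers, where its gradient vanishes --- rules out a minimizer sitting strictly between two distinct points of $\gE(\psi^0,\xi^0)$ or carrying a spurious small edge. Thus for all small $\lambda,\delta$ (smallness depending only on $\delta_0$, $L$, $p$, and the gaps $c(r)$), the population minimizer set is exactly $\gE_{\min}(\psi^0,\xi^0)$, and it is separated from every other feasible point by a positive objective gap $\gamma$.

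\emph{Step 2 (finite-sample transfer).} On the compact region from Step~1 a uniform law of large numbers gives $\sup|\ell_n-\ell|\to 0$ in probability, and Assumption~\ref{assumption:bounded_leve_set} ensures that, with probability tending to $1$, problem~\eqref{eq:general_opt_quasi_mcp} has minimizers all lying in that region, where $\ell_n+p_{\lambda,\delta}(B(\cdot))$ converges uniformly to its population counterpart. The gap $\gamma$ then pushes every element of $\gO_{n,\lambda,\delta}$ into an arbitrarily small neighborhood of $\gE_{\min}(\psi^0,\xi^0)$; since the induced DAG structures are discrete, they must then lie in $\gG(\gE_{\min}(\psi^0,\xi^0))$, and re-running the ``small edge'' estimate of Step~1 at the empirical level --- where the base-point gradient of $\ell_n$ is only of order $n^{-1/2}$, which the fixed $\lambda$ still dominates once $n$ is large --- promotes ``close to $\gE_{\min}$'' to ``structure in $\gG(\gE_{\min})$ with parameters converging to $\gE_{\min}$''. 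For the reverse inclusion, evaluating the penalized objective at the structure-constrained maximum-likelihood estimate over each DAG in $\gG(\gE_{\min}(\psi^0,\xi^0))$ and invoking consistency together with the isolation from Step~1 shows that each such structure is attained by a global minimizer with probability tending to $1$. Together these give $P(\gO_{n,\lambda,\delta}=\gE_{\min}(\psi^0,\xi^0))\to1$, the equality being understood at the level of induced DAG structures as in the convention of Section~\ref{sec:preliminary}.

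\emph{Main obstacle.} The crux is the finite-sample ``no spurious small edge'' control, carried out uniformly over all candidate supports and base points in the compact region while assuming only $|\gE(\psi^0,\xi^0)|<\infty$, with no quantitative non-degeneracy of $\ell$ at its minimizers. One must show that along any direction that switches on a coefficient of size $t$ the penalty increment, of order $\lambda|t|$, beats the likelihood change, which is $O(t^2)$ near a population minimizer (and nonnegative at the population level) but of order $t\,n^{-1/2}+t^2$ and possibly negative empirically; making this uniform, simultaneously ruling out that true edges collapse toward zero, and doing so under the nonconvex restriction $h(B(\psi))=0$ on the admissible perturbations, is where the bulk of the technical work lies.
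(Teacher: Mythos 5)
Your population-level argument (Step~1) proves the same statement as the paper's own proof, which is itself essentially a population argument: the paper opens by replacing $\ell_n$ with $\ell$ and never returns to finite $n$. The routes differ in arrangement. The paper partitions parameter space by the value of the penalty relative to $\quasimcp(B(\psi^0))$ into $A_1$ (equal), $A_2$ (larger), $A_3$ (smaller), dispatches $A_1,A_2$ by the information inequality, and concentrates on $A_3$, where Assumption~\ref{assumption:finite}\ref{assumption:finite:2} forces $\|\psi-\psi^0\|_2\ge \Delta\tau/(L(1+\Delta))$, intersection with a level set (Assumption~\ref{assumption:bounded_leve_set}) gives compactness, and the resulting loss gap yields an admissible $\lambda_0$ as an infimum of the ratio $(\ell-\ell(\psi^0,\xi^0))/(p_{1,\delta}(B(\psi^0))-p_{1,\delta}(B(\psi)))$. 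You instead use the global bound $\quasimcp(B)\le(\lambda\delta/2)p(p-1)$ to confine every global minimizer to an $r$-neighborhood of $\gE(\psi^0,\xi^0)$ and then argue locally. Both use the same ingredients (information inequality, $\tau>0$ from finiteness, Lipschitzness of $B(\cdot)$, bounded level sets), and your confinement step is clean. One caveat: your local step invokes ``$\ell$ changes only by $O(t^2)$ near a minimizer, where its gradient vanishes,'' which presupposes smoothness of $\ell$ that Assumptions~\ref{assumption:finite}--\ref{assumption:bounded_leve_set} do not grant. It is also unnecessary at the population level: since $\ell\ge\ell(\psi^0,\xi^0)$ everywhere, you never need to bound a \emph{decrease} of $\ell$; within the neighborhood (take $Lr<\delta_0$) every entry supported by the nearby $B(\bar\psi)\in\gE(\psi^0,\xi^0)$ stays above $\delta$, so $\quasimcp(B(\psi))\ge(\lambda\delta/2)s_{B(\bar\psi)}$, and the objective strictly exceeds the optimum unless $(\psi,\xi)$ is itself an element of $\gEmin(\psi^0,\xi^0)$. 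Replacing the quadratic-expansion argument with this monotonicity observation closes the only real hole in Step~1.

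Your Step~2 attempts a finite-sample transfer (uniform LLN, empirical no-spurious-small-edge control, reverse inclusion via restricted MLEs) that the paper does not carry out at all --- its proof of this theorem stops at the population statement. As written, Step~2 is only a sketch: the uniform law of large numbers on the compact region and the $O(n^{-1/2})$ gradient control are not derivable from Assumptions~\ref{assumption:finite}--\ref{assumption:bounded_leve_set} alone and would require additional regularity of the log-likelihood; your ``main obstacle'' paragraph correctly locates where that missing work lies. Relative to the paper's own (population-only) proof, this is extra ambition rather than a defect of the core argument.
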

\begin{theorem}\label{thm:MEC_general}
    Under the setting in Theorem \ref{thm:general_main_theorem} and assuming that $P(X;\xi^0,\psi^0)$ is faithful with respect to $G^0\coloneq G(B(\psi^0))$. 
    Then, for all sufficiently small $\lambda,\delta>0$ (independent of $n$), it holds that 
    $P(\mathcal{O}_{n,\lambda,\delta} = \gM(G^0) )\rightarrow 1$ 
    as $n\rightarrow \infty$.
\end{theorem}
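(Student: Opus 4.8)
The plan is to mirror the proof of Theorem~\ref{thm:MEC}, replacing the Gaussian-specific Lemma~\ref{lemma:MEC} by its general-model analogue, Lemma~\ref{lemma:general_MEC}. Theorem~\ref{thm:general_main_theorem} already supplies the hard analytic content: for all sufficiently small $\lambda,\delta>0$ one has $P(\gO_{n,\lambda,\delta} = \gEmin(\psi^0,\xi^0))\to 1$ as $n\to\infty$, where the equality is between sets of parameter pairs. It therefore suffices to establish the deterministic identity $\gG(\gEmin(\psi^0,\xi^0)) = \gM(G^0)$ under the added faithfulness hypothesis, and then invoke the abuse-of-notation convention $\gO_{n,\lambda,\delta}=\gM(G^0)\iff\gG(\gO_{n,\lambda,\delta})=\gM(G^0)$ to conclude.

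For the set identity I would argue by two inclusions. For ``$\subseteq$'': every $(\psi,\xi)\in\gE(\psi^0,\xi^0)$ induces the true law $P\coloneq P(X;\psi^0,\xi^0)$ as the distribution of an SEM with graph $G(B(\psi))$, so $P$ is Markov to $G(B(\psi))$, i.e.\ each element of $\gG(\gE(\psi^0,\xi^0))$ is an I-map of $P$; since $(\psi^0,\xi^0)\in\gE(\psi^0,\xi^0)$, the true graph $G^0$ itself lies in this family. By Theorem~2.4 of \citet{raskutti2018learning}, faithfulness of $P$ to $G^0$ implies that $G^0$ is a sparsest I-map of $P$ and that sparsest I-maps are unique up to Markov equivalence; since Markov-equivalent DAGs share a skeleton, hence an edge count, the minimum of $s_{B(\psi)}$ over $\gE(\psi^0,\xi^0)$ equals $s_{B(\psi^0)}$, attained exactly at those $(\psi,\xi)$ whose induced graph is a sparsest I-map, i.e.\ an element of $\gM(G^0)$. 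Thus $\gG(\gEmin(\psi^0,\xi^0))\subseteq\gM(G^0)$. For ``$\supseteq$'': fix $G\in\gM(G^0)$ and construct a parameterization $(\psi,\xi)$ with $G(B(\psi))=G$ and $P(X;\psi,\xi)=P$ --- for Gaussian models this is just regressing each node on its parents along a topological order of $G$, and the analogue holds for the model families under consideration --- so $(\psi,\xi)\in\gE(\psi^0,\xi^0)$ with edge count $s_{B(\psi^0)}$, whence $(\psi,\xi)\in\gEmin(\psi^0,\xi^0)$ and $G\in\gG(\gEmin(\psi^0,\xi^0))$.

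Finally, on the event $\{\gO_{n,\lambda,\delta}=\gEmin(\psi^0,\xi^0)\}$ one has $\gG(\gO_{n,\lambda,\delta})=\gG(\gEmin(\psi^0,\xi^0))=\gM(G^0)$, and this event has probability tending to $1$ by Theorem~\ref{thm:general_main_theorem}, which is precisely the claim. I expect the main obstacle to be the ``$\supseteq$'' realizability step of the set identity: unlike the Gaussian case, where any I-map of a Gaussian law is realized by a Gaussian SEM via sequential regression, a general parametric family need not be rich enough to express the conditionals $X_j\mid\PA_j$ relative to a re-rooted DAG, so this step needs either an explicit richness assumption or a model-by-model verification for the families listed (Gaussian, continuous/binary GLMs, exponential families). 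A secondary point to handle carefully is that the faithfulness and I-map reasoning is carried out at the population level, matching the population $\gEmin$ appearing in Theorem~\ref{thm:general_main_theorem} rather than any finite-sample objective.
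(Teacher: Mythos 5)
Your proposal is correct and follows essentially the same route as the paper: the paper's proof of Theorem~\ref{thm:MEC_general} is precisely the combination of Theorem~\ref{thm:general_main_theorem} with Lemma~\ref{lemma:general_MEC}, whose proof in turn runs through the SMR machinery of \citet{raskutti2018learning} exactly as you reconstruct it. The realizability concern you flag for the ``$\supseteq$'' inclusion (that a general parametric family may not be rich enough to realize every DAG in $\gM(G^0)$ within $\gE(\psi^0,\xi^0)$) is legitimate, but it is a gap in the paper's own one-line proof of Lemma~\ref{lemma:general_MEC} rather than a defect introduced by your argument.
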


\section{Experiments}\label{sec:experiments}

To solve \eqref{eq:opt_quasi_mcp} and \eqref{eq:general_opt_quasi_mcp}, we employ the augmented Lagrangian algorithm \citep{bertsekas1997nonlinear} from NOTEARS \citep{zheng2018dags, zheng2020learning}, modifying their least squares score with $\ell_1$ penalty into the log-likelihood with MCP \eqref{eq:quasi_MCP}.
We compare our approach to relevant baselines, e.g. NOTEARS \citep{zheng2018dags}, GOLEM \citep{ng2020role}, DAGMA \citep{bello2022dagma}, VarSort \citep{reisach2021beware}, FGES \citep{ramsey2017million} and PC \citep{spirtes1991algorithm}. For our variation of NOTEARS that employs a score function based on the NLL with MCP, we name it as \textsc{logll-notears}. The suffixes `\textsc{population}' and `\textsc{sample}' denote the use of the population and sample covariance matrix, respectively. Full details of the experiments are given in Appendix~\ref{sec:exp_details}.

Our primary empirical results are shown in Figures~\ref{fig:all_exp} and \ref{fig:boxplot}. We use the structural Hamming distance (SHD) as the main metric to evaluate the difference between the estimated graph and the ground truth graph. Lower SHD values indicate better estimation accuracy. Given that the model specified in \eqref{eq:linear_uniden} is nonidentifiable, we compare the CPDAGs of the estimated graph and the ground truth graph.

In Figure \ref{fig:all_exp}(a), we observe that using the NLL+MCP achieves the best performance for the different types of graphs and ranks second best for sparse graphs \{ER1, SF1\}. 
In Figure~\ref{fig:all_exp}(b), standardizing $\rmX$ significantly impacts the performance of GOLEM, NOTEARS, and DAGMA; the SHD values are not any better than an empty graph, {exactly as predicted by prior theory}. The performance of \textsc{logll-notears-sample} and \textsc{logll-notears-population} are also affected by standardization, but these methods remain robust and continue to make meaningful discoveries. It is important to note that this observation does not contradict our Lemma \ref{lemma:scale_invariant}. The challenges arise because solving the optimization problems \eqref{eq:opt_quasi_mcp} and \eqref{eq:general_opt_quasi_mcp} to find global solutions becomes inherently difficult as $p$ increases. To verify the scale invariance property in Theorem \ref{thm:sample_EC}, we also conduct experiments on small graphs (8 nodes, ER-2 graph) and include exact method that solve \eqref{eq:opt_quasi_mcp} and \eqref{eq:general_opt_quasi_mcp} to global optimal, see Figure \ref{fig:3}.

In Figure \ref{fig:boxplot}, we replicate the Figure 1 in \cite{reisach2021beware}, providing a more direct comparison between various methods applied to raw data ($\rmX$) and standardized data ($\rmX$ standardized). We include VarSort (referred to as sortnregress in \cite{reisach2021beware}) as a baseline. Notably, for smaller graphs ($p=10$), both \textsc{logll(-notears)-sample} and \textsc{logll(-notears)-population} exhibit the scale-invariant property alongside PC and FGES, in alignment with Lemma \ref{lemma:scale_invariant}. This contrasts sharply with other methods, which completely deteriorate. For larger graphs ($p=50$), standardizing the data mildly degrades the performance of \textsc{logll(-notears)-sample} and \textsc{logll(-notears)-population}. This can be attributed to the increased complexity of optimization as the size of the graph grows.

In Figure \ref{fig:different_initialization},
we use a concrete toy example to investigate two key factors in the implementation: (1) the impact of random initialization, and (2) the upper limit for $\delta_0$ that can be applied according to Theorem \ref{thm:main}. 
The toy example in this case is the three-node fork graph $X_0\rightarrow X_1,X_0\rightarrow X_2$.
We generate $10^5$ initializations $B_{\text{random}}$ with weight for each edges uniformly sampled within $[-5,5]$, and perform optimization using \textsc{logll-notears} starting from these points. 
The ``maximal $\delta$" is the theoretical maximum $\delta_0$ that ensures the validity of Theorem \ref{thm:main}. We computed the SHD and the distances between the estimated $\gM(B_{\text{est}})$ and $\gM(B^0)$.  The red line in Figure~\ref{fig:different_initialization} represents the average SHD and distances. The distribution of these $10^5$ estimated SHD and distances is visualized using dots of varying sizes, where larger dots indicate a higher frequency of points. In some cases where SHD takes a value of $-1$, this value is used to indicate that the estimated $B_{\text{est}}$ does not form a valid DAG, which is an artifact of thresholding and affects $<0.5\%$ of models. For the remaining models, the optimization \eqref{eq:opt_quasi_mcp} can typically be solved very close to a globally optimal solution, and according to Theorem \ref{thm:MEC}, the SHD should ideally be zero, which is consistent with the figure.

Our results are not limited to the linear model with Gaussian noise. In Appendix \ref{subsec:nonlinear}, we provide additional experiments on a logistic model (binary $X_j$) and neural networks. Further details on the experimental settings and additional experiments can be found in Appendix \ref{sec:exp_details} and \ref{sec:additional_results}.
\begin{figure}[H]
    \centering
    \subfigure[$\rmX$ (generated by Equation \eqref{eq:linear_uniden})]{\includegraphics[width=0.6\linewidth]{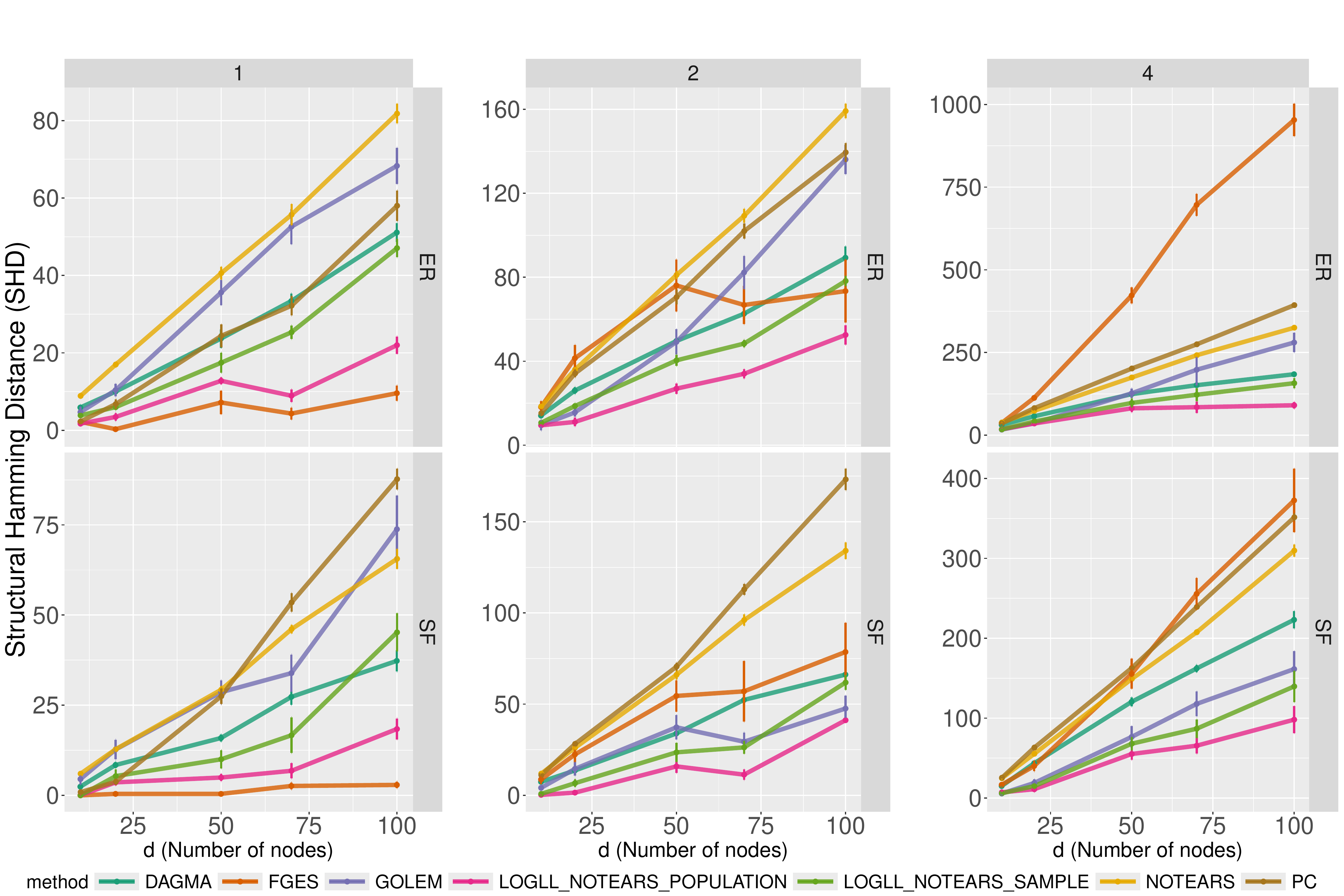}} 
    \subfigure[standardization of $\rmX$ (generated by Equation \eqref{eq:linear_uniden})]{\includegraphics[width=0.6\linewidth]{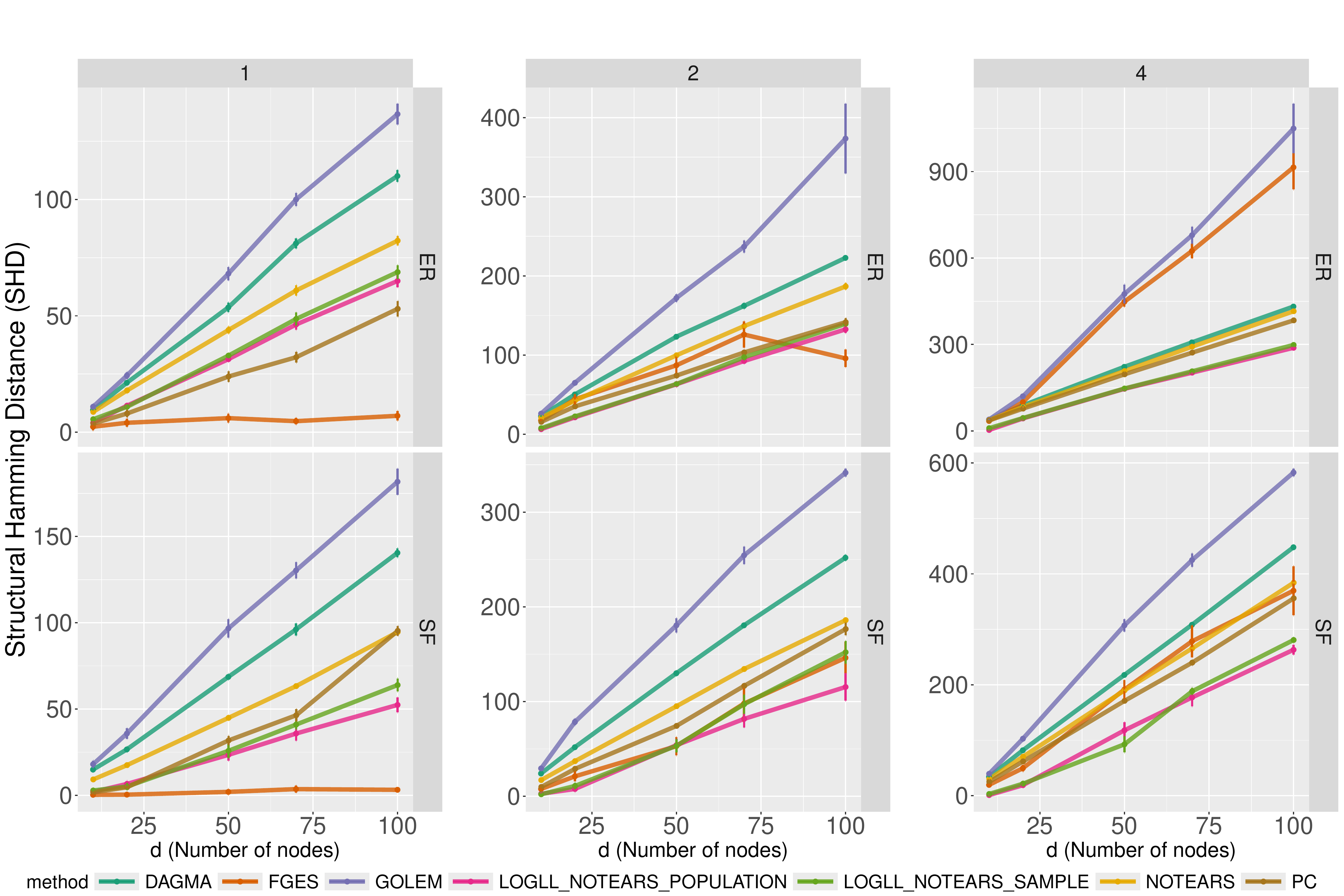}} 
    \caption{Results in terms of SHD between MECs of estimated graph and ground truth. Lower is better.  Column: $k = \{1,2,4\}$. Row: random graph types. \{ER,SF\}-$k$ = \{Scale-Free,\Erdos-\Renyi\} graphs with $kd$ expected edges. Here $p=\{10,20,50,70,100\},n=1000$. }
    \label{fig:all_exp}
\end{figure}

\begin{figure}[H]
    \centering
    \subfigure[$p=10$, graph =``ER'', $k = 2$]{\includegraphics[width=0.49\linewidth]{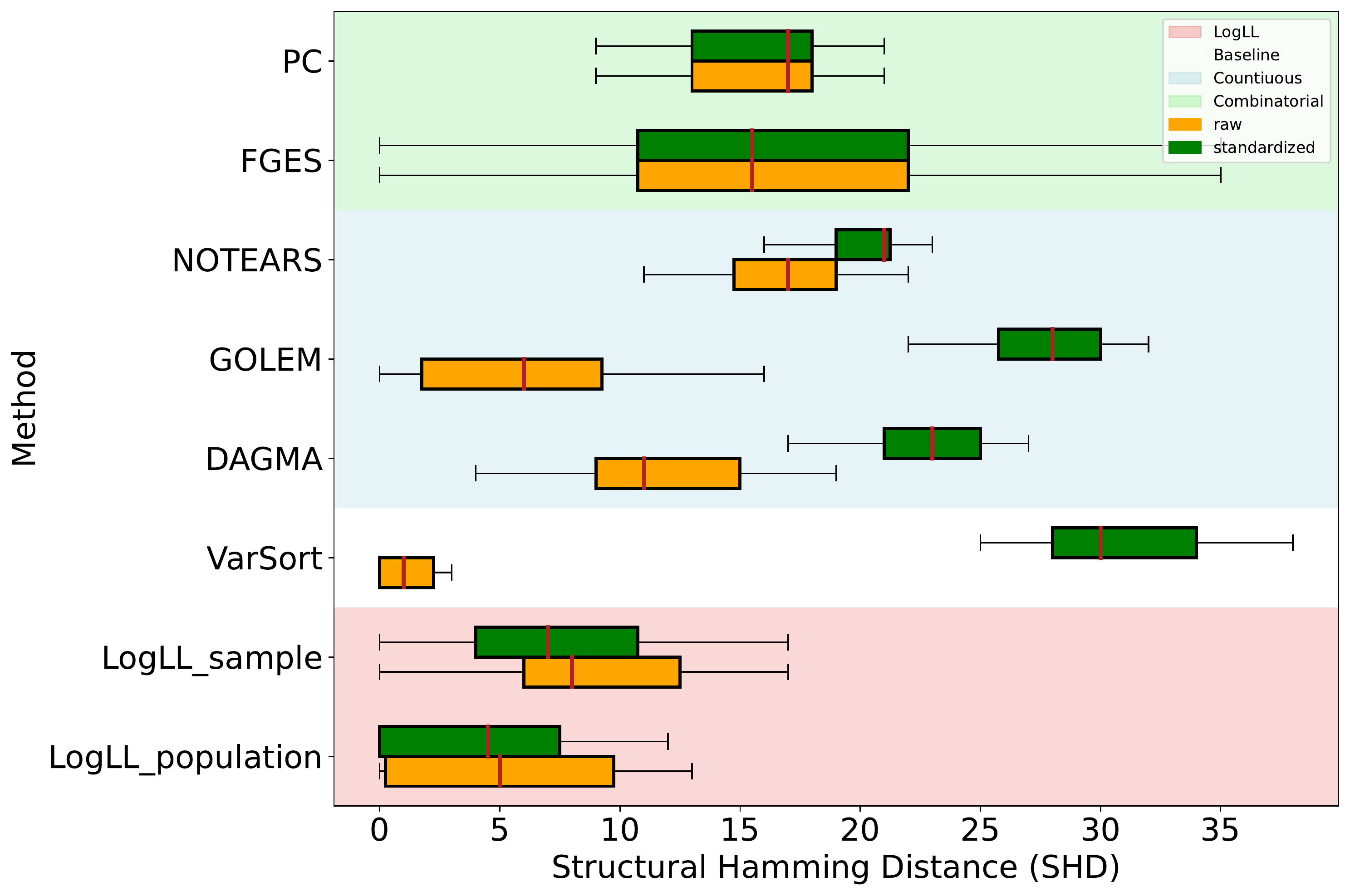}}
    \subfigure[$p=50$, graph =``ER'', $k = 2$]{\includegraphics[width=0.49\linewidth]{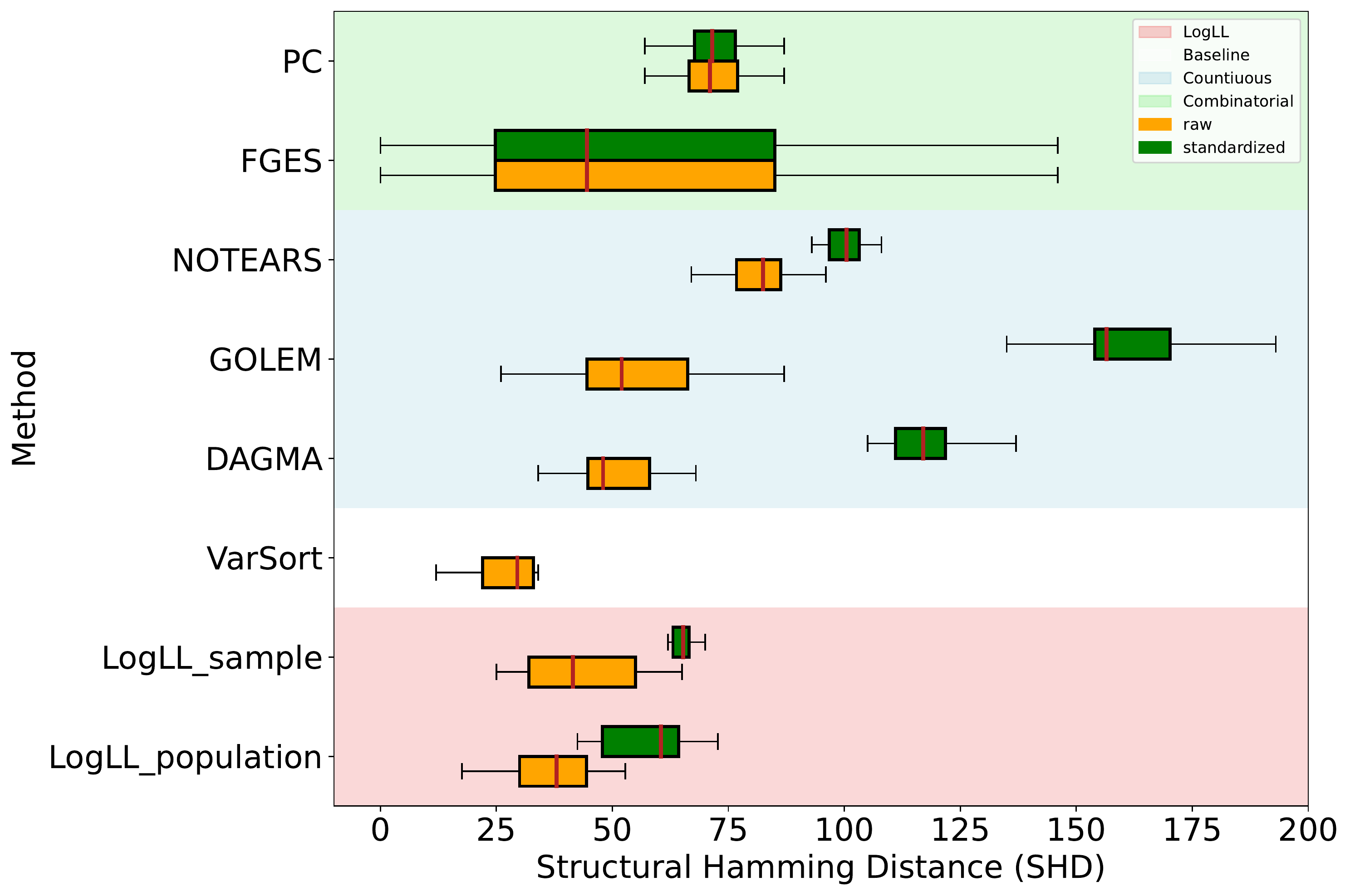}}
    \caption{Comparison of raw (orange) vs. standardized (green) data. SHD (lower is better) between  Markov equivalence classes (MEC) of recovered and ground truth graphs for ER-2 graphs with $10$ (left) or $50$ (right) nodes. In (b), SHD for VarSort with standardized data is omitted due to its average exceeding 300.}
    \label{fig:boxplot}
\end{figure}

\begin{figure}[H]
    \centering
    \includegraphics[width =0.9 \linewidth]{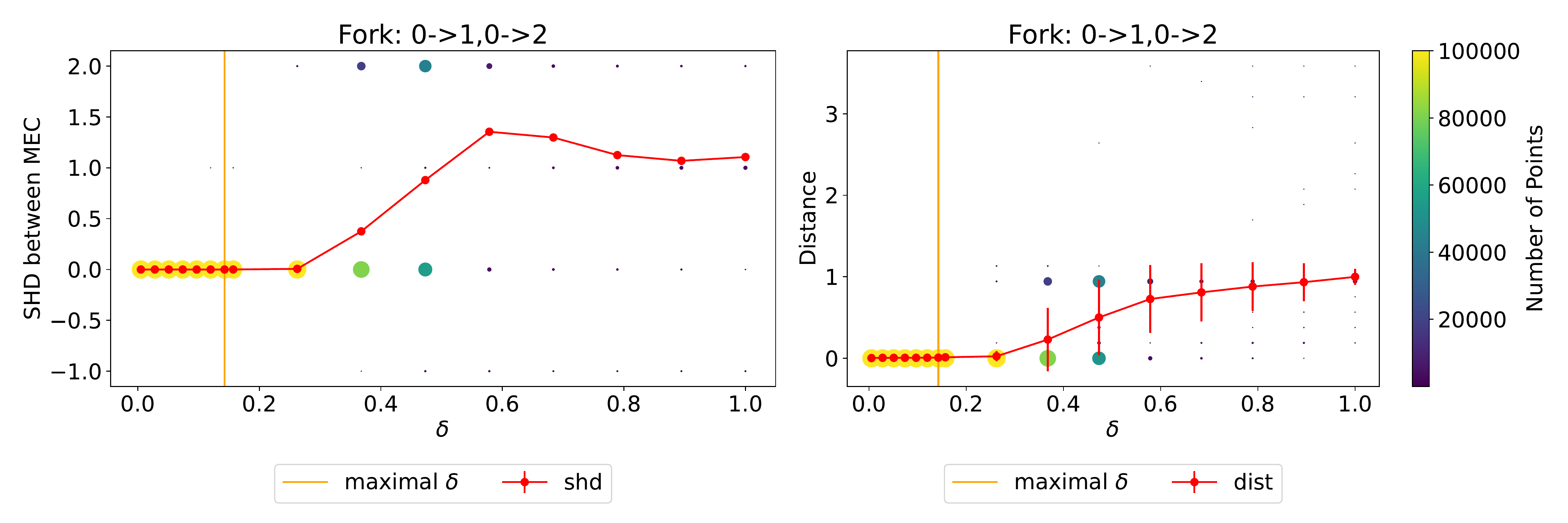}
    \caption{Graph: fork structure $X_0\rightarrow X_1$ and $X_0\rightarrow X_2$. For $0<\delta<\delta_0$, the estimated $(B_{\text{est}},\Omega_{\text{est}})\in \gEmin(\Theta^0)$ because SHD and distance are close to $0$.}
    \label{fig:different_initialization} 
\end{figure}

\section{Conclusion}
{Continuous score-based structure learning is a relative newcomer to the literature on causal structure learning, which goes back several decades. It has attracted significant attention due to its simplicity and generality, however, its theoretical properties are often misunderstood. We have sought to fill in this gap by studying its statistical aspects (to complement ongoing computational studies, e.g. \cite{ng2020role,dennis2020,bello2022dagma,deng2023optimizing,deng2023global,ng2022convergence}).}
To this end, we proposed a fully differentiable score function for structure learning, composed of log-likelihood and quasi-MCP. We demonstrated that the global solution corresponds to the sparsest DAG structure that is Markov to the data distribution. Under mild assumptions, we conclude that all optimal solutions are the sparsest within the same Markov equivalence class. Additionally, the proposed score is scale-invariant, producing the same structure regardless of the data scale under the linear Gaussian model. Experimental results validate our theory, showing that our score provides better and more robust structure recovery compared to other scores.

{We hope that this work stimulates further statistical inquiry into the properties of CSL. For example, we have focused on parametric models, and left extensions to nonparametric models to future work. 
Certain assumptions such as the finiteness of the equivalence class and the boundedness of the level set of the log-likelihood become more interesting in this regime.
We have mentioned already that extensions to richer data types including interventions is an important direction.}
It would be of great interest to explore ways to relax our assumptions to expand our statistical understanding of CSL in broader scenarios.

\bibliography{main}

@article{brouillard2020differentiable,
  title={Differentiable causal discovery from interventional data},
  author={Brouillard, Philippe and Lachapelle, S{\'e}bastien and Lacoste, Alexandre and Lacoste-Julien, Simon and Drouin, Alexandre},
  journal={Advances in Neural Information Processing Systems},
  volume={33},
  pages={21865--21877},
  year={2020}
}

@inproceedings{dennis2020,
 author = {Wei, Dennis and Gao, Tian and Yu, Yue},
 booktitle = {Advances in Neural Information Processing Systems},
 title = {{DAGs} with No Fears: A Closer Look at Continuous Optimization for Learning Bayesian Networks},
 year = {2020}
}

@book{boyd2004convex,
  title={Convex optimization},
  author={Boyd, Stephen and Boyd, Stephen P and Vandenberghe, Lieven},
  year={2004},
  publisher={Cambridge university press}
}

@article{bertsekas1997nonlinear,
  title={Nonlinear programming},
  author={Bertsekas, Dimitri P},
  journal={Journal of the Operational Research Society},
  volume={48},
  number={3},
  pages={334--334},
  year={1997},
  publisher={Taylor \& Francis}
}

@article{chickering2002,
author = {Chickering, David Maxwell},
title = {Optimal Structure Identification with Greedy Search},
year = {2003},
volume = {3},
journal = {JMLR},
pages = {507–554},
numpages = {48}
}

@book{spirtes2000,
  title={Causation, prediction, and search},
  author={Spirtes, Peter and Glymour, Clark N and Scheines, Richard and Heckerman, David},
  year={2000},
  publisher={MIT press}
}

@inproceedings{zhu2020causal,
	author = {Zhu, Shengyu and Ng, Ignavier and Chen, Zhitang},
	booktitle = {International Conference on Learning Representations},
	date-added = {2022-01-27 18:52:10 -0600},
	date-modified = {2022-01-27 18:52:40 -0600},
	title = {Causal Discovery with Reinforcement Learning},
	year = {2020}}

@article{loh2014high,
  title={High-dimensional learning of linear causal networks via inverse covariance estimation},
  author={Loh, Po-Ling and B{\"u}hlmann, Peter},
  journal={The Journal of Machine Learning Research},
  volume={15},
  number={1},
  pages={3065--3105},
  year={2014},
}

@incollection{chickering1996learning,
  title={Learning Bayesian networks is NP-complete},
  author={Chickering, David Maxwell},
  booktitle={Learning from data},
  pages={121--130},
  year={1996},
  publisher={Springer}
}

@article{peters2014causal,
  title={Causal discovery with continuous additive noise models},
  author={Peters, Jonas and Mooij, Joris M and Janzing, Dominik and Sch{\"o}lkopf, Bernhard},
  journal={JMLR},
  year={2014}
}

@article{van2013ell_,
  title={$\ell_0$-penalized maximum likelihood for sparse directed acyclic graphs},
  author={Van de Geer, Sara and B{\"u}hlmann, Peter},
  journal={The Annals of Statistics},
  volume={41},
  number={2},
  pages={536--567},
  year={2013},
  publisher={Institute of Mathematical Statistics},
}

@article{aragam2015concave,
  title={{Concave penalized estimation of sparse Gaussian Bayesian networks}},
  author={Aragam, Bryon and Zhou, Qing},
  journal={The Journal of Machine Learning Research},
  volume={16},
  number={1},
  pages={2273--2328},
  year={2015},
  publisher={JMLR. org}
}

@inproceedings{ghoshal2017learning,
  title={Learning identifiable Gaussian Bayesian networks in polynomial time and sample complexity},
  author={Ghoshal, Asish and Honorio, Jean},
  booktitle={Proceedings of the 31st International Conference on Neural Information Processing Systems},
  pages={6460--6469},
  year={2017}
}

@InProceedings{ghoshal18,
  title = 	 {Learning linear structural equation models in polynomial time and sample complexity},
  author = 	 {Ghoshal, Asish and Honorio, Jean},
  booktitle = 	 {Proceedings of the Twenty-First International Conference on Artificial Intelligence and Statistics},
  pages = 	 {1466--1475},
  year = 	 {2018},
  volume = 	 {84},
  series = 	 {Proceedings of Machine Learning Research},
  publisher =    {PMLR},
}

@inproceedings{lachapelle2019gradient,
  title={Gradient-Based Neural DAG Learning},
  author={Lachapelle, S{\'e}bastien and Brouillard, Philippe and Deleu, Tristan and Lacoste-Julien, Simon},
  booktitle={International Conference on Learning Representations},
  year={2020}
}

@article{heckerman1995learning,
  title={Learning Bayesian networks: The combination of knowledge and statistical data},
  author={Heckerman, David and Geiger, Dan and Chickering, David M},
  journal={Machine learning},
  volume={20},
  number={3},
  pages={197--243},
  year={1995},
  publisher={Springer}
}

@article{chickering2004,
	author = {Chickering, David Maxwell and Heckerman, David and Meek, Christopher},
	date-added = {2014-09-17 18:27:42 +0000},
	date-modified = {2017-10-27 19:39:38 +0000},
	journal = {Journal of Machine Learning Research},
	pages = {1287--1330},
	publisher = {JMLR. org},
	title = {Large-sample learning of {B}ayesian networks is {NP}-hard},
	volume = {5},
	year = {2004}}

@article{aragam2019globally,
  title={Globally optimal score-based learning of directed acyclic graphs in high-dimensions},
  author={Aragam, Bryon and Amini, Arash and Zhou, Qing},
  journal={Advances in Neural Information Processing Systems},
  volume={32},
  year={2019}
}

@inproceedings{margaritis1999bayesian,
	author = {Margaritis, Dimitris and Thrun, Sebastian},
	booktitle = {Proceedings of the 12th International Conference on Neural Information Processing Systems},
	date-added = {2021-09-01 16:04:39 -0500},
	date-modified = {2021-09-01 16:04:39 -0500},
	pages = {505--511},
	title = {Bayesian network induction via local neighborhoods},
	year = {1999}}

@inproceedings{tsamardinos2003algorithms,
	author = {Tsamardinos, Ioannis and Aliferis, Constantin F and Statnikov, Alexander R and Statnikov, Er},
	booktitle = {FLAIRS conference},
	date-added = {2018-09-27 17:08:12 -0400},
	date-modified = {2021-01-24 09:40:30 -0600},
	pages = {376--380},
	title = {Algorithms for Large Scale Markov Blanket Discovery},
	volume = {2},
	year = {2003}}

@article{raskutti2018learning,
  title={Learning directed acyclic graph models based on sparsest permutations},
  author={Raskutti, Garvesh and Uhler, Caroline},
  journal={Stat},
  volume={7},
  number={1},
  pages={e183},
  year={2018},
  publisher={Wiley Online Library}
}

@article{peters2014identifiability,
  title={Identifiability of Gaussian structural equation models with equal error variances},
  author={Peters, Jonas and B{\"u}hlmann, Peter},
  journal={Biometrika},
  volume={101},
  number={1},
  pages={219--228},
  year={2014},
  publisher={Oxford University Press}
}

@article{buhlmann2014cam,
  title={CAM: Causal additive models, high-dimensional order search and penalized regression},
  author={B{\"u}hlmann, Peter and Peters, Jonas and Ernest, Jan},
  journal={The Annals of Statistics},
  volume={42},
  number={6},
  pages={2526--2556},
  year={2014},
  publisher={Institute of Mathematical Statistics}
}

@book{peters2017elements,
	author = {Peters, Jonas and Janzing, Dominik and Sch{\"o}lkopf, Bernhard},
	date-added = {2019-09-24 15:51:18 -0500},
	date-modified = {2019-09-24 15:51:18 -0500},
	publisher = {MIT press},
	title = {Elements of causal inference: foundations and learning algorithms},
	year = {2017}}

@inproceedings{ng2022convergence,
  title={On the convergence of continuous constrained optimization for structure learning},
  author={Ng, Ignavier and Lachapelle, S{\'e}bastien and Ke, Nan Rosemary and Lacoste-Julien, Simon and Zhang, Kun},
  booktitle={International Conference on Artificial Intelligence and Statistics},
  pages={8176--8198},
  year={2022},
  organization={PMLR}
}

@article{kingma2014adam,
  title={Adam: A method for stochastic optimization},
  author={Kingma, Diederik P and Ba, Jimmy},
  journal={arXiv preprint arXiv:1412.6980},
  year={2014}
}

@article{reisach2021beware,
  title={Beware of the simulated dag! causal discovery benchmarks may be easy to game},
  author={Reisach, Alexander and Seiler, Christof and Weichwald, Sebastian},
  journal={Advances in Neural Information Processing Systems},
  volume={34},
  pages={27772--27784},
  year={2021}
}

@inproceedings{ng2024structure,
  title = {Structure {{Learning}} with {{Continuous Optimization}}: {{A Sober Look}} and {{Beyond}}},
  booktitle = {Proceedings of the {{Third Conference}} on {{Causal Learning}} and {{Reasoning}}},
  author = {Ng, Ignavier and Huang, Biwei and Zhang, Kun},
  year = {2024},
  pages = {71--105},
  publisher = {PMLR},
  issn = {2640-3498}
}

@inproceedings{seng2023learning,
  title={Learning Large DAGs is Harder than you Think: Many Losses are Minimal for the Wrong DAG},
  author={Seng, Jonas and Ze{\v{c}}evi{\'c}, Matej and Dhami, Devendra Singh and Kersting, Kristian},
  booktitle={The Twelfth International Conference on Learning Representations},
  year={2023}
}

@article{zhang2010nearly,
author = {Cun-Hui Zhang},
title = {{Nearly unbiased variable selection under minimax concave penalty}},
volume = {38},
journal = {The Annals of Statistics},
number = {2},
publisher = {Institute of Mathematical Statistics},
pages = {894 -- 942},
keywords = {correct selection, Degrees of freedom, least squares, mean squared error, minimax, Model selection, nonconvex minimization, penalized estimation, risk estimation, selection consistency, sign consistency, unbiasedness, Variable selection},
year = {2010},
}

@article{fan2001variable,
  title={Variable selection via nonconcave penalized likelihood and its oracle properties},
  author={Fan, Jianqing and Li, Runze},
  journal={Journal of the American statistical Association},
  volume={96},
  number={456},
  pages={1348--1360},
  year={2001},
  publisher={Taylor \& Francis}
}

@article{zheng2018dags,
  title={Dags with no tears: Continuous optimization for structure learning},
  author={Zheng, Xun and Aragam, Bryon and Ravikumar, Pradeep K and Xing, Eric P},
  journal={Advances in neural information processing systems},
  volume={31},
  year={2018}
}

@article{bello2022dagma,
  title={Dagma: Learning dags via m-matrices and a log-determinant acyclicity characterization},
  author={Bello, Kevin and Aragam, Bryon and Ravikumar, Pradeep},
  journal={Advances in Neural Information Processing Systems},
  volume={35},
  pages={8226--8239},
  year={2022}
}

@article{ng2020role,
  title={On the role of sparsity and dag constraints for learning linear dags},
  author={Ng, Ignavier and Ghassami, AmirEmad and Zhang, Kun},
  journal={Advances in Neural Information Processing Systems},
  volume={33},
  pages={17943--17954},
  year={2020}
}

@inproceedings{zheng2020learning,
  title={Learning sparse nonparametric dags},
  author={Zheng, Xun and Dan, Chen and Aragam, Bryon and Ravikumar, Pradeep and Xing, Eric},
  booktitle={International Conference on Artificial Intelligence and Statistics},
  pages={3414--3425},
  year={2020},
  organization={Pmlr}
}

@article{ramsey2017million,
  title={A million variables and more: the fast greedy equivalence search algorithm for learning high-dimensional graphical causal models, with an application to functional magnetic resonance images},
  author={Ramsey, Joseph and Glymour, Madelyn and Sanchez-Romero, Ruben and Glymour, Clark},
  journal={International journal of data science and analytics},
  volume={3},
  pages={121--129},
  year={2017},
  publisher={Springer}
}

@article{spirtes1991algorithm,
  title={An algorithm for fast recovery of sparse causal graphs},
  author={Spirtes, Peter and Glymour, Clark},
  journal={Social science computer review},
  volume={9},
  number={1},
  pages={62--72},
  year={1991},
  publisher={Sage Publications Sage CA: Thousand Oaks, CA}
}

@article{shimizu2006linear,
  title={A linear non-Gaussian acyclic model for causal discovery.},
  author={Shimizu, Shohei and Hoyer, Patrik O and Hyv{\"a}rinen, Aapo and Kerminen, Antti and Jordan, Michael},
  journal={Journal of Machine Learning Research},
  volume={7},
  number={10},
  year={2006}
}

@article{shimizu2011directlingam,
  title={DirectLiNGAM: A direct method for learning a linear non-Gaussian structural equation model},
  author={Shimizu, Shohei and Inazumi, Takanori and Sogawa, Yasuhiro and Hyvarinen, Aapo and Kawahara, Yoshinobu and Washio, Takashi and Hoyer, Patrik O and Bollen, Kenneth and Hoyer, Patrik},
  journal={Journal of Machine Learning Research-JMLR},
  volume={12},
  number={Apr},
  pages={1225--1248},
  year={2011}
}

@book{pearl2009causality,
  title={Causality},
  author={Pearl, Judea},
  year={2009},
  publisher={Cambridge university press}
}

@article{ramsey2012adjacency,
  title={Adjacency-faithfulness and conservative causal inference},
  author={Ramsey, Joseph and Zhang, Jiji and Spirtes, Peter L},
  journal={arXiv:1206.6843},
  year={2012}
}

@inproceedings{deng2023optimizing,
  title={Optimizing NOTEARS objectives via topological swaps},
  author={Deng, Chang and Bello, Kevin and Aragam, Bryon and Ravikumar, Pradeep Kumar},
  booktitle={International Conference on Machine Learning},
  pages={7563--7595},
  year={2023},
  organization={PMLR}
}

@article{gao2020polynomial,
  title={A polynomial-time algorithm for learning nonparametric causal graphs},
  author={Gao, Ming and Ding, Yi and Aragam, Bryon},
  journal={Advances in Neural Information Processing Systems},
  volume={33},
  pages={11599--11611},
  year={2020}
}

@inproceedings{park2019identifiability,
  title={Identifiability of generalized hypergeometric distribution (ghd) directed acyclic graphical models},
  author={Park, Gunwoong and Park, Hyewon},
  booktitle={The 22nd International Conference on Artificial Intelligence and Statistics},
  pages={158--166},
  year={2019},
  organization={PMLR}
}

@article{hoyer2008nonlinear,
  title={Nonlinear causal discovery with additive noise models},
  author={Hoyer, Patrik and Janzing, Dominik and Mooij, Joris M and Peters, Jonas and Sch{\"o}lkopf, Bernhard},
  journal={Advances in neural information processing systems},
  volume={21},
  year={2008}
}

@article{zhang2012identifiability,
  title={On the identifiability of the post-nonlinear causal model},
  author={Zhang, Kun and Hyvarinen, Aapo},
  journal={arXiv preprint arXiv:1205.2599},
  year={2012}
}

@article{zhang2015estimation,
  title={On estimation of functional causal models: general results and application to the post-nonlinear causal model},
  author={Zhang, Kun and Wang, Zhikun and Zhang, Jiji and Sch{\"o}lkopf, Bernhard},
  journal={ACM Transactions on Intelligent Systems and Technology (TIST)},
  volume={7},
  number={2},
  pages={1--22},
  year={2015},
  publisher={ACM New York, NY, USA}
}

@inproceedings{monti2020causal,
  title={Causal discovery with general non-linear relationships using non-linear ica},
  author={Monti, Ricardo Pio and Zhang, Kun and Hyv{\"a}rinen, Aapo},
  booktitle={Uncertainty in artificial intelligence},
  pages={186--195},
  year={2020},
  organization={PMLR}
}

@article{goudet2018learning,
  title={Learning functional causal models with generative neural networks},
  author={Goudet, Olivier and Kalainathan, Diviyan and Caillou, Philippe and Guyon, Isabelle and Lopez-Paz, David and Sebag, Michele},
  journal={Explainable and interpretable models in computer vision and machine learning},
  pages={39--80},
  year={2018},
  publisher={Springer}
}

@article{kalainathan2022structural,
  title={Structural agnostic modeling: Adversarial learning of causal graphs},
  author={Kalainathan, Diviyan and Goudet, Olivier and Guyon, Isabelle and Lopez-Paz, David and Sebag, Mich{\`e}le},
  journal={Journal of Machine Learning Research},
  volume={23},
  number={219},
  pages={1--62},
  year={2022}
}

@inproceedings{immer2023identifiability,
  title={On the identifiability and estimation of causal location-scale noise models},
  author={Immer, Alexander and Schultheiss, Christoph and Vogt, Julia E and Sch{\"o}lkopf, Bernhard and B{\"u}hlmann, Peter and Marx, Alexander},
  booktitle={International Conference on Machine Learning},
  pages={14316--14332},
  year={2023},
  organization={PMLR}
}

@inproceedings{yu2019dag,
  title={Dag-gnn: Dag structure learning with graph neural networks},
  author={Yu, Yue and Chen, Jie and Gao, Tian and Yu, Mo},
  booktitle={International Conference on Machine Learning},
  pages={7154--7163},
  year={2019},
  organization={PMLR}
}

@article{barabasi1999emergence,
  title={Emergence of scaling in random networks},
  author={Barab{\'a}si, Albert-L{\'a}szl{\'o} and Albert, R{\'e}ka},
  journal={science},
  volume={286},
  number={5439},
  pages={509--512},
  year={1999},
  publisher={American Association for the Advancement of Science}
}

@article{moraffah2020causal,
  title={Causal adversarial network for learning conditional and interventional distributions},
  author={Moraffah, Raha and Moraffah, Bahman and Karami, Mansooreh and Raglin, Adrienne and Liu, Huan},
  journal={arXiv:2008.11376},
  year={2020}
}

@article{kyono2020castle,
  title={Castle: Regularization via auxiliary causal graph discovery},
  author={Kyono, Trent and Zhang, Yao and van der Schaar, Mihaela},
  journal={Advances in Neural Information Processing Systems},
  volume={33},
  pages={1501--1512},
  year={2020}
}

@inproceedings{pamfil2020dynotears,
  title={Dynotears: Structure learning from time-series data},
  author={Pamfil, Roxana and Sriwattanaworachai, Nisara and Desai, Shaan and Pilgerstorfer, Philip and Georgatzis, Konstantinos and Beaumont, Paul and Aragam, Bryon},
  booktitle={International Conference on Artificial Intelligence and Statistics},
  pages={1595--1605},
  year={2020},
  organization={PMLR}
}

@article{maxwell1997efficient,
  title={Efficient approximations for the marginal likelihood of Bayesian networks with hidden variables},
  author={Maxwell Chickering, David and Heckerman, David},
  journal={Machine learning},
  volume={29},
  number={2},
  pages={181--212},
  year={1997},
  publisher={Springer}
}

@inproceedings{bouckaert1993probabilistic,
  title={Probabilistic network construction using the minimum description length principle},
  author={Bouckaert, Remco R},
  booktitle={European conference on symbolic and quantitative approaches to reasoning and uncertainty},
  pages={41--48},
  year={1993},
  organization={Springer}
}

@article{Meinshausen.2006, 
year = {2006}, 
title = {{High-dimensional graphs and variable selection with the Lasso}}, 
author = {Meinshausen, Nicolai and Bühlmann, Peter}, 
journal = {The Annals of Statistics}, 
issn = {0090-5364}, 
doi = {10.1214/009053606000000281}, 
eprint = {math/0608017}, 
abstract = {{The pattern of zero entries in the inverse covariance matrix of a multivariate normal distribution corresponds to conditional independence restrictions between variables. Covariance selection aims at estimating those structural zeros from data. We show that neighborhood selection with the Lasso is a computationally attractive alternative to standard covariance selection for sparse high-dimensional graphs. Neighborhood selection estimates the conditional independence restrictions separately for each node in the graph and is hence equivalent to variable selection for Gaussian linear models. We show that the proposed neighborhood selection scheme is consistent for sparse high-dimensional graphs. Consistency hinges on the choice of the penalty parameter. The oracle value for optimal prediction does not lead to a consistent neighborhood estimate. Controlling instead the probability of falsely joining some distinct connectivity components of the graph, consistent estimation for sparse graphs is achieved (with exponential rates), even when the number of variables grows as the number of observations raised to an arbitrary power.}}, 
number = {3}, 
volume = {34}, 
keywords = {}, 
local-url = {file://localhost/Users/kbello/Documents/Papers%20Library/Meinshausen_2006_High-dimensional%20graphs%20and%20variable%20selection%20with%20the%20Lasso_1.pdf}
}

@article{voorman2014graph,
  title={Graph estimation with joint additive models},
  author={Voorman, Arend and Shojaie, Ali and Witten, Daniela},
  journal={Biometrika},
  volume={101},
  number={1},
  pages={85--101},
  year={2014},
  publisher={Oxford University Press}
}

@article{ernest2016causal,
  title={Causal inference in partially linear structural equation models: identifiability and estimation},
  author={Ernest, Jan and Rothenh{\"a}usler, Dominik and B{\"u}hlmann, Peter},
  journal={arXiv preprint arXiv:1607.05980},
  year={2016}
}

@article{park2019high,
  title={High-Dimensional Poisson Structural Equation Model Learning via $\backslash$ell\_1-Regularized Regression.},
  author={Park, Gunwoong and Park, Sion},
  journal={J. Mach. Learn. Res.},
  volume={20},
  pages={95--1},
  year={2019}
}

@article{park2017learning,
  title={Learning Quadratic Variance Function (QVF) DAG Models via OverDispersion Scoring (ODS).},
  author={Park, Gunwoong and Raskutti, Garvesh},
  journal={J. Mach. Learn. Res.},
  volume={18},
  pages={224--1},
  year={2017}
}

@article{gu2019penalized,
  title={Penalized estimation of directed acyclic graphs from discrete data},
  author={Gu, Jiaying and Fu, Fei and Zhou, Qing},
  journal={Statistics and Computing},
  volume={29},
  number={1},
  pages={161--176},
  year={2019},
  publisher={Springer}
}

@article{mooij2016distinguishing,
  title={Distinguishing cause from effect using observational data: methods and benchmarks},
  author={Mooij, Joris M and Peters, Jonas and Janzing, Dominik and Zscheischler, Jakob and Sch{\"o}lkopf, Bernhard},
  journal={The Journal of Machine Learning Research},
  volume={17},
  number={1},
  pages={1103--1204},
  year={2016},
  publisher={JMLR. org}
}

@inproceedings{lee2019scaling,
  title={Scaling structural learning with NO-BEARS to infer causal transcriptome networks},
  author={Lee, Hao-Chih and Danieletto, Matteo and Miotto, Riccardo and Cherng, Sarah T and Dudley, Joel T},
  booktitle={Pacific Symposium on Biocomputing 2020},
  pages={391--402},
  year={2019},
  organization={World Scientific}
}

@article{zhang2022truncated,
  title={Truncated matrix power iteration for differentiable dag learning},
  author={Zhang, Zhen and Ng, Ignavier and Gong, Dong and Liu, Yuhang and Abbasnejad, Ehsan and Gong, Mingming and Zhang, Kun and Shi, Javen Qinfeng},
  journal={Advances in Neural Information Processing Systems},
  volume={35},
  pages={18390--18402},
  year={2022}
}

@article{deng2023global,
  title={Global Optimality in Bivariate Gradient-based DAG Learning},
  author={Deng, Chang and Bello, Kevin and Ravikumar, Pradeep and Aragam, Bryon},
  journal={Advances in Neural Information Processing Systems},
  volume={36},
  year={2023}
}

@inproceedings{lam2022greedy,
	author = {Lam, Wai-Yin and Andrews, Bryan and Ramsey, Joseph},
	booktitle = {Uncertainty in Artificial Intelligence},
	date-added = {2024-04-19 13:07:56 -0500},
	date-modified = {2024-04-19 13:07:56 -0500},
	organization = {PMLR},
	pages = {1052--1062},
	title = {Greedy relaxations of the sparsest permutation algorithm},
	year = {2022}}

@phdthesis{lam2023causal,
	author = {Lam, Wai Yin},
	date-added = {2024-03-08 15:19:06 -0600},
	date-modified = {2024-03-08 15:19:06 -0600},
	school = {Carnegie Mellon University},
	title = {Causal Razors and Causal Search Algorithms},
	year = {2023}}

@article{tibshirani1996regression,
  title={Regression shrinkage and selection via the lasso},
  author={Tibshirani, Robert},
  journal={Journal of the Royal Statistical Society Series B: Statistical Methodology},
  volume={58},
  number={1},
  pages={267--288},
  year={1996},
  publisher={Oxford University Press}
}

@article{zheng2024causal,
  title={Causal-learn: Causal discovery in python},
  author={Zheng, Yujia and Huang, Biwei and Chen, Wei and Ramsey, Joseph and Gong, Mingming and Cai, Ruichu and Shimizu, Shohei and Spirtes, Peter and Zhang, Kun},
  journal={Journal of Machine Learning Research},
  volume={25},
  number={60},
  pages={1--8},
  year={2024}
}

@article{ye2024federated,
  title={Federated Learning of Generalized Linear Causal Networks},
  author={Ye, Qiaoling and Amini, Arash A and Zhou, Qing},
  journal={IEEE Transactions on Pattern Analysis and Machine Intelligence},
  year={2024},
  publisher={IEEE}
}

@article{friedman2008sparse,
  title={Sparse inverse covariance estimation with the graphical lasso},
  author={Friedman, Jerome and Hastie, Trevor and Tibshirani, Robert},
  journal={Biostatistics},
  volume={9},
  number={3},
  pages={432--441},
  year={2008},
  publisher={Oxford University Press}
}

@article{yuan2007model,
  title={Model selection and estimation in the Gaussian graphical model},
  author={Yuan, Ming and Lin, Yi},
  journal={Biometrika},
  volume={94},
  number={1},
  pages={19--35},
  year={2007},
  publisher={Oxford University Press}
}

@article{meinshausen2006high,
  title={High-dimensional graphs and variable selection with the lasso},
  author={Meinshausen, Nicolai and B{\"u}hlmann, Peter},
  year={2006}
}
\bibliographystyle{agsm}
\newpage
\appendix

\section{Preliminary Technical Results}\label{sec:addtional_thm_lemma}

In this appendix, we include various technical results used to prove the main theorems of the paper. Proofs can be found in Appendix~\ref{app:proofs}.

The following corollary supports Remark \ref{remark:mcp}. In the main paper, we use quasi-MCP \eqref{eq:quasi_MCP} as a penalty in the optimization problems \eqref{eq:opt_quasi_mcp} and \eqref{eq:general_opt_quasi_mcp} for simplicity. However, similar conclusions hold when MCP or SCAD is used as the penalty term.
\begin{corollary}[MCP/SCAD]\label{cor:mcpscad}
    Under the same setting as Theorem \ref{thm:main}. Let optimal solutions collection be
    \[\gO_{n,\lambda,a}= \{(B^*,\Omega^*): (B^*,\Omega^*) \text{ is minimizer of \eqref{eq:opt_quasi_mcp} with $p_{\lambda,\delta}(t)$ replaced by $p^{MCP}_{\lambda,a}(t)$ or $p^{SCAD}_{\lambda,a}(t)$}\}\]
    Then, for all sufficiently small $\lambda,a>0$ (independent of $n$), it holds that $\gO_{n,\lambda,a} = \gE_{\min}(\Thetat)$ as  $n\rightarrow \infty$, where  MCP  $p^{MCP}_{\lambda,a}(\cdot)$ and SCAD  $p^{SCAD}_{\lambda,a}(\cdot)$ are defined in Appendix \ref{subsec:penalty}.  
\end{corollary}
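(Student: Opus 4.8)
\textbf{Proof proposal for Corollary~\ref{cor:mcpscad}.}

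The plan is to observe that the proof of Theorem~\ref{thm:main} never exploits the precise quadratic shape of quasi-MCP on the interval $[0,\delta]$; it uses only a short list of structural properties of the penalty, all of which MCP and SCAD share. Writing a generic penalty as $p(\cdot)$ with its hyperparameters collectively denoted $\theta$, the facts the argument actually needs are: (i) $p(0)=0$, $p\ge 0$, and $p$ is symmetric and nondecreasing on $[0,\infty)$; (ii) there exist a threshold $\tau(\theta)>0$ and a saturation value $c(\theta)>0$ with $p(t)=c(\theta)$ for all $|t|\ge\tau(\theta)$, so that above the threshold the penalty acts like $c(\theta)$ times an $\ell_0$ count; and (iii) $\tau(\theta)\to 0$ and $c(\theta)\to 0$, while $c(\theta)$ stays strictly positive, as the hyperparameters shrink. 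Quasi-MCP \eqref{eq:quasi_MCP} satisfies (i)--(iii) with $\tau=\delta$ and $c=\lambda\delta/2$; the MCP $p^{MCP}_{\lambda,a}$ and SCAD $p^{SCAD}_{\lambda,a}$ of Appendix~\ref{subsec:penalty} satisfy them with the analogous closed-form $\tau$ and $c$ (for MCP, $\tau=a\lambda$ and $c=a\lambda^2/2$; SCAD is similar), both vanishing as $\lambda\to 0$ and $a$ shrinks within its admissible range, while remaining positive. So the first step is to extract from the proof of Theorem~\ref{thm:main} precisely which statements about $p_{\lambda,\delta}$ are invoked, and to check that each is an instance of (i)--(iii).

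The second step re-runs that proof verbatim with $p_{\lambda,\delta}$ replaced by $p^{MCP}_{\lambda,a}$ or $p^{SCAD}_{\lambda,a}$. At the population level: every $(B,\Omega)\in\gE(\Theta^0)$ attains the minimal NLL, and $\gE(\Theta^0)$ is finite (at most $p!$ elements), so there is a separation constant $\beta^0>0$ below which no nonzero entry of any $B$ appearing in the equivalence class can fall; choosing the hyperparameters small enough that $\tau(\theta)<\beta^0$, the penalty restricted to $\gE(\Theta^0)$ equals $c(\theta)\,s_B$, whence the penalized population objective is minimized exactly on $\gE_{\min}(\Theta^0)$. For $(B,\Omega)\notin\gE(\Theta^0)$ the population NLL strictly exceeds its minimum, and—as established within the proof of Theorem~\ref{thm:main}—one may restrict to a bounded set on which this excess is bounded below by a positive constant; since the penalty never exceeds $c(\theta)\,p(p-1)\to 0$, for small enough hyperparameters no such model can beat a minimal one. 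The third step upgrades this to finite samples exactly as in Theorem~\ref{thm:main}: uniform convergence of $\ell_n$ to $\ell$ on the relevant compact region (via $\widehat{\Sigma}\to\Sigma$), together with finiteness of $\gE(\Theta^0)$ and the positive population gap, forces $P(\gO_{n,\lambda,a}=\gE_{\min}(\Theta^0))\to 1$.

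I expect the only real work to be the bookkeeping in the first step—checking line by line that the proof of Theorem~\ref{thm:main} is insensitive to the penalty's behaviour away from $0$ and away from its threshold—plus one mild wrinkle for SCAD, whose shape parameter is constrained (e.g.\ $a>2$ in the standard parametrization), so that ``sufficiently small $a$'' must be read as ``$a$ at the small end of its admissible range,'' with $\lambda$ carrying out the remaining shrinkage. Neither is a genuine obstacle: the substance is entirely contained in Theorem~\ref{thm:main}, and the corollary merely records that its proof is robust to substituting one saturating, continuously differentiable sparsity surrogate for another.
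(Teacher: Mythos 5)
Your proposal is correct and follows essentially the same route as the paper: reduce everything to the proof of Theorem~\ref{thm:main}, whose argument depends only on the penalty's threshold and saturation value. The paper's proof is slightly slicker for MCP, using the exact identity $p^{MCP}_{\lambda,a}(t)=p_{\lambda,a\lambda}(t)$ from Appendix~\ref{subsec:penalty} so that the MCP case is a literal substitution $\delta=a\lambda$ into Theorem~\ref{thm:main} with no re-verification needed, while for SCAD it checks the same two conditions (threshold below $\delta_0$, saturation level small enough) that your abstracted properties (ii)--(iii) encode; your remark about SCAD's admissible range of $a$ is a fair point the paper glosses over.
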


The following lemma is a generalization of Lemma \ref{lemma:MEC}. Even for the general model, under the faithfulness assumption, all elements in the minimal equivalence class $\gE_{\min}(\psi^0, \xi^0)$ belong to the same Markov equivalence class, as is the case in the general linear Gaussian model \eqref{eq:linear_uniden}.

\begin{lemma}\label{lemma:general_MEC}
    Consider that $X$ is generated by \eqref{eq:sem} with $(\psi^0,\xi^0)$. Assume that $P(X;\xi^0,\psi^0)$ is faithful to $G^0\coloneq G(B(\psi^0))$. Then
    \begin{align*}
        \gM(G^0) = \gG(\gE_{\min}(\psi^0,\xi^0)) =  \{G(B(\psi)): (\psi,\xi)\in  \gE_{\min}(\psi^0,\xi^0) \}
    \end{align*}
    where $B(\psi)$ is the adjacency matrix implied by the parameterization $(\psi,\xi)$, see \eqref{eq:B}. $\gM(G^0)$ is the Markov equivalence class of $G^0$, see Definition \ref{def:MEC}.
\end{lemma}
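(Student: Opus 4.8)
The plan is to prove Lemma~\ref{lemma:general_MEC} by establishing the two-sided inclusion $\gM(G^0) \subseteq \gG(\gE_{\min}(\psi^0,\xi^0))$ and $\gG(\gE_{\min}(\psi^0,\xi^0)) \subseteq \gM(G^0)$, using a ``sparsest Markov representation'' characterization as the bridge. The key observation is that membership in $\gE(\psi^0,\xi^0)$ means inducing exactly the distribution $P(X;\psi^0,\xi^0)$, so every DAG $G(B(\psi))$ with $(\psi,\xi)\in\gE(\psi^0,\xi^0)$ is an I-map of $P \coloneq P(X;\psi^0,\xi^0)$ (the SEM factorization of $P$ over $G(B(\psi))$ forces $\mathcal{I}(G(B(\psi)))\subseteq\mathcal{I}(P)$); conversely, faithfulness of $P$ to $G^0$ gives $\mathcal{I}(P)=\mathcal{I}(G^0)$, which is the hypothesis needed to invoke the SMR theory.

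First I would show $\gG(\gE_{\min}(\psi^0,\xi^0)) \subseteq \gM(G^0)$. Take $(\psi,\xi)\in\gE_{\min}(\psi^0,\xi^0)$ and let $H \coloneq G(B(\psi))$. Since $H$ is Markov to $P$ and has the minimal number of edges among all DAGs arising from $\gE(\psi^0,\xi^0)$, I need to argue $H$ is in fact a globally sparsest Markovian DAG for $P$ (not just sparsest within the graphs realized by the equivalence class). This follows because $G^0$ itself lies in $\gG(\gE(\psi^0,\xi^0))$ (take $\psi=\psi^0,\xi=\xi^0$), and by Theorem~2.4 of \cite{raskutti2018learning} faithfulness implies $G^0$ is a sparsest Markov representation of $P$; hence $s_{B(\psi)}\le s_{B^0}$ equals the global minimum edge count, so $H$ is a sparsest Markovian DAG. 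By the uniqueness-up-to-MEC part of the SMR theory, $H\in\gM(G^0)$. This handles one inclusion and, incidentally, also shows $G^0$ itself is minimal, so $\gM(G^0)$ is nonempty in the image.

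Next I would show $\gM(G^0)\subseteq\gG(\gE_{\min}(\psi^0,\xi^0))$. Given any $H\in\gM(G^0)$, I must exhibit parameters $(\psi,\xi)\in\gE_{\min}(\psi^0,\xi^0)$ with $G(B(\psi))=H$ (up to the convention that $\gG$ forgets weights, possibly only needing $G(B(\psi))$ to have the same skeleton/orientation as $H$). Since $H$ is Markov-equivalent to $G^0$, it is an I-map of $P$; the model class \eqref{eq:sem} is expressive enough — by Proposition~7.1 of \cite{peters2017elements}, essentially any distribution factorizing according to a DAG admits an SEM of that form — so there exist parameters $(\psi,\xi)$ with graph $H$ inducing $P$, i.e.\ $(\psi,\xi)\in\gE(\psi^0,\xi^0)$. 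Finally, $H$ has the same number of edges as $G^0$ (Markov-equivalent DAGs share a skeleton), which we showed is the minimal count, so $(\psi,\xi)\in\gE_{\min}(\psi^0,\xi^0)$. Combining both inclusions gives the claimed equality.

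The main obstacle I anticipate is the second inclusion, specifically the realizability step: going from ``$H$ is an I-map of $P$'' to ``there are \emph{admissible} parameters $(\psi,\xi)\in\Psi\times\Xi$ realizing $H$ and $P$.'' For the linear Gaussian case (Lemma~\ref{lemma:MEC}) this is a standard regression argument, but for fully general SEMs one must be careful that the factorization guaranteed by \cite{peters2017elements} stays inside the parametric family under consideration, and that the induced weighted adjacency matrix $B(\psi)$ has \emph{exactly} the support of $H$ rather than a strict subset (no ``accidental'' zero edge weights for edges present in $H$) — or alternatively, one reframes $\gG$ so only the minimal-support representative matters and invokes the SMR uniqueness to absorb this. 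I would likely either restrict attention to model classes rich enough that any I-map is realizable with the correct support (which Assumption~\ref{assumption:finite} and the surrounding discussion implicitly support), or cite the analogous argument in the proof of Lemma~\ref{lemma:MEC} and note it transfers verbatim once the SMR machinery of \cite{raskutti2018learning} is in place. The remaining steps — the I-map direction, the edge-count comparison, and the appeal to SMR uniqueness — are routine given the tools already cited in the preliminaries.
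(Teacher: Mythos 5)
Your proposal follows essentially the same route as the paper: faithfulness implies restricted faithfulness implies SMR (Theorems~\ref{thm:faithful_to_restricted} and \ref{thm:restricted_faithfulness}), every graph in $\gG(\gE_{\min}(\psi^0,\xi^0))$ is Markov to $P$ and no sparser than $G^0$, and SMR uniqueness up to Markov equivalence then places it in $\gM(G^0)$. The paper's own proof of Lemma~\ref{lemma:general_MEC} simply defers to the proof of Lemma~\ref{lemma:MEC}, which establishes only this one inclusion and then asserts the equality; the realizability issue you flag for the reverse inclusion $\gM(G^0)\subseteq\gG(\gE_{\min}(\psi^0,\xi^0))$ --- that every Markov-equivalent DAG must actually be attained, with exactly the right support, by admissible parameters $(\psi,\xi)\in\Psi\times\Xi$ of the given parametric family --- is a genuine subtlety that the paper does not address either, so your explicit treatment of both inclusions and of this gap makes your argument more careful than, but not different in substance from, the one in the paper.
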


Under the Sparsest Markov representation assumption, all elements in the minimal equivalence class are also in the same Markov equivalence class. It is important to note that the faithfulness assumption is stronger than the Sparsest Markov representation assumption. Specifically, if $P$ is faithful with respect to $G$, then the pair ($G$, $P$) satisfies the Sparsest Markov representation assumption.
\begin{lemma}\label{lemma:SMR}
If a pair $(G,P(X;B,\Omega))$ satisfies Sparsest Markov representation (SMR) (see Definition \ref{def:SMR}), then $\gM(G) =\gG(\gEmin(\Theta) ) =\{G(B):(B,\Omega)\in\gEmin(\Theta)\}$ where $\gM(G)$ is Markov equivalence class of $G$ (see Definition \ref{def:MEC}).
\end{lemma}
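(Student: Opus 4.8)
The plan is to translate the algebraic condition defining $\gE(\Theta)$ into a graphical one: the condition $\Theta_f(B',\Omega')=\Theta$ is essentially a restatement of ``$P$ is Markov to $G(B')$'', so that the edge-minimality built into $\gEmin(\Theta)$ becomes edge-minimality over all Markovian DAGs, which the SMR assumption identifies with $\gM(G)$. Let $P$ denote the distribution $P(X;B,\Omega)$, with covariance $\Sigma$; here $\Theta=\Theta_f(B,\Omega)=\Sigma^{-1}$. The first thing I would set up is a two-way correspondence. Easy direction: every $(B',\Omega')\in\gE(\Theta)$ is a linear-Gaussian SEM with coefficients $B'$ and noise covariance $\Omega'$ inducing $P$, which exhibits $P$ as factorizing along the DAG $G(B')$, so $P$ is Markov to $G(B')$ and $s_{B'}=|E(G(B'))|$. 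Converse direction: if $H$ is any DAG to which $P$ is Markov, the standard Gaussian Markov factorization (regress each coordinate on its parents in $H$; the Markov property forces the residuals to be jointly Gaussian with diagonal covariance) yields a pair $(B_H,\Omega_H)$ with $\Omega_H$ positive diagonal, $\supp(B_H)\subseteq E(H)$, and $\Theta_f(B_H,\Omega_H)=\Theta$, i.e.\ $(B_H,\Omega_H)\in\gE(\Theta)$ with $s_{B_H}\le|E(H)|$.

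Combining the two directions gives
\[
\min_{(B',\Omega')\in\gE(\Theta)} s_{B'}\;=\;\min\bigl\{\,|E(H)|:H\text{ is a DAG and }P\text{ is Markov to }H\,\bigr\},
\]
and this minimum is attained (edge counts are nonnegative integers and $\gE(\Theta)\neq\emptyset$, so $\gEmin(\Theta)\neq\emptyset$). I would then bring in two standard facts about Markov equivalence: Markov-equivalent DAGs have the same skeleton, hence the same edge count; and if $P$ is Markov to $G$, then $\mathcal{I}(H)=\mathcal{I}(G)\subseteq\mathcal{I}(P)$ for every $H\in\gM(G)$, so $P$ is Markov to every such $H$. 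Feeding these into the SMR assumption---which requires $G$ to be Markov to $P$ and every Markovian DAG outside $\gM(G)$ to have strictly more edges than $G$---shows that the edge-minimal Markovian DAGs are exactly the elements of $\gM(G)$, and therefore $\min_{(B',\Omega')\in\gE(\Theta)} s_{B'}=|E(G)|$.

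The two inclusions then follow by bookkeeping. If $(B',\Omega')\in\gEmin(\Theta)$, then $G(B')$ is Markovian with $|E(G(B'))|=s_{B'}=|E(G)|$, so $G(B')\in\gM(G)$, giving $\gG(\gEmin(\Theta))\subseteq\gM(G)$. Conversely, for $H\in\gM(G)$ the converse direction above produces $(B_H,\Omega_H)\in\gE(\Theta)$ with $\supp(B_H)\subseteq E(H)$ and $s_{B_H}\le|E(H)|=|E(G)|$; minimality forces $s_{B_H}=|E(G)|$, so $(B_H,\Omega_H)\in\gEmin(\Theta)$, and $\supp(B_H)\subseteq E(H)$ with equal cardinality gives $G(B_H)=H$, so $\gM(G)\subseteq\gG(\gEmin(\Theta))$. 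The only step that is not pure bookkeeping is the converse direction of the correspondence---extracting a parametrization in $\gE(\Theta)$ supported within $E(H)$ from Markovness of $P$ to $H$, and then upgrading ``$\subseteq E(H)$'' to ``$=E(H)$'' under edge-minimality---which is where the Gaussian structure is genuinely used; the rest is arithmetic on edge counts together with the definition of SMR.
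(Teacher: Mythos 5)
Your proof is correct, and it follows the same basic strategy as the paper's: identify edge-minimality inside $\gE(\Theta)$ with edge-minimality over all DAGs Markovian to $P$, then invoke the SMR assumption to conclude that the sparsest Markovian DAGs are exactly $\gM(G)$. The difference is one of completeness rather than of route: the paper's proof is a two-sentence sketch that asserts ``$G(B)$ is the sparsest'' and only argues (implicitly) the inclusion $\gG(\gEmin(\Theta))\subseteq\gM(G)$, whereas you explicitly supply the two-way correspondence --- in particular the converse construction, where Markovness of the Gaussian $P$ to an arbitrary DAG $H$ yields, via regression on parents, a pair $(B_H,\Omega_H)\in\gE(\Theta)$ with $\supp(B_H)\subseteq E(H)$ --- which is exactly what is needed both to equate the two minima and to prove the reverse inclusion $\gM(G)\subseteq\gG(\gEmin(\Theta))$ that the paper's proof never addresses. (The paper's Appendix characterization of $\gE(\Theta)$ via permutations and Cholesky factorizations covers part of this, but your direct construction from an arbitrary Markovian $H$, together with the cardinality argument upgrading $\supp(B_H)\subseteq E(H)$ to equality under minimality, is the cleaner and more complete way to close the loop.) No gaps.
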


The following lemma provides the formulation for the standardization of $X$, along with its covariance and precision matrices.
\begin{lemma}
    [standardization]\label{lemma:standardization}
    Let $X\sim \N(0,\Sigma)$, $\sigma_i^2\coloneq \Var(X_i)$ and $D\coloneq \diag(\sigma_1,\ldots,\sigma_p)$. Then the standardization of $X$, corresponding covariance matrix and precision matrix can be expressed as 
    \begin{align*}
        X_{\std} \coloneq D^{-1}(X-\E X),\quad \Cov(X_\std) = D^{-1}\Sigma D^{-1}\quad [\Cov(X_\std)]^{-1} = D\Theta D
    \end{align*}
\end{lemma}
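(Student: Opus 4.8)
The plan is a direct computation exploiting linearity of the covariance operator together with the fact that $D$ is a symmetric, invertible, diagonal matrix. First I would observe that since $X \sim \N(0,\Sigma)$ we have $\E X = 0$, so the centering in the definition of $X_{\std}$ is vacuous and $X_{\std} = D^{-1}(X - \E X) = D^{-1}X$. As a sanity check that this is genuinely a standardization, I would note that each coordinate has unit variance: $\Var([X_{\std}]_i) = \sigma_i^{-2}\,\Var(X_i) = \sigma_i^{-2}\sigma_i^2 = 1$.

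Next, for the covariance matrix I would apply the identity $\Cov(AX) = A\,\Cov(X)\,A^\top$ with $A = D^{-1}$, using $D^{-\top} = D^{-1}$ since $D$ is diagonal; this gives $\Cov(X_{\std}) = D^{-1}\Sigma D^{-1}$. Since $X_{\std}$ is an invertible linear image of a Gaussian vector it is again Gaussian, so its law is fully pinned down by this covariance, i.e. $X_{\std} \sim \N(0, D^{-1}\Sigma D^{-1})$.

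Finally, for the precision matrix I would invert the product of three invertible matrices: $[\Cov(X_{\std})]^{-1} = (D^{-1}\Sigma D^{-1})^{-1} = D\,\Sigma^{-1}\,D = D\Theta D$, where the last equality uses $\Theta = \Sigma^{-1}$ by definition. The only point deserving (minimal) attention is the invertibility of $D$, i.e. that $\sigma_i > 0$ for every $i$; this is guaranteed because $\Sigma$ is positive definite (as in the setting of Lemma~\ref{lemma:scale_invariant}), so no coordinate of $X$ is degenerate. Beyond this there is no genuine obstacle: the entire argument is bookkeeping with diagonal matrices, and the ``hard part'', such as it is, amounts to keeping track of $D^\top = D$ and $(D^{-1})^\top = D^{-1}$.
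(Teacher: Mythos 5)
Your proposal is correct and follows essentially the same route as the paper: apply $\Cov(AX)=A\,\Cov(X)\,A^\top$ with $A=D^{-1}$ diagonal (hence symmetric), then invert the triple product to get $D\Theta D$. The extra remarks you add (each coordinate of $X_{\std}$ having unit variance, and $D$ being invertible because $\Sigma$ is positive definite) are harmless and, if anything, slightly more careful than the paper's own one-line computation.
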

The following lemma establishes an useful identity that holds for any adjacency matrix of a DAG, which is used in the derivation of the log-likelihood function for the model in Equation \eqref{eq:linear_uniden}.
\begin{lemma}\label{lemma:logdet_zero}
    If $B$ is adjacency matrix of a DAG, then $\log\det (I-B) = 0$.
\end{lemma}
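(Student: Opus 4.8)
\textbf{Proof plan for Lemma~\ref{lemma:logdet_zero}.} The plan is to exploit the fact that the (weighted) adjacency matrix of a DAG is a nilpotent matrix, so that $I-B$ has all eigenvalues equal to $1$. Concretely, since $G(B)$ is acyclic, there are no directed walks of length $\geq p$ in the graph, and the $(i,j)$ entry of $B^k$ is a (weighted) sum over directed walks of length $k$ from $i$ to $j$; hence $B^p = 0$. A nilpotent matrix has spectrum $\{0\}$, so the spectrum of $I-B$ is $\{1\}$, and therefore $\det(I-B) = \prod_{i=1}^p \big(1 - \lambda_i(B)\big) = 1$, whence $\log\det(I-B) = \log 1 = 0$.

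An equivalent and perhaps more elementary route, which I would present as a remark or alternative, is via a topological ordering. Every DAG admits a topological order of its vertices; letting $P$ denote the corresponding permutation matrix, $PBP^\top$ is strictly upper triangular (zero diagonal, nonzeros only above the diagonal). Then $I - PBP^\top$ is upper triangular with all diagonal entries equal to $1$, so its determinant is $1$. Since conjugation by a permutation matrix preserves the determinant,
\begin{align*}
\det(I - B) = \det\!\big(P^\top (I - PBP^\top) P\big) = \det(P^\top)\det(I - PBP^\top)\det(P) = \det(I - PBP^\top) = 1,
\end{align*}
using $\det(P)\det(P^\top) = \det(PP^\top) = \det(I) = 1$. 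Taking logarithms gives the claim.

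There is essentially no real obstacle here; the lemma is a standard structural fact. The only point requiring (brief) care is the justification that $B$ is nilpotent, i.e.\ that acyclicity of $G(B)$ forces $B^p = 0$ — or, in the alternative argument, the existence of a topological ordering for any DAG. Both are classical and can be invoked directly. I would choose the nilpotency argument for brevity, since it avoids introducing permutation matrices and connects transparently to the determinant through the eigenvalues.
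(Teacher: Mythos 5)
Your proof is correct, and your primary (nilpotency/eigenvalue) argument is essentially the same one used in the reference the paper cites for this lemma: acyclicity of $G(B)$ forces $B^p = 0$, so all eigenvalues of $B$ vanish and $\det(I-B)=1$. The paper itself only points to \citet{ng2020role} rather than writing out the argument, so your self-contained version (and the equally valid topological-ordering alternative) fills that in with no gaps.
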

The following lemma provides a condition under which the optimization problem \eqref{eq:opt_quasi_mcp} is well-defined, ensuring that $\ell(B,\Omega)>-\infty$ for any $(B,\Omega)$.

\begin{lemma}\label{lemma:bounded_score}
    For any $(B,\Omega)$, if $\Omega>0$, then $\Sigma \coloneq \Sigma_f(B,\Omega)$ is positive definite. Moreover, if $X$ is generated by Equation \eqref{eq:linear_uniden} with $(B^0,\Omega^0)$, then $\ell(B,\Omega)>-\infty$ for any $(B,\Omega)$.
\end{lemma}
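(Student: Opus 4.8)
The plan is to prove the two assertions in turn: the first is a congruence-transformation argument, and the second reduces to it together with the fact that a true Gaussian vector has finite second moments.

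First I would establish positive definiteness of $\Sigma_f(B,\Omega)$. Working within model \eqref{eq:linear_uniden}, $G(B)$ is a DAG, so $\det(I-B)=1$ by Lemma~\ref{lemma:logdet_zero}; in particular $I-B$ is invertible, so $\Sigma_f(B,\Omega)=(I-B)^{-\top}\Omega(I-B)^{-1}$ makes sense and is a congruence transform of the diagonal matrix $\Omega=\diag(\omega_1^2,\dots,\omega_p^2)$. For nonzero $v\in\R^p$, writing $w:=(I-B)^{-1}v\neq 0$ gives $v^\top\Sigma_f(B,\Omega)v = w^\top\Omega w = \sum_{i=1}^p\omega_i^2 w_i^2>0$ since each $\omega_i^2>0$; hence $\Sigma_f(B,\Omega)$ is positive definite. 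As a byproduct, $\log\det\Sigma_f(B,\Omega)$ is a finite real number and $\Theta_f(B,\Omega)=\Sigma_f(B,\Omega)^{-1}$ is well-defined and positive definite; and applying the same fact to $(B^0,\Omega^0)$ (valid because $\Omega^0>0$) shows $\Sigma^0:=\Sigma_f(B^0,\Omega^0)$ is a genuine, finite positive definite covariance matrix.

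Next I would show $\ell(B,\Omega)>-\infty$ for any admissible $(B,\Omega)$. From the Gaussian log-density, $\ell(B,\Omega)=-\E[\log P(X;B,\Omega)]=\frac{p}{2}\log(2\pi)+\frac12\log\det\Sigma_f(B,\Omega)+\frac12\E[X^\top\Theta_f(B,\Omega)X]$. Since $X\sim\N(0,\Sigma^0)$ with $\Sigma^0$ finite, the expectation equals $\Tr(\Theta_f(B,\Omega)\Sigma^0)$, a finite nonnegative number (the trace of a product of two positive semidefinite matrices); combined with $\log\det\Sigma_f(B,\Omega)\in\R$, every term is finite, so $\ell(B,\Omega)\in\R$, in particular $\ell(B,\Omega)>-\infty$. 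Equivalently, one can invoke the closed form $\ell(B,\Omega)=\frac12\log\det\Omega-\log\det(I-B)+\frac12\Tr(\Sigma^0\Theta_f(B,\Omega))+\const$ from Appendix~\ref{subsec:logll}, using $\log\det(I-B)=0$ (Lemma~\ref{lemma:logdet_zero}) and $\omega_i^2>0$.

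The main (and essentially only) thing to be careful about is that $\ell$ is an \emph{expectation}, so one must verify integrability of $\log P(X;B,\Omega)$ under the true law $P(X;B^0,\Omega^0)$; this is exactly where finiteness of the second moments of $X$—equivalently, $\Sigma^0$ being an honest positive definite matrix, which is the payoff of the first part applied to $(B^0,\Omega^0)$—is used. A minor point is that the expression $\Sigma_f(B,\Omega)$ only makes sense once $I-B$ is invertible, which is why the DAG structure is invoked up front.
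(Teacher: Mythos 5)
Your proof is correct for the statement as literally written, and the first half (positive definiteness via the congruence $v^\top\Sigma_f(B,\Omega)v=w^\top\Omega w$ with $w=(I-B)^{-1}v$) is essentially identical to the paper's. The second half, however, takes a genuinely different and strictly weaker route. You argue pointwise: for each fixed $(B,\Omega)$ with $\Omega>0$, every term of $\ell(B,\Omega)=\tfrac12\log\det\Omega+\tfrac12\Tr(\Sigma^0\Theta_f(B,\Omega))+\const$ is a finite real number, hence $\ell(B,\Omega)\in\R$. The paper instead profiles out $\Omega$ — using $\ell(B,\Omega)\geq\ell(B,\Omega_f(B))=\tfrac12\sum_j\log\bigl((e_j-B_j)^\top\Sigma^0(e_j-B_j)\bigr)+\const$ — and then bounds each summand below by $\log\bigl(\Lambda_{\min}(\Sigma^0)\|e_j-B_j\|_2^2\bigr)\geq\log\Lambda_{\min}(\Sigma^0)$, where the last step uses $B_{jj}=0$ so that $\|e_j-B_j\|_2\geq 1$. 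This yields a \emph{uniform} lower bound on $\ell$ over all admissible $(B,\Omega)$, not merely finiteness at each point. That distinction matters for how the lemma is used: in the proof of Theorem~\ref{thm:main} the lemma is invoked to conclude that the optimization problem \eqref{eq:opt_quasi_mcp} is well-defined, i.e.\ that the objective is bounded below so that global minimizers are meaningful. Pointwise finiteness alone does not rule out $\inf_{(B,\Omega)}\ell(B,\Omega)=-\infty$ along a degenerating sequence (e.g.\ some $\omega_j^2\to 0$, where $\log\det\Omega\to-\infty$ while the trace term blows up; the profiled bound is exactly what shows these two effects cannot conspire to drive $\ell$ to $-\infty$). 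So while your argument proves the displayed claim, you should either strengthen it to a uniform bound (your trace term $\Tr(\Sigma^0\Theta_f(B,\Omega))\geq\Lambda_{\min}(\Sigma^0)\Tr(\Theta_f(B,\Omega))$ combined with an AM--GM-type bound against $\log\det\Omega$ would also work) or note explicitly that the downstream application needs more than what you have shown.
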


The following lemma is used in the proof of Theorem \ref{thm:main}. It justifies that the loss of every element in $A_3$ is strictly greater than the loss of the ground truth, i.e., $(\Bt, \Omegat)$.
\begin{lemma}\label{lemma:positive_gap}
Under the same setting and notation as in the proof of Theorem \ref{thm:main}, see Section \ref{subsec:main_theorem}. If for any $(\bar{B},\bar{\Omega})\in \gE(\Theta^0)$, it holds that $\text{dist}(\bar{B},A_3)>0$, there exists $\alpha>0$ such that $\ell(B,\Omega)-\ell(B^0,\Omega^0)>\alpha$ for all $(B,\Omega)\in A_3$.
\end{lemma}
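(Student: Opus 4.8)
The strategy is to show that $\inf_{(B,\Omega)\in A_3}\big(\ell(B,\Omega)-\ell(B^0,\Omega^0)\big)>0$, by combining three ingredients: (i) this difference vanishes \emph{exactly} on the equivalence class $\gE(\Theta^0)$; (ii) the population NLL $\ell$ is coercive, so its sublevel sets are compact; and (iii) the hypothesis that $A_3$ stays at positive distance from each point of the finite set $\gE(\Theta^0)$. For Step (i): since $\rmX$ is generated from $(B^0,\Omega^0)$, for any feasible pair (so $h(B)=0$, $\Omega>0$) Lemma~\ref{lemma:logdet_zero} gives $\log\det(I-B)=0$, and writing $\Theta=\Theta_f(B,\Omega)$ and $\Sigma^0=\Sigma_f(B^0,\Omega^0)$ we get $\ell(B,\Omega)=\tfrac12\big(\Tr(\Sigma^0\Theta)-\log\det\Theta\big)+\const$. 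Hence
\[
\ell(B,\Omega)-\ell(B^0,\Omega^0)=\KL\big(\N(0,\Sigma^0)\,\|\,\N(0,\Theta^{-1})\big)\ge 0,
\]
and, because $\Theta\mapsto\Tr(\Sigma^0\Theta)-\log\det\Theta$ is strictly convex on positive definite matrices with unique minimizer $(\Sigma^0)^{-1}=\Theta^0$, equality holds iff $\Theta_f(B,\Omega)=\Theta^0$, i.e. iff $(B,\Omega)\in\gE(\Theta^0)$. Since $A_3$ is disjoint from $\gE(\Theta^0)$ by hypothesis, this already gives $\ell(B,\Omega)>\ell(B^0,\Omega^0)$ pointwise on $A_3$; the work is to make this uniform.

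\textbf{Compactness and a minimizing sequence.} Recall from the setup of the proof of Theorem~\ref{thm:main} that $\ell$ is continuous on the feasible set and coercive there: it diverges to $+\infty$ as $\|(B,\Omega)\|\to\infty$ or as some $\omega_i^2\to 0^+$, which is visible from the formula above since the $\Tr(\Sigma^0\Theta)$ term dominates the $-\log\det\Theta$ term; hence each sublevel set $\{\ell\le c\}$ is compact and bounded away from the boundary $\{\Omega\to 0\}$. Put $c_0=\ell(B^0,\Omega^0)+1$; outside $\{\ell\le c_0\}$ we already have $\ell-\ell(B^0,\Omega^0)>1$, so it remains to bound the difference below on $A_3\cap\{\ell\le c_0\}$. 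Suppose for contradiction that $\inf_{(B,\Omega)\in A_3}\big(\ell(B,\Omega)-\ell(B^0,\Omega^0)\big)=0$, and take $(B_k,\Omega_k)\in A_3$ with $\ell(B_k,\Omega_k)\to\ell(B^0,\Omega^0)$. Eventually $(B_k,\Omega_k)\in\{\ell\le c_0\}$, which is compact, so along a subsequence $(B_k,\Omega_k)\to(B^*,\Omega^*)$ with $\Omega^*>0$; by continuity $\ell(B^*,\Omega^*)=\ell(B^0,\Omega^0)$, hence $(B^*,\Omega^*)\in\gE(\Theta^0)$ by the equality characterization above. But $B_k\to B^*$ with $(B_k,\Omega_k)\in A_3$ forces $\text{dist}(B^*,A_3)=0$, contradicting the hypothesis that $\text{dist}(\bar B,A_3)>0$ for every $(\bar B,\bar\Omega)\in\gE(\Theta^0)$. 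Therefore $\alpha_0:=\inf_{(B,\Omega)\in A_3}\big(\ell(B,\Omega)-\ell(B^0,\Omega^0)\big)>0$, and any $\alpha$ with $0<\alpha<\min(\alpha_0,1)$ satisfies the claim.

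\textbf{Main obstacle.} The only delicate point is the step above: $A_3$ need not be bounded or closed, so one cannot directly extract a minimizer of $\ell$ on it. The resolution is that a minimizing sequence has a convergent subsequence by coercivity of $\ell$, and its limit lands simultaneously in $\gE(\Theta^0)$ (by continuity together with the characterization of the zero set of $\ell-\ell(B^0,\Omega^0)$) and in $\overline{A_3}$, which is incompatible with the distance hypothesis --- here it is essential that $\gE(\Theta^0)$ is finite, hence closed, so the finitely many per-point distance bounds persist under limits. Establishing coercivity of the Gaussian population NLL (equivalently, compactness of its sublevel sets on the feasible region) is the one genuinely computational input; the rest is soft analysis.
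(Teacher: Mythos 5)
Your proof is correct and follows essentially the same route as the paper's: both arguments rest on (a) the fact that the population loss gap $\ell-\ell(B^0,\Omega^0)$ is nonnegative and vanishes exactly on $\gE(\Theta^0)$, (b) compactness of a sublevel set of $\ell$ (the paper proves boundedness of $A_4=\{B\mid\ell(B)\le\ell(0)\}$ via the $\Lambda_{\min}(\Sigma^0)$ lower bound together with $\|e_j-B_j\|\ge 1$, which is exactly the computation needed to back up your coercivity claim in the $\|B\|\to\infty$ direction, not just the trace-dominates-log-det argument for $\omega_j\to 0$), and (c) the positive-distance hypothesis to keep the relevant compact set disjoint from $\gE(\Theta^0)$. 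The only differences are presentational: the paper profiles out $\Omega$ and applies Weierstrass on an explicit compact set $A_4\cap A_5$, whereas you keep $(B,\Omega)$ jointly and extract a convergent minimizing subsequence to reach a contradiction with $\mathrm{dist}(\bar B,A_3)>0$.
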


\section{Detailed Proofs}
\label{app:proofs}
\subsection{Proof of Theorem \ref{thm:main}}\label{subsec:main_theorem}
\begin{proof}
{It suffices to 
consider the population case,} i.e., $\ell_n(B,\Omega)$ is replaced by its population counterpart $\ell(B,\Omega)$. By Lemma \ref{lemma:bounded_score}, we have
\begin{align*}
\ell(B, \Omega) > -\infty
\end{align*}
Also, $\quasimcp(B) \geq 0$ for any $B$. Consequently, optimization problem \eqref{eq:opt_quasi_mcp} is well-defined.

By convention, we assume that $(B^0, \Omega^0) \in \gEmin(\Thetat)$. Now, consider the case where $\quasimcp(\Bt) = 0$, which is equivalent to $\Bt = 0$, since $\ell(B, \Omega) \geq \ell(\Bt, \Omegat)$ and $\quasimcp(B) \geq \quasimcp(B^0)$ for any $B$. Therefore, for all $\lambda > 0$ and $\delta > 0$, $\Bt$ is the unique optimal solution to optimization problem \eqref{eq:opt_quasi_mcp}, proving the conclusion.

In the subsequent proof, we assume that $|\gEmin(\Thetat)| = 1$, that is, $\gEmin(\Thetat) = \{(B^0, \Omega^0)\}$. This assumption simplifies the proof because any element of $\gEmin(\Thetat)$ is indistinguishable based on the value of $\ell(B, \Omega)$ and the penalty for the chosen $(\delta, \lambda)$, as shown below. Our goal is to identify one element via optimization problem \eqref{eq:opt_quasi_mcp}, which significantly simplifies the argument.

First, let us define
\begin{align*}
    \delta_0 = & \frac{\tau}{1 + \Delta} \qquad \text{where } \tau \coloneq \min_{(B, \Omega) \in \gE(\Thetat)} \min_{\{(i, j) \mid B_{ij} \neq 0\}} \Abs{B_{ij}} \overset{(a)}{=} \min_{\pi} \min_{\{(i, j) \mid [\widetilde{B}^0(\pi)]_{ij} \neq 0\}} \Abs{[\widetilde{B}^0(\pi)]_{ij}}
\end{align*}
with any $\Delta > 0$. $(a)$ is due to the fact that each element in equivalence class $\gE(\Thetat)$ is one-to-one associated with $\widetilde{B}^0(\pi)$, see Section \ref{subsec:EC} or \cite{aragam2015concave} for detailed discussion. Then, for any $\lambda > 0$ and $0 < \delta < \delta_0$, consider the set $A_1 = \{(B, \Omega) \mid p_{\lambda, \delta}(B) = p_{\lambda, \delta}(B^0)\}$. For any $(B, \Omega) \in A_1$, we have $(B, \Omega) \notin \gE(\Thetat) \setminus \{(\Bt, \Omegat)\}$. This follows from the fact that
\begin{align*}
    \quasimcp(B) = \frac{\lambda \delta}{2} s_B > \frac{\lambda \delta}{2} s_{\Bt} = \quasimcp(\Bt) \qquad \forall (B, \Omega) \in \gE(\Thetat) \setminus \{(\Bt, \Omegat)\}.
\end{align*}

As a consequence, this implies that $\ell(\Bt, \Omegat) < \ell(B, \Omega), \forall (B, \Omega) \in A_1$. Therefore,
\begin{align*}
    \ell(\Bt, \Omegat) + \quasimcp(\Bt) < \ell(B, \Omega) + \quasimcp(B) \qquad \forall (B, \Omega) \in A_1.
\end{align*}
Next, we define $A_2 = \{(B, \Omega) \mid p_{\lambda, \delta}(B) > p_{\lambda, \delta}(B^0)\}$. Since $\ell(B^0, \Omega^0) \leq \ell(B, \Omega)$, it follows that for all $(B, \Omega) \in A_2$, the following inequality holds:
\begin{align*}
    \ell(\Bt, \Omega^0) + \quasimcp(\Bt) < \ell(B, \Omega) + \quasimcp(B) \qquad \forall (B, \Omega) \in A_2.
\end{align*}

Therefore, we need to examine the set $A_3 = \{(B, \Omega) \mid p_{\lambda, \delta}(B) < p_{\lambda, \delta}(B^0)\}$. For $(B^0, \Omega^0)$ to achieve the minimum value of the score function, it is crucial that the following condition is satisfied:
\begin{align*}
    \ell(B^0, \Omega^0) + p_{\lambda, \delta}(B^0) < \ell(B, \Omega) + p_{\lambda, \delta}(B) \qquad \forall (B, \Omega) \in A_3.
\end{align*}
This condition guarantees that the ground truth parameters $(B^0, \Omega^0)$ correspond to the optimal solution by comparing their score with any other parameters in the subset $A_3$.

It is important to note that $\quasimcp(t) = \lambda p_{1, \delta}(t), \forall t$. Thus, a necessary and sufficient condition for this to hold is:
\begin{align*}
    \lambda < \min_{(B, \Omega) \in A_3} \frac{\ell(B, \Omega) - \ell(B^0, \Omega^0)}{p_{1, \delta}(B^0) - p_{1, \delta}(B)}.
\end{align*}
Note that for all $(B, \Omega) \in A_3$, we have $p_{1, \delta}(B^0) - p_{1, \delta}(B) \leq \frac{\delta}{2} s_0$, with equality achieved when $B = 0$. Therefore, the denominator on the RHS cannot be arbitrarily large. Moreover, since $(B, \Omega) \in A_3$, it follows that $(B, \Omega) \notin \gE(\Theta^0)$, as $A_3 \cap \gE(\Theta^0) = \emptyset$.

We define the distance from $\Bar{B}$ to the set $A_3$ as:
\begin{align*}
    \text{dist}(\Bar{B}, A_3) = \inf_{(B, \Omega) \in A_3} \|B - \bar{B}\|_2.
\end{align*}
For all $(\bar{B}, \bar{\Omega}) \in \gE(\Thetat)$, it turns out that $\text{dist}(\bar{B}, A_3)$ must be positive due to the design of $\delta_0$, giving:
\begin{align*}
    \text{dist}(\bar{B}, A_3) > & \min_{(B, \Omega) \in \gE(\Thetat)} \min_{\{(i, j) \mid B_{ij} \neq 0\}} \Abs{B_{ij}} - \delta_0 \\
    = & \tau - \frac{\tau}{1 + \Delta} = \frac{\Delta}{1 + \Delta} \tau > 0.
\end{align*}

By Lemma \ref{lemma:positive_gap}, there exists some $\alpha > 0$ such that $\ell(B, \Omega) - \ell(B^0, \Omega^0) > \alpha$ for all $(B, \Omega) \in A_3$. Consequently, we have:
\[
\inf_{(B, \Omega) \in A_3} \frac{\ell(B, \Omega) - \ell(B^0, \Omega^0)}{p_{1, \delta}(B^0) - p_{1, \delta}(B)} > 0.
\]
Thus, we can define
\begin{align*}
    \lambda_0 = \inf_{(B, \Omega) \in A_3} \frac{\ell(B, \Omega) - \ell(B^0, \Omega^0)}{p_{1, \delta}(B^0) - p_{1, \delta}(B)} > 0.
\end{align*}
In summary, for all $0 < \lambda < \lambda_0$ and $0 < \delta < \delta_0$, for any $(\widehat{B}, \widehat{\Omega}) \in \gEmin(\Thetat)$, and for all $(B, \Omega) \notin \gEmin(\Thetat)$, the following holds:
\begin{align*}
    \ell(\widehat{B}, \widehat{\Omega}) + \quasimcp(\widehat{B}) < \ell(B, \Omega) + \quasimcp(B).
\end{align*}
This concludes the proof.
\end{proof}

\subsection{Proof of Theorem \ref{thm:MEC}}
\begin{proof}
Here, $\Theta_f(B^0, \Omega^0) = \Theta^0$. From Theorem \ref{thm:main}, we know that when $n\rightarrow \infty$
\begin{align*}
    \gO_{n, \lambda, \delta} = \gEmin(\Theta^0).
\end{align*}
Given the additional assumption that $p(X)$ is faithful with respect to $G^0 \coloneq G(B^0)$, by Lemma \ref{lemma:MEC}, we have
\begin{align*}
    \gM(G^0) = \gEmin(\Thetat) = \{G(B) : (B, \Omega) \in \gE_{\min}(\Theta^0)\}.
\end{align*}
Note that
\begin{align*}
    \{G(B) : (B, \Omega) \in \gE_{\min}(\Theta^0)\} = \{G(B) : (B, \Omega) \in \gO_{n, \lambda, \delta}\}.
\end{align*}
Thus, we conclude that
\[
    \gM(G^0) = \gO_{n, \lambda, \delta}.\qquad \text{as }n\rightarrow \infty
\]
This completes the proof.
\end{proof}
\subsection{Proof of Theorem \ref{thm:sample_EC}}
\begin{proof}
In this proof, we use the notation introduced in Section \ref{subsec:standardization}. Note that when $\rmX$ is used in \eqref{eq:opt_quasi_mcp}, we essentially compute the sample covariance matrix $\widehat{\Sigma}$ based on $\rmX$ as follows:
\begin{align*}
    \widehat{\Sigma} = \frac{1}{n} \left[\rmX - \mathbf{1}_n \cdot (\widehat{\mu}_1, \ldots, \widehat{\mu}_p)\right]^\top \left[\rmX - \mathbf{1}_n \cdot (\widehat{\mu}_1, \ldots, \widehat{\mu}_p)\right]
\end{align*}
and plug it into the negative sample log-likelihood function. The same procedure applies to $\rmZ$:
\begin{align*}
    \widehat{\Sigma}_{\text{std}} &= \frac{1}{n} D^{-1} \left[\rmX - \mathbf{1}_n (\widehat{\mu}_1, \ldots, \widehat{\mu}_p)\right]^\top \left[\rmX - \mathbf{1}_n (\widehat{\mu}_1, \ldots, \widehat{\mu}_p)\right] D^{-1} \\
    &= D^{-1} \widehat{\Sigma} D^{-1}.
\end{align*}
Denote $\widehat{\Theta} = (\widehat{\Sigma})^{-1}$ and $\widehat{\Theta}_{\text{std}} = (\widehat{\Sigma}_{\text{std}})^{-1} = D \widehat{\Theta} D$. For $\widehat{\Theta}$, by applying Theorem \ref{thm:main}, there exist $\lambda^{\text{raw},n}_0 > 0$ and $\delta^{\text{raw},n}_0 > 0$ such that for any $0 < \lambda < \lambda^{\text{raw},n}_0$ and $0 < \delta < \delta^{\text{raw},n}_0$, we have $\mathcal{O}_{n,\lambda,\delta}(\rmX) = \gEmin(\widehat{\Theta})$. For $D \widehat{\Theta} D$, we apply Theorem \ref{thm:main} again, and there exist $\lambda^{\text{std},n}_0 > 0$ and $\delta^{\text{std},n}_0 > 0$ such that for any $0 < \lambda < \lambda^{\text{std},n}_0$ and $0 < \delta < \delta^{\text{std},n}_0$, we have $\mathcal{O}_{n,\lambda,\delta}(\rmZ) = \gEmin(D \widehat{\Theta} D)$.

We can select $\delta_0 = \min\{\delta^{\text{raw},n}_0, \delta^{\text{std},n}_0\}$ and $\lambda_0 = \min\{\lambda^{\text{raw},n}_0, \lambda^{\text{std},n}_0\}$ in optimization \eqref{eq:opt_quasi_mcp} to ensure that $\mathcal{O}_{n,\lambda,\delta}(\rmX) = \gEmin(\widehat{\Theta})$ and $\mathcal{O}_{n,\lambda,\delta}(\rmZ) = \gEmin(D \widehat{\Theta} D)$ hold simultaneously.

By applying Lemma \ref{lemma:scale_invariant}, we conclude that:
\begin{align*}
    \gG(\mathcal{O}_{n,\lambda,\delta}(\rmX)) = \gG(\gEmin(\widehat{\Theta})) = \gG(\gEmin(D \widehat{\Theta} D)) = \gG(\mathcal{O}_{n,\lambda,\delta}(\rmZ)).
\end{align*}

Furthermore, as $n \rightarrow \infty$, we have $\widehat{\Sigma} \to \Sigma$ and $\widehat{\Theta} \to \Theta$. Therefore, $\gEmin(\widehat{\Theta}) \to \gEmin(\Theta)$, and $\gEmin(\widehat{\Theta}_{\text{std}}) \to \gEmin(\Theta_{\text{std}})$ as $n \rightarrow \infty$. Thus,
\begin{align*}
\gG(\mathcal{O}_{n,\lambda,\delta}(\rmX)) = \gG(\gEmin(\Theta)) = \gG(\mathcal{O}_{n,\lambda,\delta}(\rmZ)).\qquad \text{where } n \rightarrow \infty
\end{align*}
Note that we use $n\rightarrow\infty$ to indicate we consider the result in population level.
\end{proof}

\subsection{Proof of Theorem \ref{thm:general_main_theorem}}
\begin{proof}
This proof shares many similarities with the proof of Theorem \ref{thm:main}. First, when $n \rightarrow \infty$, we consider the result at the population level. Thus, $\ell_n(\psi, \xi) \to \ell(\psi, \xi)$, and we will focus on $\ell(\psi, \xi)$ in the following. As a result, we only work with $\ell(\psi, \xi)$ instead of $\ell_n(\psi, \xi)$.

By convention, we can always assume that $(\psi^0, \xi^0) \in \gEmin(\psi^0, \xi^0)$.

Now, consider the case where $\quasimcp(B(\psi^0)) = 0$, which implies that $B(\psi^0) = 0$. Since $X \sim P(X, \psi^0, \xi^0)$, we have
\[
\ell(\psi^0, \xi^0) \leq \ell(\psi, \xi),
\]
and thus
\begin{align*}
    \ell(\psi^0, \xi^0) + \quasimcp(B(\psi^0)) \leq \ell(\psi, \xi) + \quasimcp(B(\psi)) \qquad \forall (\psi, \xi) \in \Psi \times \Xi, \forall \lambda > 0, \forall \delta > 0.
\end{align*}
Therefore, $(\psi^0, \xi^0)$ is the optimal solution to optimization \eqref{eq:opt_quasi_mcp}.

As we iterate, we can assume that $|\gEmin(\psi^0, \xi^0)| = 1$, meaning $\gEmin(\psi^0, \xi^0) = \{(\psi^0, \xi^0)\}$. This assumption simplifies the proof because any element in $\gEmin(\psi^0, \xi^0)$ is indistinguishable based on the value of $\ell(\psi, \xi)$ and the penalty for the chosen parameters $(\delta, \lambda)$. Our goal is to find one element by solving the optimization problem \eqref{eq:general_opt_quasi_mcp}, and this assumption simplifies the argument significantly.

First, we define 
\begin{align*}
    \delta_0 = \frac{\tau}{1 + \Delta} \qquad \tau \coloneq \min_{(\psi, \xi) \in \gE(\psi^0, \xi^0)} \min_{\{(i,j) : B(\psi)_{ij} \neq 0\}} |B(\psi)|_{ij}.
\end{align*}
It is important to note that, under Assumption \ref{assumption:finite} (1), since $|\gE(\psi^0, \xi^0)|$ is finite, we have $\tau > 0$.

Then, for any $\lambda > 0$ and $0 < \delta < \delta_0$, consider the set 
\[
A_1 = \{(\psi, \xi) \mid p_{\lambda, \delta}(B(\psi)) = p_{\lambda, \delta}(B(\psi^0))\}.
\]
It is clear that for any $(\psi, \xi) \in A_1$, we must have $(\psi, \xi) \notin \gE(\psi^0, \xi^0)$, since for any $(\psi, \xi) \in \gE(\psi^0, \xi^0)$, we have $p_{\lambda, \delta}(B(\psi)) > p_{\lambda, \delta}(B(\psi^0))$. As a result,
\begin{align*}
    \ell(\psi^0, \xi^0) < \ell(\psi, \xi) \qquad \forall (\psi, \xi) \in A_1.
\end{align*}
Therefore,
\begin{align*}
    \ell(\psi^0, \xi^0) + \quasimcp(B(\psi^0)) < \ell(\psi, \xi) + \quasimcp(B(\psi)) \qquad \forall (\psi, \xi) \in A_1.
\end{align*}
Next, we consider the set $A_2 = \{(\psi, \xi) \mid p_{\lambda, \delta}(B(\psi)) > p_{\lambda, \delta}(B(\psi^0))\}$, and we know that $\ell(\psi^0, \xi^0) \leq \ell(\psi, \xi)$. Therefore,
\begin{align*}
    \ell(\psi^0, \xi^0) + p_{\lambda, \delta}(B(\psi^0)) < \ell(\psi, \xi) + p_{\lambda, \delta}(B(\psi)) \qquad \forall (\psi, \xi) \in A_2.
\end{align*}
Consequently, we need to check $A_3 = \{(\psi, \xi) \mid p_{\lambda, \delta}(B(\psi)) < p_{\lambda, \delta}(B(\psi^0))\}$. For $(\psi^0, \xi^0)$ to be the minimizer of \eqref{eq:general_opt_quasi_mcp}, we require that the following condition holds:
\begin{align*}
    \ell(\psi^0, \xi^0) + p_{\lambda, \delta}(B(\psi^0)) < \ell(\psi, \xi) + p_{\lambda, \delta}(B(\psi)) \qquad \forall (\psi, \xi) \in A_3.
\end{align*}
This condition ensures that the ground truth parameters $(\psi^0, \xi^0)$ correspond to the optimal solution by comparing their score with that of any other parameters in the subset $A_3$.

It is also worth noting that $\quasimcp(t) = \lambda p_{1, \delta}(t)$ for all $t$. Therefore, a necessary and sufficient condition for this to hold is:
\begin{align*}
    \lambda < \inf_{(\psi, \xi) \in A_3} \frac{\ell(\psi, \xi) - \ell(\psi^0, \xi^0)}{p_{1, \delta}(B(\psi^0)) - p_{1, \delta}(B(\psi))}.
\end{align*}

Note that for $(\psi, \xi) \in A_3$, we have $p_{1, \delta}(B(\psi^0)) - p_{1, \delta}(B(\psi)) \leq \frac{\delta}{2} s_{B(\psi^0)}$. Therefore, the denominator on the RHS cannot be arbitrarily large. Moreover, for any $0 < \delta < \delta_0$, the following holds:
\begin{align*}
    \frac{\Delta}{1 + \Delta} \tau \leq \|B(\psi^0) - B(\psi)\|_2 \leq L \|\psi^0 - \psi\|_2 \qquad \forall (\psi, \xi) \in A_3.
\end{align*}
The second inequality follows from Assumption \ref{assumption:finite} (b). As a consequence,
\begin{align*}
    \|(\psi, \xi) - (\psi^0, \xi^0)\|_2 \geq \|\psi^0 - \psi\|_2 \geq \frac{\tau\Delta }{L(1 + \Delta)}.
\end{align*}
Thus, we obtain
\begin{align*}
    A_3 \subseteq \{(\psi, \xi) \mid \|(\psi, \xi) - (\psi^0, \xi^0)\|_2 \geq \frac{\Delta \tau}{L(1 + \Delta)}\} = A_4.
\end{align*}

First, note that $A_3$ is a nonempty set. Otherwise, the conclusion would hold immediately. Let us select any $(\bar{\psi}, \bar{\xi}) \in A_3 \neq \emptyset$, and define $A_5 = \{(\psi, \xi) \mid \ell(\psi, \xi) \leq \ell(\bar{\psi}, \bar{\xi})\}$. Then,
\begin{align*}
    \inf_{(\psi, \xi) \in A_3} \ell(\psi, \xi) = \inf_{(\psi, \xi) \in A_3 \cap A_5} \ell(\psi, \xi) \geq \inf_{(\psi, \xi) \in A_4 \cap A_5} \ell(\psi, \xi).
\end{align*}
It is important to note that $A_4 \cap A_5$ is nonempty, since $(\bar{\psi}, \bar{\xi}) \in A_4 \cap A_5$. By Assumption \ref{assumption:bounded_leve_set} and the properties of $\ell(\psi, \xi)$, we know that $A_5$ is a bounded and closed set, and $A_4$ is a closed set. Consequently, $A_4 \cap A_5$ is compact. Furthermore, for all $(\psi, \xi) \in \gE(\psi^0, \xi^0)$, we have $(\psi, \xi) \notin A_4 \cap A_5$. All of this leads to the following conclusion:
\begin{align*}
    \inf_{(\psi, \xi) \in A_3} \ell(\psi, \xi) \geq \min_{(\psi, \xi) \in A_4 \cap A_5} \ell(\psi, \xi) = \inf_{(\psi, \xi) \in A_4 \cap A_5} \ell(\psi, \xi) > \ell(\psi^0, \xi^0).
\end{align*}
As a result, we define $\lambda_0$ as follows:
\begin{align*}
    \lambda_0 &= \inf_{(\psi, \xi) \in A_3} \frac{\ell(\psi, \xi) - \ell(\psi^0, \xi^0)}{p_{1, \delta}(B(\psi^0)) - p_{1, \delta}(B(\psi))} > 0.
    \qedhere
\end{align*}
\end{proof}
\subsection{Proof of Theorem \ref{thm:MEC_general}}
\begin{proof}
    The proof is combination of Lemma \ref{lemma:general_MEC} and Theorem \ref{thm:general_main_theorem}, similar to Proof of Theorem \ref{thm:MEC}.
\end{proof}

\subsection{Proof of Lemma \ref{lemma:MEC}}
Before proving the result, we introduce the definitions of the Sparsest Markov representation assumption and restricted faithfulness, along with a few useful theorems.
\begin{definition}[Sparsest Markov representation \citep{raskutti2018learning}]\label{def:SMR}
    A pair $(G^0,P)$ satisfies the \textit{Sparsest Markov Representation (SMR)} assumption if $(G^0,P)$ satisfies the Markov property and $|G|>|G^0|$ for every DAG $G$ such that $(G,P)$ satisfies the Markov property and $G\not \in \gM(G^0)$.
\end{definition}
In other words, the SMR assumption asserts that the true DAG $G^0$ is the (unique up to Markov equivalence) sparsest DAG satisfying the Markov property.
\begin{definition}[Restricted-faithfulness \citep{ramsey2012adjacency,raskutti2018learning}] 
    A distribution $P$ satisfies the restricted-faithfulness assumption with respect to a DAG $G$ if it is Markov to $G$ and following two conditions hold:
    \begin{itemize}
        \item \textbf{Adjacency-faithfulness:} for all $(j,k)\in E$ and all subsets $S\subset [p]\backslash \{j,k\}$ it holds that $X_j\not \perp\!\!\!\!\perp X_k\mid X_S$
        \item \textbf{Orientation-faithfulness:} for all triples $(j,k,l)$ with skeleton $j-l-k$ and all subsets $S\subset [p]\backslash \{j,k\}$ such that $j$ is d-connected to $k$ given $S$ it holds that $X_j\not \perp\!\!\!\!\perp X_k\mid X_S$
    \end{itemize}
\end{definition}
\begin{theorem}[\cite{ramsey2012adjacency}]\label{thm:faithful_to_restricted}
    If a distribution $P$ is faithful to $G$, then such distribution $P$ also satisfies the restricted-faithfulness assumption with respect to $G$.
\end{theorem}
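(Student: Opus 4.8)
The plan is to unwind the definitions and observe that faithfulness---understood here in the strong sense $\mathcal{I}(P)=\mathcal{I}(G)$, per the paper's stated convention---directly converts every $d$-connection statement in $G$ into a statistical dependence in $P$. Since both sub-conditions of restricted-faithfulness are assertions that a certain pair of variables is dependent given a certain conditioning set, each will follow once I exhibit the corresponding $d$-connection in the graph. The argument thus splits into verifying the Markov property and the two dependence conditions separately.

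First I would dispatch the Markov requirement: faithfulness gives $\mathcal{I}(P)=\mathcal{I}(G)$, hence in particular $\mathcal{I}(G)\subseteq\mathcal{I}(P)$, which is exactly the Markov property demanded in the definition of restricted-faithfulness. Next, for adjacency-faithfulness, I would invoke the standard graph-theoretic fact that two vertices $j,k$ joined by an edge in a DAG cannot be $d$-separated by any conditioning set $S\subseteq[p]\setminus\{j,k\}$: the single edge between them is an active path under every such $S$, since a path with no intermediate vertices has nothing that conditioning could block. Consequently the statement ``$j$ is $d$-separated from $k$ given $S$'' lies outside $\mathcal{I}(G)$ for all such $S$, and the equality $\mathcal{I}(P)=\mathcal{I}(G)$ then forces $X_j\not\perp\!\!\!\!\perp X_k\mid X_S$, as required.

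For orientation-faithfulness the argument is even more immediate, because the hypothesis already supplies the $d$-connection: the definition restricts attention to triples $j-l-k$ and to sets $S$ for which $j$ is $d$-connected to $k$ given $S$. By definition of $\mathcal{I}(G)$ this means the separation statement for $(j,k\mid S)$ is \emph{not} in $\mathcal{I}(G)$, and faithfulness again yields that $X_j\perp\!\!\!\!\perp X_k\mid X_S$ is not in $\mathcal{I}(P)$, i.e. $X_j\not\perp\!\!\!\!\perp X_k\mid X_S$. Combining the Markov property with these two dependence conditions gives restricted-faithfulness.

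I expect no serious obstacle here: faithfulness is precisely the hypothesis that equates conditional independence with $d$-separation, so the only genuine content is the graph-theoretic lemma used in the adjacency step, namely that adjacent nodes admit no $d$-separator. I would state and use that lemma but not reprove it, citing the standard treatments already referenced (e.g. \cite{spirtes2000,pearl2009causality,peters2017elements}). The one point requiring care is to respect the paper's convention that ``faithful'' denotes the two-sided equality $\mathcal{I}(P)=\mathcal{I}(G)$ rather than a single inclusion, since it is exactly the direction ``every dependence implied by $G$ appears in $P$'' (equivalently $\mathcal{I}(P)\subseteq\mathcal{I}(G)$) that powers both the adjacency and orientation steps.
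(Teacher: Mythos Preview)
Your proposal is correct. The paper does not supply its own proof of this theorem: it is stated with attribution to \cite{ramsey2012adjacency} and then invoked as a black box in the proof of Lemma~\ref{lemma:MEC}. Your argument, by contrast, gives a self-contained elementary derivation by unwinding the definitions, correctly observing that under the paper's convention $\mathcal{I}(P)=\mathcal{I}(G)$ every $d$-connection in $G$ translates directly into a dependence in $P$. The only nontrivial ingredient is the fact that adjacent vertices admit no $d$-separator, which you rightly flag and cite rather than reprove. So you have supplied strictly more detail than the paper; there is nothing to compare against beyond noting that the paper defers entirely to the cited reference.
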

\begin{theorem}[Theorem 2.4 in \cite{raskutti2018learning}]\label{thm:restricted_faithfulness}
Let $(G, {P})$ satisfy the Markov property. Then the restricted-faithfulness assumption implies the SMR assumption.
\end{theorem}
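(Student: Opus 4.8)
The plan is to reduce the SMR conclusion (Definition~\ref{def:SMR}) to the classical characterization that two DAGs are Markov equivalent if and only if they share the same skeleton and the same set of v-structures (unshielded colliders). Concretely, fix any DAG $G'$ with $(G',P)$ Markov and $G'\notin\gM(G)$; the goal is $|G'|>|G|$. I would prove the contrapositive for the edge count: assume $|G'|\le|G|$ and derive $G'\in\gM(G)$, contradicting $G'\notin\gM(G)$. This splits into a skeleton step and a v-structure step.

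Step 1 (skeletons). Using adjacency-faithfulness of $G$ together with the Markov property of $G'$, I would show every adjacency of $G$ is an adjacency of $G'$. Indeed, if $j\sim k$ in $G$ but $j\not\sim k$ in $G'$, then since non-adjacent vertices in a DAG are always d-separated by some set $S$, the Markov property of $G'$ gives $X_j\perp\!\!\!\!\perp X_k\mid X_S$; but adjacency-faithfulness forces $X_j\not\perp\!\!\!\!\perp X_k\mid X_S$ for every $S$, a contradiction. Hence the skeleton of $G$ is contained in that of $G'$, so $|G|\le|G'|$. Combined with the standing assumption $|G'|\le|G|$, the two edge sets have equal cardinality with one contained in the other, so the skeletons coincide.

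Step 2 (v-structures). With equal skeletons, it remains to show $G$ and $G'$ have identical unshielded colliders; Markov equivalence then follows. Take an unshielded triple $j-l-k$ (common to both skeletons) and suppose the collider status of $l$ differs between $G$ and $G'$; there are two cases, and the argument is asymmetric because only $G$ is assumed (orientation-)faithful. If $l$ is a collider in $G$ but not in $G'$, pick any set $S$ d-separating $j,k$ in $G'$; since $l$ is a non-collider on the path $j-l-k$ in $G'$, blocking that path forces $l\in S$, and Markov($G'$) gives $X_j\perp\!\!\!\!\perp X_k\mid X_S$. But in $G$, conditioning on the collider $l\in S$ activates $j\to l\leftarrow k$, so $j$ is d-connected to $k$ given $S$, whence orientation-faithfulness of $G$ yields $X_j\not\perp\!\!\!\!\perp X_k\mid X_S$, a contradiction. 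Conversely, if $l$ is a collider in $G'$ but not in $G$, I would take a d-separating set $S^*$ for $j,k$ in $G'$; blocking the collider path $j\to l\leftarrow k$ forces $l\notin S^*$ (and no descendant of $l$ in $S^*$), while in $G$ the non-collider $l\notin S^*$ keeps the path $j-l-k$ active, so $j$ is d-connected to $k$ given $S^*$ in $G$. Orientation-faithfulness of $G$ then gives dependence, again contradicting the independence from Markov($G'$). Hence the v-structures agree, $G'\in\gM(G)$, and the contrapositive is complete.

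The main obstacle is exactly this asymmetry in Step 2: because restricted-faithfulness is imposed only on $G$ and not on the competitor $G'$, the two cases cannot be symmetrized. The fix is to always extract the conditional \emph{independence} from the Markov property of $G'$ and the conditional \emph{dependence} from orientation-faithfulness of $G$; this requires, in each case, choosing a d-separating set in $G'$ whose relation to $l$ is dictated by the collider status of $l$ in $G'$ (forcing $l\in S$ when $l$ is a non-collider there, and $l\notin S^*$ when $l$ is a collider there), and then verifying that the length-two path $j-l-k$ in $G$ is active under that set. A minor point to check carefully is the standard existence of a d-separating set for non-adjacent vertices in a DAG, which underlies both steps.
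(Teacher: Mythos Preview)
The paper does not supply its own proof of this statement; it is simply quoted as Theorem~2.4 of \cite{raskutti2018learning} and used as a black box in the proof of Lemma~\ref{lemma:MEC}. Your proposal is correct and is essentially the argument given in that reference: adjacency-faithfulness plus the Markov property of the competitor $G'$ forces the skeleton of $G$ into that of $G'$, the edge-count hypothesis then equalizes the skeletons, and orientation-faithfulness (in the form stated here, i.e.\ dependence whenever $j$ is $d$-connected to $k$ given $S$ over an unshielded triple) together with the Verma--Pearl characterization matches the v-structures. The only points to be slightly careful about---existence of a $d$-separating set for non-adjacent $j,k$ in $G'$, and the asymmetric use of $l\in S$ versus $l\notin S^*$ depending on which graph has the collider---you have already identified and handled correctly.
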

\begin{proof}
First, by Theorem \ref{thm:faithful_to_restricted},  the faithfulness assumption implies the restricted faithfulness assumption. Second, by Theorem \ref{thm:restricted_faithfulness}, the restricted faithfulness assumption implies the Sparsest Markov representation assumption. Furthermore, note that for any $(B, \Omega) \in \gG(\gE_{\min}(\Thetat))$, the distribution $P(X)$ is Markov to $G(B)$, since $(B,\Omega)\in \gG(\gE(\Thetat))$. According to the definition of the Sparsest Markov representation assumption, all sparsest DAGs that satisfy the Markov property must belong to the same Markov equivalence class. In our case, this means $\gG(\gE_{\min}(\Thetat)) = \gM(G^0)$.
\end{proof}
\subsection{Proof of Lemma \ref{lemma:scale_invariant}}
\begin{proof}
For $X \sim \N(0, \Sigma)$, let $\bar{\Theta} = D \Theta D$, and denote the inverse of $\bar{\Theta}$ as $\bar{\Sigma} = (\bar{\Theta})^{-1} = D^{-1} \Sigma D^{-1}$. It follows that $\bar{X} \coloneq D^{-1} X \sim \N(0, \bar{\Sigma})$. Now, consider the following least squares regression for $j \in \{1, \ldots, p\}$ and $S \subseteq \{1, \ldots, p\} \setminus \{j\}$. Let $\beta \in \R^{|S|}$. Then the following relationships hold:
\begin{align*}
   \beta_{Sj} = &\arg\min_{\beta}\E \|X_j - \beta^\top X_S\|_2^2\Rightarrow \beta_{Sj} = \Sigma_{SS}^{-1}\Sigma_{Sj}
   \\
   \bar{\beta}_{Sj} = & \arg\min_{\beta}\E \|\bar{X}_j - \bar{\beta}^\top \bar{X}_S\|_2^2\Rightarrow \bar{\beta}_{Sj} = \bar{\Sigma}_{SS}^{-1}\bar{\Sigma}_{Sj}\\
   \bar{\Sigma}_{SS}^{-1} = &([D^{-1}\Sigma D^{-1}]_{SS} )^{-1} = D_{SS}\Sigma_{SS}^{-1}D_{SS}\\
   \bar{\Sigma}_{Sj} = & [D^{-1}\Sigma D^{-1}]_{Sj} = D^{-1}_{SS}\Sigma_{Sj}D^{-1}_{jj}\\
   \bar{\beta}_{Sj} = & \bar{\Sigma}_{SS}^{-1}\bar{\Sigma}_{Sj} = D_{SS}\Sigma_{SS}^{-1}D_{SS}D^{-1}_{SS}\Sigma_{Sj}D^{-1}_{jj}= D_{SS}\beta_{Sj}D_{jj}^{-1}
\end{align*}
As a consequence, $\supp(\beta_{Sj}) = \supp(\bar{\beta}_{Sj})$. Note that for all $(B, \Omega) \in \gE(\Theta)$, we know from Section \ref{subsec:EC} that there exists a $\pi \in \mathcal{P}$ such that $B = \widetilde{B}(\pi)$. Moreover, $B$ can be recovered by least squares regression using $X$ with its topological sort \citep{aragam2015concave, deng2023optimizing} that is consistent with $\pi$. For such a $\pi$, we can find a pair $(\Bar{B}, \Bar{\Omega}) \in \gE(D \Theta D)$, where $\Bar{B}$ has the same topological sort as $\pi$, and it can be recovered by least squares regression on $\Bar{X}$. We have shown that, for the same $S, j$, $\supp(\beta_{Sj}) = \supp(\bar{\beta}_{Sj})$. Therefore, $\supp(B) = \supp(\bar{B})$.
\end{proof}

\subsection{Proof of Lemma \ref{lemma:general_MEC}}
\begin{proof}
    The proof is the same as Lemma \ref{lemma:MEC}.
\end{proof}

\subsection{Proof Lemma \ref{lemma:SMR}}
\begin{proof}
    This follows directly from the definition of the Sparsest Markov Representation (SMR) assumption. Since for all $(B, \Omega) \in \gEmin(\Theta)$, $G(B)$ is Markovian to $P$ and $G(B)$ is the Sparsest, by Definition \ref{def:minimal_in_EC}, all $G(B)$ must belong to the same Markov equivalence class by the definition of SMR.
\end{proof}

\subsection{Proof of Lemma \ref{lemma:standardization}}
\begin{proof}
$X_{\text{std}} = D^{-1}(X-\E X)$ is based on definition of standardization.
\begin{align*}
    &\Cov(X_{\text{std}}) = \Cov(D^{-1}(X-\E X)) 
    =  D^{-1}\Cov((X-\E X)) D^{-1}
    = D^{-1}\Sigma D^{-1}\\& 
    [\Cov(X_\std)]^{-1} = [D^{-1}\Sigma D^{-1}]^{-1} = D \Sigma^{-1}D = D\Theta D.
    \qedhere
\end{align*}
\end{proof}
\subsection{Proof of Lemma \ref{lemma:logdet_zero}}

\begin{proof}
    Detailed proof can be found in \cite{ng2020role}, Appendix Section D.
\end{proof}
\subsection{Proof of Lemma \ref{lemma:bounded_score}}
\begin{proof}

From the definition of $\Sigma_f(B, \Omega)$:
\begin{align*}
    \Sigma_f(B, \Omega) \coloneq (I - B)^{-\top} \Omega (I - B)^{-1} = (I - B)^{-\top} \Omega^{1/2} \Omega^{1/2} (I - B)^{-1},
\end{align*}
where $\Omega^{1/2} = \diag(\omega_1, \ldots, \omega_p)$. It is clear that $\Sigma(B, \Omega)$ is positive semidefinite, as
\[
    x^\top \Sigma(B, \Omega) x = \|\Omega^{1/2} (I - B)^{-1} x\|_2^2 \geq 0, \qquad \forall x \in \R^p.
\]
Next, we just need to show that $\Omega^{1/2} (I - B)^{-1} x \neq 0$ for all $x \neq 0$. 
\[
    \Omega^{1/2} (I - B)^{-1} x \neq 0 \Leftrightarrow (I - B)^{-1} x \neq 0 \Leftrightarrow x \neq 0.
\]
Here, $\omega_j^2 > 0$ for all $j$, so $\Omega^{1/2}$ is invertible. As $(I-B)$ is a full rank matrix, then $(I-B)^{-1}$ is also a full rank matrix, it indicates that $\Sigma(B, \Omega)$ is positive definite matrix.

Since $\Omega^0 > 0$, it follows that $\Sigma^0$ is positive definite.
By Lemma \ref{lemma:logdet_zero}, we have:
\begin{align*}
    \ell(B,\Omega) = & \frac{1}{2}\log \det \Omega - \log \det (I-B)+\frac{1}{2}\Tr(\Sigma^0\Theta(B,\Omega))+\const
    \\ = &\frac{1}{2}\log \det \Omega +\frac{1}{2}\Tr(\Sigma^0\Theta(B,\Omega))+\const
    \\ \geq & \ell (B,\Omega_f(B))
    = \ell(B)
\end{align*}
where $\Omega_f(B)$ and $\ell(B)$ are defined in Equations \eqref{eq:OmegaB} and \eqref{eq:profile_pop_sample}, respectively. The last inequality follows from Section \ref{subsec:logll}. Next, we need to prove that $\ell(B) > -\infty$.
\begin{align*}
    \ell(B) = & \frac{1}{2}\log \det \diag( (I - B)^\top {\Sigmat}(I - B))+\const
    \\= & \frac{1}{2}\sum_{j=1}^p\log \E \|X_j - B_j^\top X\|_2^2+\const
    \\ = & \frac{1}{2}\sum_{j=1}^p\log \E \|(e_j - B_j)^\top X\|_2^2+\const
    \\ = & \frac{1}{2}\sum_{j=1}^p\log (e_j - B_j)^\top\Sigma^0(e_j - B_j)+\const
    \\ \geq & \frac{1}{2}\sum_{j=1}^p\log \|(e_j - B_j)\|_2^2\Lambda_{\min}(\Sigma^0)+\const
    \\ \geq & \frac{1}{2}\sum_{j=1}^p\log \Lambda_{\min}(\Sigma^0)>-\infty 
\end{align*}
Here, $e_j \in \R^p$ is a unit vector with the $j$-th position equal to 1 and all other positions being zero, and $\Lambda_{\min}(\Sigma^0)$ is the minimum eigenvalue of $\Sigma^0$. Since $\Sigma^0$ is positive definite, we have $\Lambda_{\min}(\Sigma^0) > 0$. 

Because $B$ is the adjacency matrix of a DAG, it follows that $B_{jj} = 0$, which implies $\|e_j - B_j\| \geq \|e_j\| = 1$. As a result,
\begin{align*}
    \ell(B, \Omega) &\geq \ell(B) > -\infty.
    \qedhere
\end{align*}
\end{proof}
\subsection{Proof of Lemma \ref{lemma:positive_gap}}
\begin{proof}
Note that for a fixed $B$, the corresponding optimal $\Omega_f(B) = \diag((I - B)^\top \Sigma^0 (I - B))$ is the solution with respect to $\ell(B, \Omega)$. Therefore, without causing confusion, we take $\Omega_f(B) = \diag((I - B)^\top \Sigma^0 (I - B))$ and consider the log-likelihood as a function of $B$ only, i.e., $\ell(B)$, for simpler representation. See Equation \eqref{eq:profile_pop_sample} in Section \ref{subsec:logll} for details. It is clear that $0 \in A_3$, so we define $A_4 = \{B \mid \ell(B) \leq \ell(0)\}$. Note that $\ell(0)$ is finite.
\begin{align*}
    \ell(0) \geq \ell(B) &= \sum_{j = 1}^p \log (e_j - B_j)^\top \Sigma^0 (e_j - B_j) \\
    &\geq \sum_{j = 1}^p \log \|e_j - B_j\|^2 \Lambda_{\min}(\Sigma^0) \\
    &= \log \left[(\Lambda_{\min}(\Sigma^0))^p \prod_{j = 1}^p \|e_j - B_j\|^2\right].
\end{align*}
This indicates that
\begin{align*}
    \prod_{j = 1}^p \|e_j - B_j\|^2\leq \frac{\exp(\ell(0))}{(\Lambda_{\min}(\Sigmat))^p}
\end{align*}
Moreover,
\[\|e_k-B_k\|^2\leq \prod_{j = 1}^p \|e_j - B_j\|^2\leq \frac{\exp(\ell(0))}{\Lambda^p_{\min}}\qquad \forall k\in \{1,\ldots,p\}\]
This implies that $B_k$ must be bounded, and therefore every $B$ in $A_4$ is bounded. It is clear that $\argmin_{B \in A_3} \ell(B) \in A_4$. Thus, we need to show that $\min_{B \in A_3} \ell(B) = \min_{B \in A_3 \cap A_4} \ell(B) > \ell(B^0)$. Define
\[A_5 = \{\breve{B}\mid \text{dist}(\breve{B},\gE(\Thetat))\geq \frac{1}{2}\min_{B\in \gE(\Theta^0)}\text{dist}(B,A_3)\} \]
It is easy to see that $A_3 \subseteq A_5$. Then,
\begin{align*}
    \min_{B \in A_3 \cap A_4} \ell(B) \geq \min_{B \in A_4 \cap A_5} \ell(B).
\end{align*}
Note that $A_4 \cap A_5$ is closed, bounded, and nonempty ($0 \in A_4 \cap A_5$), and $\ell(B)$ is a continuous function of $B$. Consequently, there exists at least one minimizer of $\ell(B)$ in $A_4 \cap A_5$. Combining this with the fact that $B \notin A_4 \cap A_5$ for all $B \in \gE(\Theta^0)$, we conclude that:
\begin{align*}
    \min_{B \in A_3} \ell(B) \geq \min_{B \in A_4 \cap A_5} \ell(B) &> \ell(B^0).
    \qedhere
\end{align*}
\end{proof}

\subsection{Proof of Corollary \ref{cor:mcpscad}}
\begin{proof}
By Theorem \ref{thm:main}, we know there exists $\lambda_0>0$ and $\delta_0>0$. For MCP, it can be transformed into quasi-MCP, by reparameterization from Section \ref{subsec:penalty}. Then, combining these results together.
\begin{align*}
    0<\lambda = \lambda_{\text{mcp}}<\lambda_0,  0<\delta = a\lambda_{\text{mcp}}<\delta_0\Rightarrow 0<\lambda_{\text{mcp}}<\lambda_0, 0<a<\frac{\delta_0}{\lambda_{\text{mcp}}}
\end{align*}
We could simple set $a_0:\frac{\delta_0}{\lambda_0}$ and $(\lambda_{\text{mcp}})_0 = \lambda_0$.  For SCAD, we just requires the following is satisfied to satisfies the pattern in the proof of Theorem \ref{thm:main}.
\begin{align*}
    0<a\lambda_{\text{scad}}<\delta_0, 0<\frac{\lambda_{\text{scad}}^2(a+1)}{2}<\lambda_0
\end{align*}
One simple choice is to let 
\begin{align*}
    a_0 = (\lambda_{\text{scad}})_0<\min\{\sqrt{\delta_0},\sqrt{\lambda_0},1\}
\end{align*}
This completes the proof of Corollary \ref{cor:mcpscad}.
\end{proof}

\section{Additional Examples and Details}\label{sec:details}
In this appendix, we provide the additional details of derivations, examples, concepts, and discussions referenced in the main paper. These include:
\begin{itemize}
    \item The derivation of the log-likelihood function for the model in Equation \eqref{eq:linear_uniden} (Appendix \ref{subsec:logll}).
    \item A brief introduction to the characterization of the equivalence class $\gE(\Theta)$ (Appendix \ref{subsec:EC}).
    \item Examples demonstrating that the optimal solution for the least squares loss differs from the optimal solution of the log-likelihood (Appendix \ref{subsec:counterexample}).
    \item An example illustrating the  estimation bias when the $\ell_1$ penalty is applied (Appendix \ref{subsec:biased}).
    \item The formulations for quasi-MCP, MCP, and SCAD (Appendix \ref{subsec:penalty}).
    \item The standardization of the random variable $X$ and the dataset $\rmX$ (Appendix \ref{subsec:standardization}).
    \item A detailed discussion of Assumptions \ref{assumption:finite} and \ref{assumption:bounded_leve_set} (Appendix \ref{subsec:assumption}).
\end{itemize}
\subsection{Log-likelihood of Model \eqref{eq:linear_uniden}}\label{subsec:logll}
We derive the negative log-likelihood for the model described in Equation \eqref{eq:linear_uniden}. Specifically, we detail the log-likelihood for the general linear Gaussian model.
\begin{align*}
    \ell_n(B,\Omega) = & -\frac{1}{n}\log \prod_{i=1}^nf(\bfx_i;B,\Omega)
    \\ = & -\frac{1}{n}\log \prod_{i=1}^n \frac{1}{(2\pi)^{p/2}(\det\SigmaBOmega)^{1/2}}\exp\left(-\frac{\bfx_i^\top \ThetaBOmega \bfx_i}{2}\right)
    \\ = & \frac{p}{2}\log 2\pi +\frac{1}{2}\log\det \SigmaBOmega+\frac{1}{2n}\sum_{i=1}^n \bfx_i^\top \ThetaBOmega \bfx_i
    \\ = & \frac{1}{2}\log \det (I - B)^{-\top}\Omega(I - B)^{-1}+\frac{1}{2}\Tr(\ThetaBOmega (\frac{\sum_{i=1}^n\bfx_i^\top \bfx_i}{n}))+\const
    \\ = & \frac{1}{2}\log \det \Omega -\log\det (I-B) +\frac{1}{2}\Tr(\widehat{\Sigma}\ThetaBOmega)+\const
    \\ = & \frac{1}{2}\sum_{j = 1}^p \log w_j^2 +\frac{1}{2}\Tr(\Omega^{-1}(I - B)^\top \widehat{\Sigma}(I - B))-\log\det (I-B)+\const
    \\ = & \frac{1}{2}\sum_{j = 1}^p \log w_j^2 +\frac{1}{2}\sum_{j=1}^p \frac{((I - B)^\top \widehat{\Sigma}(I - B))_{jj}}{\omega_j^2}-\log\det (I-B)+\const
\end{align*}
It is easy to see that for any fixed $B$, the optimal solution of $(\omega_j^*)^2$ can be written as:
\begin{align*}
    (\omega_j^*)^2 = [(I - B)^\top \widehat{\Sigma}(I - B)]_{jj}
\end{align*}
Therefore, the optimal solution $\Omega_f(B)$ for any fixed $B$ can be written as:
\begin{align}\label{eq:OmegaB}
    \Omega_f(B) = \diag( (I - B)^\top \widehat{\Sigma}(I - B))
\end{align}
Let us define profile sample log-likelihood $\ell_n(B)$ as function of $B$ with such optimal $\Omega(B)$ plugged in
\begin{align}\label{eq:profile_logll_sample}
    \begin{split}
        \ell_n(B) =& \frac{1}{2}\log \det \diag( (I - B)^\top \widehat{\Sigma}(I - B))-\log\det (I-B)+\const
 \\ = & \frac{1}{2}\log \frac{1}{n}\|\rmX_j- \rmX B_j\|_2^2-\log\det (I-B)+\const
    \end{split}
\end{align}
Where $\rmX = (\rmX_1,\ldots,\rmX_p) = (\bfx_1,\ldots,\bfx_n)^\top$
and corresponding profile population log-likelihood 
\begin{align}\label{eq:profile_pop_sample}
     \begin{split}
         \ell(B) = &\frac{1}{2}\log \det \diag( (I - B)^\top {\Sigma}(I - B))-\log\det (I-B)+\const
     \\ = & \frac{1}{2}\log \E \|X_j - B_j^\top X\|-\log\det (I-B)+\const
     \end{split}
\end{align}

\subsection{Equivalence class $\gE(\Theta)$}\label{subsec:EC}
We provide a brief introduction to the equivalence class $\gE(\Theta)$, which has been extensively studied in \cite{aragam2015concave}. We adopt the notation from \cite{aragam2015concave}, and further details can be found in that work.
\begin{definition}[topological sort]
a topological sort of a directed
graph is an ordering on the nodes, often denoted by $\prec$, such that the existence of a directed edge $X_k\rightarrow X_j$ implies that $X_k\prec X_j$ in the ordering.
\end{definition}
Let $\gP$ denote the collection of all permutations of the indices $\{1, \ldots, p\}$. For an arbitrary matrix $A$ and any $\pi \in \gP$, let $P_{\pi} A$ represent the matrix obtained by permuting the rows and columns of $A$ according to $\pi$, such that $(P_{\pi} A)_{ij} = a_{\pi(i)\pi(j)}$.

A DAG $B$ is said to be compatible with permutation $\pi$ if $P_{\pi} B$ is a lower-triangular matrix, which is equivalent to saying that $X_k \rightarrow X_j$ in $B$ implies that $\pi^{-1}(k) > \pi^{-1}(j)$. Similarly, $\pi$ is also called compatible with $B$.

For any positive definite matrix $\Theta$ and $\pi \in \gP$, the matrix $P_{\pi} \Theta$ represents the same covariance structure as $\Theta$, up to a reordering of the variables. The Cholesky decomposition of $P_{\pi}(\Theta)$ can be uniquely written as:
\begin{align*}
    P_{\pi} \Theta = (I - L) D^{-1} (I - L)^\top = \Theta_f(L, D),
\end{align*}
where $L$ is strictly lower triangular and $D$ is diagonal. By Lemma 8 in \cite{aragam2015concave}, the following holds:
\begin{align*}
    P_{\pi} \Theta(L, D) = \Theta(P_{\pi} L, P_{\pi} D) \qquad \forall \pi \in \gP.
\end{align*}
Therefore,
\begin{align*}
    \Theta = \Theta_f(P_{\pi^{-1}} L, P_{\pi^{-1}} D).
\end{align*}
For each $\pi$, we define:
\begin{align*}
    \widetilde{B}(\pi) &\coloneq P_{\pi^{-1}} L, \\
    \widetilde{\Omega}(\pi) &\coloneq P_{\pi^{-1}} D.
\end{align*}
This suggests that for any $\pi \in \gP$, there exists a pair $(\widetilde{B}(\pi), \widetilde{\Omega}(\pi)) \in \gE(\Theta)$, where $\widetilde{B}(\pi)$ can be uniquely determined based on the permutation $\pi$ and $\Theta$ \citep{aragam2015concave}. It is important to emphasize that different permutations, $\pi_1 \neq \pi_2$, can still result in the same pairs, i.e., $(\widetilde{B}(\pi_1), \widetilde{\Omega}(\pi_1)) = (\widetilde{B}(\pi_2), \widetilde{\Omega}(\pi_2))$. Furthermore, this indicates that for any $(B, \Omega) \in \gE(\Theta)$, there exists at least one permutation $\pi$ such that $(B, \Omega) = (\widetilde{B}(\pi), \widetilde{\Omega}(\pi))$. Moreover, it turns out that the collection of pair of $(\widetilde{B}(\pi), \widetilde{\Omega}(\pi))$ forms the entire equivalence class $\gE(\Theta)$. 
\begin{lemma}[Lemma 1, \citealp{aragam2015concave}]\label{lemma:EC} 
Suppose $\Sigma$ is a positive definite covariance matrix and $\Theta = \Sigma^{-1}$. Then,
    \begin{align*}
        \gE(\Theta) =& \{(P_{\pi^{-1}} L, P_{\pi^{-1}} D):P_{\pi}\Theta = \Theta_f(L,D),\pi\in \mathcal{P}\} \\
         = &\{(\widetilde{B}(\pi), \widetilde{\Omega}(\pi)):\pi\in \mathcal{P} \}
    \end{align*}
\end{lemma}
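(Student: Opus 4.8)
The plan is to prove the two displayed equalities in turn. The second one is purely definitional: for a fixed $\pi\in\gP$ the matrix $P_\pi\Theta$ is positive definite, hence admits a \emph{unique} factorization $P_\pi\Theta=(I-L)D^{-1}(I-L)^\top=\Theta_f(L,D)$ with $I-L$ unit lower triangular (so $L$ strictly lower triangular) and $D$ diagonal with strictly positive entries, and $(\widetilde B(\pi),\widetilde\Omega(\pi))$ is \emph{defined} to be $(P_{\pi^{-1}}L,P_{\pi^{-1}}D)$ for exactly this $(L,D)$; so the two right-hand sets coincide term by term. The work is therefore in the first equality $\gE(\Theta)=\{(P_{\pi^{-1}}L,P_{\pi^{-1}}D):P_\pi\Theta=\Theta_f(L,D),\pi\in\gP\}$, which I would prove by double inclusion using two ingredients: (i) the permutation-equivariance identity $P_\sigma\Theta_f(B,\Omega)=\Theta_f(P_\sigma B,P_\sigma\Omega)$ for every $\sigma\in\gP$ (Lemma~8 of \citealp{aragam2015concave}), together with $P_{\pi^{-1}}P_\pi A=A$; and (ii) the fact that $G(B)$ is a DAG iff $P_\pi B$ is strictly lower triangular for some $\pi\in\gP$ (existence of a topological sort, plus invariance of acyclicity under relabeling of the vertices).

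For $\gE(\Theta)\subseteq\{\cdots\}$: take $(B,\Omega)\in\gE(\Theta)$, so $\Theta_f(B,\Omega)=\Theta$ and $G(B)$ is a DAG. Choose a topological sort $\pi$ of $G(B)$ and set $L\coloneq P_\pi B$, which is strictly lower triangular, and $D\coloneq P_\pi\Omega$, which is diagonal with positive entries since $\Omega$ is. Applying (i) with $\sigma=\pi$ gives $P_\pi\Theta=P_\pi\Theta_f(B,\Omega)=\Theta_f(P_\pi B,P_\pi\Omega)=\Theta_f(L,D)$, and since $P_{\pi^{-1}}P_\pi$ is the identity we get $(B,\Omega)=(P_{\pi^{-1}}L,P_{\pi^{-1}}D)$, exhibiting $(B,\Omega)$ as a member of the right-hand set.

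For the reverse inclusion: take $(B,\Omega)=(P_{\pi^{-1}}L,P_{\pi^{-1}}D)$ with $P_\pi\Theta=\Theta_f(L,D)$, $L$ strictly lower triangular, $D$ diagonal. Since $P_\pi\Theta$ is positive definite, $D$ has strictly positive diagonal, so $\Omega=P_{\pi^{-1}}D$ is a legitimate diagonal noise covariance; and $G(L)$ is a DAG (the identity ordering is a topological sort of a strictly lower-triangular matrix), hence so is $G(B)=G(P_{\pi^{-1}}L)$, which is just $G(L)$ with vertices relabeled by $\pi^{-1}$. Applying (i) with $\sigma=\pi^{-1}$ to $P_\pi\Theta=\Theta_f(L,D)$ and using $P_{\pi^{-1}}P_\pi\Theta=\Theta$ yields $\Theta=P_{\pi^{-1}}\Theta_f(L,D)=\Theta_f(P_{\pi^{-1}}L,P_{\pi^{-1}}D)=\Theta_f(B,\Omega)$, so $(B,\Omega)\in\gE(\Theta)$. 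Combined with the definitional second equality, this completes the proof.

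The only genuinely delicate point is the permutation bookkeeping — fixing the convention so that ``$P_\pi$ applied to the adjacency matrix of a DAG again represents a DAG compatible with $\pi$'' and that $P_{\pi^{-1}}$ literally inverts $P_\pi$. Once the equivariance identity (i) is in hand the algebra is immediate; if one preferred not to cite (i), one could instead verify it directly by expanding $(I-P_\sigma B)(P_\sigma\Omega)^{-1}(I-P_\sigma B)^\top$ entrywise and matching it against $P_\sigma\bigl[(I-B)\Omega^{-1}(I-B)^\top\bigr]$, using that $P_\sigma$ acts on a diagonal matrix by permuting its diagonal.
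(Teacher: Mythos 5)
Your proof is correct and follows the same route as the source the paper relies on: the paper does not prove this lemma itself but cites Lemma~1 of \citet{aragam2015concave}, and its Appendix~\ref{subsec:EC} sketches only the forward construction (the unique unit-lower-triangular factorization of $P_\pi\Theta$ plus the equivariance identity $P_\sigma\Theta_f(B,\Omega)=\Theta_f(P_\sigma B,P_\sigma\Omega)$). Your double inclusion correctly supplies the remaining direction---choosing a topological sort $\pi$ of $G(B)$ so that $P_\pi B$ is strictly lower triangular and invoking uniqueness of the factorization to identify $(B,\Omega)$ with $(\widetilde B(\pi),\widetilde\Omega(\pi))$---and the permutation bookkeeping is sound because $P_\sigma$ is conjugation by a permutation matrix and hence commutes with products, transposes, and inverses.
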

This result indicates that the size of $\gE(\Theta)$ is at most $p!$, which is large but finite.
\subsection{LS loss vs. log-likelihood}\label{subsec:counterexample}
We present examples demonstrating that, when variances are unequal, the least squares (LS) loss does not generally share the same minimizers as the log-likelihood. The first example is based on Example 1 from \citet{loh2014high}. Suppose $(X_1, X_2)$ follows a linear structural equation model (SEM) with unequal variances:
\begin{align*}
    X_1 = \epsilon_1,
    \qquad X_2 = -\frac{X_1}{2}+\epsilon_2,
    \qquad \epsilon_1\sim N(0,1),
    \qquad \epsilon_2\sim N(0,1/4).
\end{align*}
Thus
\begin{align*}
    B^0 = \begin{pmatrix}
        0&-1/2\\ 0&0
    \end{pmatrix},
    \quad 
    \Omega^0 =\begin{pmatrix}
        1&0\\ 0&1/4
    \end{pmatrix},
    \quad\Sigma^0 = \Sigma_f(B^0,\Omega^0) = \begin{pmatrix}
        1&-1/2\\-1/2 &1/2
    \end{pmatrix}
\end{align*}
and also
\begin{align*}
    \gE(\Theta^0)
    = \gEmin(\Theta^0)
    = \{B^0, B_1\}
\end{align*}
where
\begin{align*}
    B_1 = \begin{pmatrix}
        0&0\\ -1&0
    \end{pmatrix},
    \quad
    \Omega_1 = \begin{pmatrix}
        1&0\\ 0&1
    \end{pmatrix}.
\end{align*}
Moreover, $\ell(B^0,\Omega^0)=\ell(B_1,\Omega_1)$, since both SEM represent the same covariance. But it turns out that $\E[\|X - B_1^\top X\|^2] < \E[\|X - (B^0)^\top X\|^2]$. More precisely, it is easy to check that
\begin{align*}
    \E[\|X - B_1^\top X\|^2] = \Tr((I-B_1)^\top \Sigma^0(I-B_1)) &= 1, \\
    \E[\|X - (B^0)^\top X\|^2] = \Tr((I-B^0)^\top \Sigma^0(I-B^0)) &= 5/4,
\end{align*}
and moreover $B_1$ is the global minimizer of the LS loss $\E[\|X - B^\top X\|^2]$.
It follows that when the variances are different, the log-likelihood and LS loss have different global minimizers.

Similar calculations can be carried out for $d>2$, but are tedious owing to the size of $\gE(\Theta)$. For example, here is an example of an SEM over 3 nodes such that the LS loss has a different set of global minimizers, but also the LS-global minimizer has \emph{more} edges than the sparsest Markov representation:
\begin{align*}
    B^0 = \begin{pmatrix}
        0 & 0 & -3/10 \\
0 & 0 & -2 \\
0 & 0 & 0
    \end{pmatrix},
    \quad 
    \Omega^0 =\begin{pmatrix}
        7 & 0 & 0 \\
0 & 3 & 0 \\
0 & 0 & 2
    \end{pmatrix},
    \quad\Sigma^0 = \Sigma_f(B^0,\Omega^0) = \begin{pmatrix}
        7 & 0 & -2 \\
0 & 3 & -5 \\
-2 & -5 & 10
    \end{pmatrix}.
\end{align*}
For this model, LS loss selects the following SEM with 3 edges:
\begin{align*}
    B_1 = \begin{pmatrix}
        0 & 0 & 0 \\
-1.197 & 0 & -1.589 \\
-0.7532 & 0 & 0
    \end{pmatrix}.
\end{align*}
We have $B^0\in\gEmin(\Theta^0)$, but $B_1\notin\gEmin(\Theta^0)$.

\subsection{Estimation bias under $\ell_1$}\label{subsec:biased}
We provide an example showing that when the $\ell_1$ penalty is applied, the estimation becomes biased. Therefore, $\ell_1$ should not be used. Consider the following linear Structural Equation Model (SEM):
\begin{align*}
    \begin{cases}
        X_1 =  \N(0,1)\\
        X_2 = X_1 +\N(0,\sigma^2)
    \end{cases}
\end{align*}
If the topological sort is known, i.e., $X_1 \rightarrow X_2$, and an $\ell_1$ penalty is used for minimizing the negative log-likelihood:
\begin{align*}
   \min_a \log \E[\|X_2 - aX_1\|_2^2]+\log \E[\|X_1\|^2_2]+\lambda |a|
\end{align*}
Ideally, we would expect $a = 1$ to be the minimal solution to the loss function. However, the problem is equivalent to:
\begin{align*}
    \log ( (1 - a)^2+\sigma^2) +\lambda |a|
\end{align*}
It is clear that $a = 1$ is not the minimal solution to the loss function, as the derivative at $a = 1$ is nonzero for any $\lambda > 0$, indicating that the $\ell_1$ penalty leads to a biased estimator. This bias does not occur when using MCP or SCAD with appropriate hyperparameters.

\subsection{Quasi-MCP, MCP and SCAD }\label{subsec:penalty}
We present the formulas for quasi-MCP, MCP, and SCAD, and demonstrate that quasi-MCP and MCP are equivalent.
\begin{align*}
   \text{quasi-MCP}&\qquad  p_{\lambda,\delta}(t) = \lambda\left[ \left(|t| - \frac{t^2}{2\delta}\right)\mathbbm{1}(|t|<\delta)+\frac{\delta}{2}\mathbbm{1}(|t|>\delta) \right]\qquad 
\\
    \text{MCP }&\qquad {p}^{MCP}_{\lambda,a}(t)  = \mathbbm{1}(|t|<a\lambda ) \left(\lambda|t| - \frac{t^2}{2 a}\right)+\mathbbm{1}(|t|\geq \lambda a)\frac{\lambda^2a}{2}
\\
    \text{SCAD }&\qquad {p}^{SCAD}_{\lambda,a}(t) =\lambda|t|\mathbbm{1}(|t|<\lambda ) + \mathbbm{1}( \lambda<|t|<a\lambda) \frac{2a\lambda|t|-t^2-\lambda^2}{2(a-1)} +\mathbbm{1}(|t|\geq \lambda a)\frac{\lambda^2(a+1)}{2}
\end{align*}
\begin{figure}[H]
    \centering
    \includegraphics[width=0.7\linewidth]{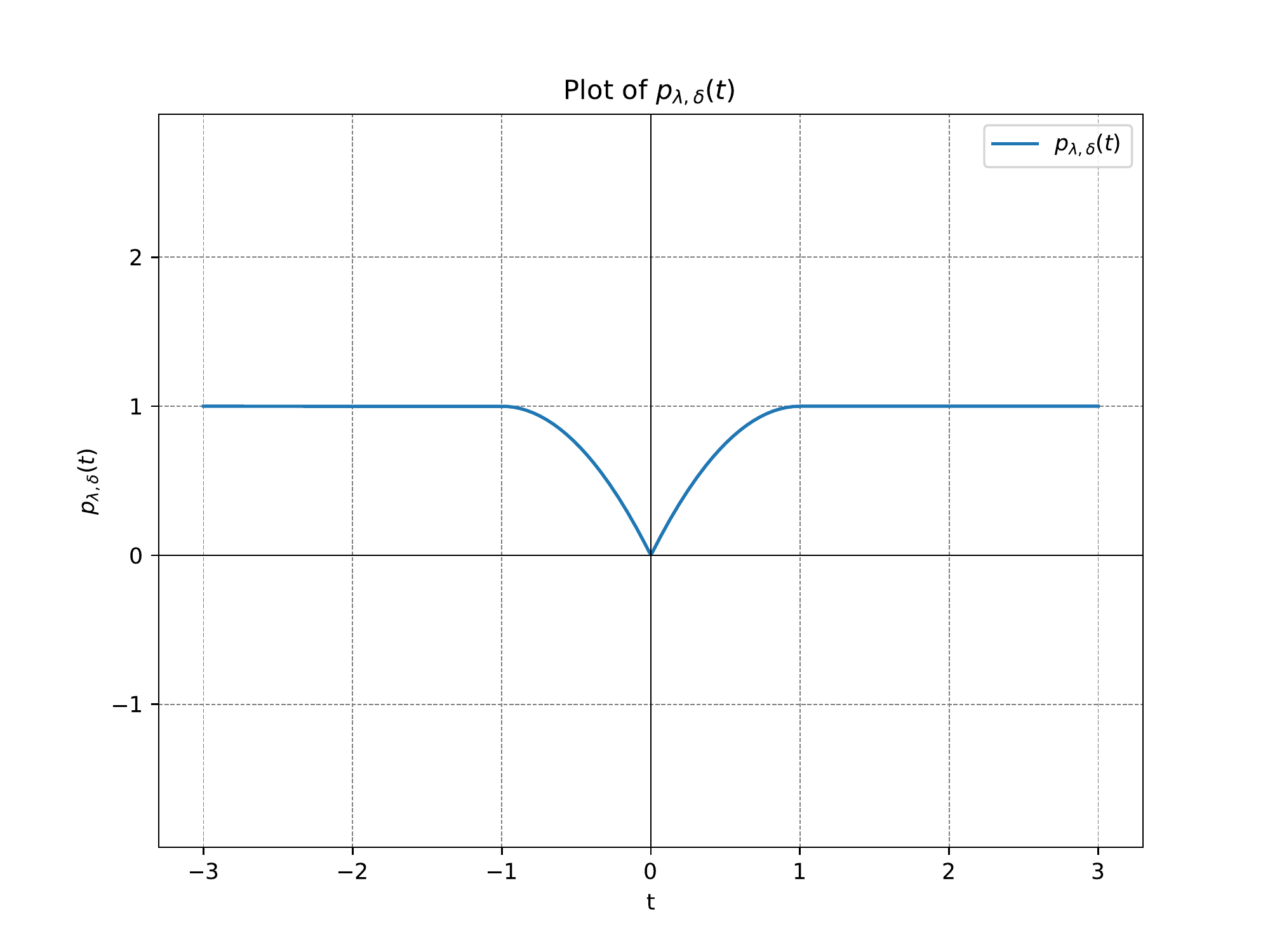}
    \caption{The plot of $p_{\lambda,\delta}(t)$ with 
 $\lambda = 2, \delta = 1$}
\end{figure}
It is worth noting that if we set $\delta$ as  $a\lambda$ in quasi-MCP, then $p_{\lambda,a\lambda}(t) = p^{MCP}_{\lambda,a}(t)$. In another way, if we set $a = \frac{\delta}{\lambda}$ in MCP,  then $p^{MCP}_{\lambda,\frac{\delta}{\lambda}}(t) = p_{\lambda,a}(t)$. Thus, quasi-MCP and MCP are equivalent to each other.

\subsection{Standardization of $X$ and $\rmX$}\label{subsec:standardization} 
We present the formulas for the standardization of $X$ and the standardization of the  corresponding dataset $\rmX$.

Let $\sigma_i^2 \coloneq \Var(X_i)$ and $D \coloneq \diag(\sigma_1, \ldots, \sigma_p)$. Denote the standardized version of $X$ as $X_{\text{std}}$, which can be expressed as:
\begin{align*}
    X_{\text{std}} = D^{-1}(X - \E X).
\end{align*}
For $\rmX \in \R^{n \times p}$, we can write $\rmX = (\rmX_{ij}) = (\bfx_1, \ldots, \bfx_n)^\top = (\rmX_1, \ldots, \rmX_p)$, and define sample average for node $j$ as $\widehat{\mu}_j$
\begin{align*}
    \widehat{\mu}_j = \frac{1}{n} \sum_{i=1}^n \rmX_{ij} \qquad \forall j \in [p].
\end{align*}
Next, we define the sample variance for node $j$  as
\begin{align*}
    \widehat{\sigma}_j^2 = \frac{1}{n-1} \sum_{i=1}^n (\rmX_{ij} - \widehat{\mu}_j)^2.
\end{align*}
The diagonal matrix of sample standard deviations is then
\begin{align*}
    \widehat{D} = \diag(\widehat{\sigma}_1, \ldots, \widehat{\sigma}_p).
\end{align*}
Finally, we standardize $\rmX$ by subtracting the sample means and scaling by the inverse of $\widehat{D}$:
\begin{align*}
    \rmZ = [\rmX - \mathbf{1}_n\cdot (\widehat{\mu}_1, \ldots, \widehat{\mu}_p)] \widehat{D}^{-1},
\end{align*}
where $\mathbf{1}_n \in \R^n$ is an $n$-dimensional vector with all entries equal to $1$.

\subsection{Discussion of Assumptions \ref{assumption:finite}, \ref{assumption:bounded_leve_set}}\label{subsec:assumption}
Assumption \ref{assumption:finite} 
is satisfied by any identifiable model, including linear Gaussian models, generalized linear models with continuous output \citep{ye2024federated}, binary output \citep{zheng2020learning, bello2022dagma}, and most exponential families. Moreover, the requirement of the finiteness of the equivalence class can also be relaxed. What is truly needed is that the minimal nonzero edge has sufficient ``signal,'' i.e.,
\[ \min_{(\psi,\xi)\in \mathcal{E}(\psi^0,\xi^0)}\min_{\{(i,j):B(\psi)_{ij}\ne 0\}}|B(\psi)|_{ij}>0\]
This is trivially true when $|\mathcal{E}(\psi^0,\xi^0)|$ is finite. When $|\mathcal{E}(\psi^0,\xi^0)|$  is infinite, each $|B(\psi)|_{ij}$ could be positive, but it is possible $\lim\inf_{(\psi,\xi)\in \mathcal{E}(\psi^0,\xi^0)}\min_{\{(i,j):B(\psi)_{ij}\ne 0\}}|B(\psi)|_{ij}=0$, because $|B(\psi)|_{ij}$ can be arbitrarily small. The $\ell_0$ penalty deals with this with its discontinuity at zero, whereas the continuity of quasi-MPC makes this more challenging. This is the cost of differentiability, which we argue is worthwhile.

Assumption \ref{assumption:bounded_leve_set} is a standard assumption in the optimization literature \citep{boyd2004convex} and is generally quite weak. Moreover, it is nearly necessary because quasi-MCP does not exactly count the number of edges in $B(\psi)$: The magnitude of the quasi-MCP penalty does not directly reveal the number of edges. This is the trade-off for replacing the $\ell_0$ penalty with a fully differentiable sparsity-inducing penalty. 

Finally, it is worth noting that this assumption can also be relaxed: what is truly required is that for any $\epsilon > 0$, there exists $\delta > 0$ such that
\[
    \ell(\psi, \xi) - \ell(\psi^0, \xi^0) > \delta \quad \text{for all } \{(\psi, \xi) \mid \text{dist}((\psi, \xi), \mathcal{E}(\psi^0, \xi^0)) > \epsilon\}.
\]
In other words, we require a loss gap when $(\psi, \xi)$ is not in $\mathcal{E}(\psi^0, \xi^0)$. This can be inferred from Assumption \ref{assumption:bounded_leve_set}.

For completeness, we include below a proof outline when $\quasimcp$ is replaced with the $\ell_0$ penalty.

\paragraph{Proof when $\quasimcp$ is replaced with $\ell_0$:} 
\begin{proof}
    We can also assume that $|\mathcal{E}_{\min}(\psi^0, \xi^0)| = 1$, meaning $\mathcal{E}_{\min}(\psi^0, \xi^0) = \{(\psi^0, \xi^0)\}$. In other words, there is a unique element in the minimal equivalence class. This is because any element in $\mathcal{E}_{\min}(\psi^0, \xi^0)$ is indistinguishable based on the score function, i.e., the value of $\ell(\psi, \xi)$ and the penalty for the number of edges in $B(\psi)$. Our objective is to find this unique element by solving equation \eqref{eq:opt_quasi_mcp} in the paper, which simplifies the proof.

When $s_{B(\psi^0)} = 0$, the result is straightforward:
\[
    \ell(\psi^0, \xi^0) + s_{B(\psi^0)} \leq \ell(\psi, \xi) + s_{B(\psi)}.
\]
Now, let us consider the more general case where $s_{B(\psi^0)} > 0$ and divide the parameter space into three regions: 
\[
    A_1 = \{(\psi, \xi) \mid s_{B(\psi)} > s_{B(\psi^0)}\}, \quad A_2 = \{(\psi, \xi) \mid s_{B(\psi)} = s_{B(\psi^0)}\}, \quad A_3 = \{(\psi, \xi) \mid s_{B(\psi)} < s_{B(\psi^0)}\}.
\]

\textbf{Case 1: Consider $A_1$}.  Since $\ell(\psi^0, \xi^0) \leq \ell(\psi, \xi)$, the following holds for any $\lambda > 0$:
\[
    \ell(\psi^0, \xi^0) + \lambda s_{B(\psi^0)} < \ell(\psi, \xi) + \lambda s_{B(\psi)} \quad \forall (\psi, \xi) \in A_1.
\]

\textbf{Case 2: Consider $A_2$}. Since $|\mathcal{E}_{\min}(\psi^0, \xi^0)| = 1$, it follows that for all $(\psi, \xi) \in \mathcal{E}(\psi^0, \xi^0)$ and $(\psi, \xi) \neq (\psi^0, \xi^0)$, we have $s_{B(\psi)} > s_{B(\psi^0)}$. Therefore, for all $(\psi, \xi) \in A_2$, it holds that $\ell(\psi^0, \xi^0) < \ell(\psi, \xi)$. Consequently, for any $\lambda > 0$:
\[
    \ell(\psi^0, \xi^0) + \lambda s_{B(\psi^0)} < \ell(\psi, \xi) + \lambda s_{B(\psi)} \quad \forall (\psi, \xi) \in A_2.
\]

\textbf{Case 3: Consider $A_3$}. We need to prove that:
\[
    \ell(\psi^0, \xi^0) + \lambda s_{B(\psi^0)} < \ell(\psi, \xi) + \lambda s_{B(\psi)} \quad \forall (\psi, \xi) \in A_3.
\]
This is equivalent to showing that there exists a positive $\lambda$ such that:
\[
    \lambda < \frac{\ell(\psi, \xi) - \ell(\psi^0, \xi^0)}{s_{B(\psi^0)} - s_{B(\psi)}} \quad \forall (\psi, \xi) \in A_3.
\]
Since $s_{B(\psi^0)}$ is the minimal number of edges in the equivalence class, any $(\psi, \xi) \in A_3$ corresponds to $B(\psi)$ with a number of edges strictly less than $s_{B(\psi^0)}$. This implies that:
\[
    \ell(\psi, \xi) - \ell(\psi^0, \xi^0) > 0 \quad \forall (\psi, \xi) \in A_3.
\]
Furthermore, we have $1 \leq s_{B(\psi^0)} - s_{B(\psi)} \leq s_{B(\psi^0)}$, which implies that there exists a small but positive $\lambda$ that satisfies the inequality.
\end{proof}
\section{Experiment Details}\label{sec:exp_details}
In this section, we provide all the details about the experiments. These include: (1) the types of graphs used, (2) the process for generating the samples, (3) the baseline methods we compare against and where to find the code for these methods, (4) the implementation details of our method and how to replicate the results, and (5) the metrics used to evaluate the estimation.

\subsection{Experimental Setting}
In this section, we outline the process for generating graphs and data for Structural Equation Models (SEMs) in \eqref{eq:sem}. For each model, a random graph $G$ is generated using one of two types of random graph models: \Erdos-\Renyi (ER) or Scale-Free (SF). The models are specified to have, on average, $kp$ edges, where $k \in \{1, 2, 4\}$. These configurations are denoted as ER$k$ or SF$k$, respectively.

\begin{itemize}
    \item \textit{\Erdos-\Renyi} (ER), Random graphs whose edges are add independently with equal probability. We simulated models with $p,2p$ and $4p$ edges (in expectation) each, denoted by $ER1, ER2,$ and $ER4$ respectively.
    \item \textit{Scale-free network} (SF). Network simulated according to the preferential attachment process \citep{barabasi1999emergence}. We simulated scale-free network with $p,2p$ and $4p$ edges and $\beta=1$, where $\beta$ is the exponent used in the preferential attachment process.
\end{itemize}

\paragraph{Linear SEMs.}
Given a random DAG $B \in \{0, 1\}^{p \times p}$ from one of these two graph models, edge weights were assigned independently from $\text{Unif}([-1.5, -0.5] \cup [0.5, 1.5])$ to obtain a weight matrix $B \in \R^{p \times p}$. Given $B$, we sampled $X = B^\top X + z \in \R^p$ according to:

\begin{itemize}
    \item \textit{Gaussian noise} with unequal variance (\texttt{Gauss-NV}): $z_i \sim \mathcal{N}(0,\sigma^2_i),i=1,\ldots,p$ where $\sigma_i\sim \text{Unif}[0.1,0.7]$
\end{itemize}

We chose to set $\sigma_i$, the noise variances in our models, to be relatively smaller compared to the settings used in previous studies such as \cite{zheng2018dags}, \cite{ng2020role}, and \cite{bello2022dagma}. This decision aims to mitigate the potential exploitation of accumulated variance along the topological sort, as highlighted in \cite{reisach2021beware}.
\paragraph{Generalized Linear Model with Binary Output}
Given a random DAG $B \in \{0, 1\}^{p \times p}$ from one of these two graph models, edge weights were assigned independently from $\text{Unif}([-1.5, -0.5] \cup [0.5, 1.5])$ to obtain a weight matrix $B \in \R^{p \times p}$. Given $B$, we sample $X_j$ according to the following
\begin{align*}
    X_j = \text{Bernoulli}(\operatorname{exp} (B_j^\top X)/(1+\operatorname{exp} (B_j^\top X)))\quad j=1,\ldots,p
\end{align*}
where $B_j$ is $j$-th column of $B$. The corresponding negative log-likelihood function:
\begin{align*}
    s(B;\rmX) = \frac{1}{n}\sum_{i=1}^p\mathbf{1}_n^\top\left(\log (\mathbf{1}_n+\text{exp}(\rmX B))-\rmX_i\circ (\rmX B)\right)
\end{align*}
where $\rmX = (\rmX_{ij}) = (\bfx_1, \ldots, \bfx_n)^\top = (\rmX_1, \ldots, \rmX_p)$

\paragraph{Nonlinear Models with Neural Networks.}
We follow the nonlinear setting described in \citet{zheng2020learning}. Given $G$, we simulate the SEM as follows:
\[
X_j = f_j(X_{\text{pa}(j)}) + N_j \qquad \forall j \in [p],
\]
where $N_j \sim \mathcal{N}(0, \sigma_i^2)$ and $\sigma_i \sim \text{Uni}[0.1, 1]$. Here, $f_j$ is a randomly initialized MLP with one hidden layer of size $100$ and sigmoid activation. It is worth noting that the score function used in nonlinear-NOTEARS \citep{zheng2020learning} is least square loss:
\[
s(f, \rmX) = \frac{1}{2n} \sum_{i=1}^p \|\mathbf{x}_i - \widehat{f}_i(\rmX)\|^2,
\]
where each $\widehat{f}_i$ is an MLP with one hidden layer of size 30 and sigmoid activation.
 
\paragraph{Simulation}
We generated random datasets $\rmX \in \R^{n \times p}$ by sampling rows i.i.d. from the models described above. For each simulation, we produced datasets with $n$ samples across graphs with $p$ nodes.
\begin{itemize}
    \item \textbf{Linear Model:} $p = \{10, 20, 50, 70, 100\}$, $k = \{1, 2, 4\}$, $n = 1000$ and graph types = \{ER, SF\}.
    \item \textbf{Generalized Linear Model:} $p = \{10, 20, 40\}$, $k = \{1, 2\}$, $n=10000$ and graph types = \{ER, SF\}.
    \item \textbf{Nonlinear Model:} $p = \{10, 20, 40\}$, $k = \{1, 2\}$, $n=1000$ and graph types = \{ER, SF\}.
\end{itemize}

For each dataset, we applied several structural learning algorithms, including fast greedy equivalence search ({FGES} \citep{ramsey2017million}), constraint-based methods ({PC} \citep{spirtes2000}), {NOTEARS} \citep{zheng2018dags, zheng2020learning} (using least squares loss), {GOLEM} \citep{ng2020role} (using NLL with $\ell_1$ penalty), {VarSort} \citep{reisach2021beware}, causal
additive models ({CAM} \citep{buhlmann2014cam}), \textsc{logll(-notears/dagma)-sample} (utilizing the sample covariance matrix $\widehat{\Sigma}$), \textsc{logll(-notears/dagma)-population} (using the population covariance matrix $\Sigma$) and exact method (\textsc{Exact-search}). Implementation details are provided in the following paragraph. After running the algorithms, a post-processing threshold of $0.3$ was applied to the estimated matrix $B_{\text{est}}$ to prune small values, following the methodology in \cite{zheng2018dags, zheng2020learning}.

\paragraph{Implementation}\label{paragraph:implementation}
The implementation details of baseline are listed below: 
\begin{itemize}
    \item Fast Greedy Equivalence Search ({FGES} \citep{ramsey2017million}) is based on greedy search and assumes linear dependency between variables. The implementation is based on the \texttt{py-tetrad} package, available at \href{https://github.com/cmu-phil/py-tetrad}{\color{blue}{\texttt{https://github.com/cmu-phil/py-tetrad}}}. We use \texttt{BIC} as the score function with default parameters.
    \item {PC} \citep{spirtes2000} is constraint-based method and based on uses conditional independence induced by causal relationships to learn those causal relationships. The implementation is based on the \texttt{py-tetrad} package, available at \href{https://github.com/cmu-phil/py-tetrad}{\color{blue}{\texttt{https://github.com/cmu-phil/py-tetrad}}}. We use Fisher-$Z$ test with $\alpha = 0.5$.
    \item {NOTEARS} \citep{zheng2018dags,zheng2020learning} is the continuous DAG learning algorithm using least square loss with $\ell_1$ regularization. It is implemented in python:  \href{https://github.com/xunzheng/notears}{\color{blue}{\texttt{https://github.com/xunzheng/notears}}}.
    \item {GOLEM} \citep{ng2020role} is implemented using Python and TensorFlow. The code is available  \href{https://github.com/ignavierng/golem}{\color{blue}{\texttt{https://github.com/ignavierng/golem}}}.
   \item {VarSort} \citep{reisach2021beware} is based on the observation that variances tend to accumulate along the topological sort. It uses Lasso \citep{tibshirani1996regression} to recover the coefficients. The code is implemented in Python and is available \href{https://github.com/Scriddie/Varsortability}{\color{blue}{\texttt{https://github.com/Scriddie/Varsortability}}}.
   \item {DAGMA} \citep{bello2022dagma} is a continues DAG learning algorithm with better accuracy and faster computational speed. It also use least square loss with $\ell_1$ penalty as NOTEARS. The implementation is available at \href{https://github.com/kevinsbello/dagma}{\color{blue}{\texttt{https://github.com/kevinsbello/dagma}}}.
   \item Causal additive model (CAM \cite{buhlmann2014cam}) learns an addititve SEM by leveraging efficient nonparametric regression techiques and greedy search over edges. The code is implemented in R, and avaiable at \href{https://rdrr.io/cran/CAM/man/CAM.html}{\color{blue}{\texttt{https://rdrr.io/cran/CAM/man/CAM.html}}}
   \item \textsc{logll(-notears/dagma)-sample/population} is our approach, which modifies the original NOTEARS or DAGMA algorithm by replacing its scoring function. Instead of using the least squares loss with an $\ell_1$ penalty, it employs a log-likelihood function that includes a quasi-MCP penalty, as defined in \eqref{eq:quasi_MCP}. For \textsc{logll(-notears/dagma)-sample}, we use the sample covariance matrix $\hat{\Sigma}$ in the score function. In contrast, \textsc{logll(-notears/dagma)-population} uses the true covariance matrix $\Sigma$ as a baseline approach. In this paper, \textsc{logll-notears} refers to solving \eqref{eq:opt_quasi_mcp} using NOTEARS with NLL and quasi-MCP as the score function, while \textsc{logll-dagma} refers to solving \eqref{eq:opt_quasi_mcp} using DAGMA with NLL and quasi-MCP as the score function.

   \item \textsc{Exact-search} is used to indicate that the optimization problem \eqref{eq:opt_quasi_mcp} is solved exactly. This approach is feasible only for small graphs, where we attempt to calculate all possible configurations \(\widetilde{B}(\pi)\) as defined in Section \ref{subsec:EC}. These calculations can be performed using Cholesky Decomposition or Ordinary Least Squares (OLS). The label \textsc{population} signifies that the operation is based on the population covariance matrix \(\Sigma\), while \textsc{sample} denotes that it is based on the sample covariance matrix \(\widehat{\Sigma}\). The Structural Hamming Distance (SHD) for \textsc{Exact-search} is calculated on an average basis. This involves identifying the set \(\gM_{\min}(\Theta)\) or $\gM_{\min}(\widehat{\Theta})$, calculating the SHD for each DAG within this set, and then computing the average SHD.
\end{itemize}

\subsection{Implementation of LogLL(-NOTEARS/DAGMA)-population/sample}
\paragraph{Linear model}
There are two main challenges in solving \eqref{eq:general_opt_quasi_mcp}. The first challenge is that \eqref{eq:opt_quasi_mcp} is a highly nonconvex optimization problem and is sensitive to initialization. If we randomly initialize or set the initialization to zero, as done in \cite{zheng2018dags, zheng2020learning, ng2020role}, \textsc{logll-notears/dagma} often gets stuck at a local optimal solution. The second challenge arises from Theorem \ref{thm:main}, where we are advised to select \(\lambda\) and \(\delta\) such that \(0 < \lambda < \lambda_0\) and \(0 < \delta < \delta_0\).
Theoretically, smaller values for \(\lambda\) and \(\delta\) should be used 
to adhere to the theorem's guidelines, however, in practice, solving the optimization problem \eqref{eq:opt_quasi_mcp} to global optimality is not always feasible. 

To address the first challenge, we adopt the approach from \citet{ng2020role}. We first run {NOTEARS} (with least squares loss and $\ell_1$ penalty) or {DAGMA} (with least squares loss and $\ell_1$ penalty) to obtain a "good" initialization point. Then, we apply \textsc{logll(-notears/dagma)-population/sample} to obtain the final output.

To address second challenge, we use warm starts.
We begin with larger values for \(\lambda\) and \(\delta\) and solve \eqref{eq:opt_quasi_mcp} using \textsc{logll-notears/dagma} to obtain an initial \(B_{\text{est}}\). We then reduce \(\lambda\) and \(\delta\) by a factor of \(\gamma < 1\) and use the previous output as the starting point for the next iteration of \textsc{logll-notears/dagma}. This process is repeated until the negative log-likelihood \(\ell_n(B_{\text{est}}, \Omega_{\text{est}})\) begins to increase. This iterative approach helps to refine the solutions gradually, ensuring that each step starts from a potentially better approximation, as formally outlined in Algorithm \ref{alg:logll_notears}.

In Algorithm \ref{alg:logll_notears}, we detail the complete implementation of \textsc{logll(-notears/dagma)-sample}. By replacing \(\widehat{\Sigma}\) with \(\Sigma\) and substituting \(\ell_n\) with \(\ell\), this algorithm is adapted to the full implementation of \textsc{logll(-notears/dagma)-population}. 

It turns out that \textsc{logll-dagma} outperforms \textsc{logll-notears} in our experiments. Therefore, we present the results from \textsc{logll-dagma}.

\begin{algorithm}[th]
\DontPrintSemicolon
\SetAlgoLined
\LinesNumbered  
\caption{Full Implementation}\label{alg:logll_notears}
\KwIn{Sample covariance $\widehat{\Sigma}$, decay factor $\gamma \in (0,1)$, and $\lambda, \delta$, initial point $(B_{\text{in}}, \Omega_{\text{in}})$, initial loss $\ell_{\text{in}}$ (typically very large)} \tcp{\textcolor{blue}{initial point $(B_{\text{in}}, \Omega_{\text{in}})$ is obtained from NOTEARS or DAGMA}}
\KwOut{$(B_{\text{est}},\Omega_{\text{est}})$}
\While{True}{
    Solve \textsc{logll-notears} or \textsc{logll-dagma} with input $(B_{\text{in}}, \Omega_{\text{in}})$, and get output $(B_{\text{out}}, \Omega_{\text{out}})$\;
    Calculate $\ell_n(B_{\text{out}}, \Omega_{\text{out}})$\;
    \eIf{$\ell_{\text{in}} > \ell_n(B_{\text{out}}, \Omega_{\text{out}})$}{
        $\ell_{\text{in}} \leftarrow \ell_n(B_{\text{out}}, \Omega_{\text{out}})$\;
        $\lambda \leftarrow \gamma \lambda$\; 
        $\delta \leftarrow \gamma \delta$\;
        $(B_{\text{in}}, \Omega_{\text{in}}) \leftarrow (B_{\text{out}}, \Omega_{\text{out}})$\;
    }{
        \textbf{return} $(B_{\text{in}}, \Omega_{\text{in}})$\;
    }
}

\end{algorithm}

\paragraph{Nonlinear model} We utilize \textsc{logll-notears}, which uses the same optimization framework in \cite{zheng2020learning} with replacement of least square loss to negative log-likelihood loss, we keep other hyperparameter unchanged. 
The score function (negative log-likelihood) we use
 \[s_{NLL}(f,\rmX) = \frac{1}{2n}\sum_{i=1}^d\log \left(\|{\mathbf{x}_i-\widehat{f}_i(\rmX)}\|^2\right)\]
 Here $\hat{f}_i$ is $i$-th MLP with one hidden layer of size $40$ and sigmoid activation.
\paragraph{Standardized data $\rmZ$}

Although it has been shown that the log-likelihood score is scale-invariant for the linear model with Gaussian noise (see Theorem \ref{thm:sample_EC}), it was observed that using standardized data \(\rmZ\) makes solving the optimization problem \eqref{eq:opt_quasi_mcp} significantly more challenging. For the \textsc{logll-notears} implementation, the LBFGS-B algorithm fails to produce meaningful solutions. As a result, we replaced LBFGS-B with ADAM \citep{kingma2014adam}, an optimizer better suited for handling the difficulties of standardized data, to solve the subproblem in NOTEARS. Alternatively, directly using \textsc{logll-dagma} is another effective way to address this challenge. Empirically, we find that setting \(\gamma = 0.8\), \(\lambda = 0.4\), and \(\delta = 0.2\) usually serves as a good choice for the parameters in our optimization procedures.
\subsection{Metrics}
We evaluate the performance of each algorithm with the following three metrics:
\begin{itemize}
    \item \textbf{Structural Hamming distance (SHD)}: A standard benchmark in the structure learning literature that counts the total number of edges additions, deletions, and reversals needed to convert the estimated graph into the true graph. Since our model specified in \eqref{eq:linear_uniden} is unidentifiable, the Structural Hamming Distance (SHD) is calculated with respect to the completed partially directed acyclic graph (CPDAG) of the ground truth and \(B_{\text{est}}\). We utilize the code from \citet{zheng2024causal}.
    \item \textbf{Times:} The amount of time the algorithm takes to run, measured in seconds. This metric is used to evaluate the speed of the algorithms.
\end{itemize}

\newpage
\section{Additional Results}\label{sec:additional_results}
\subsection{Linear Model (SHD)}

\begin{figure}[H]
    \centering
    \includegraphics[width =\linewidth]{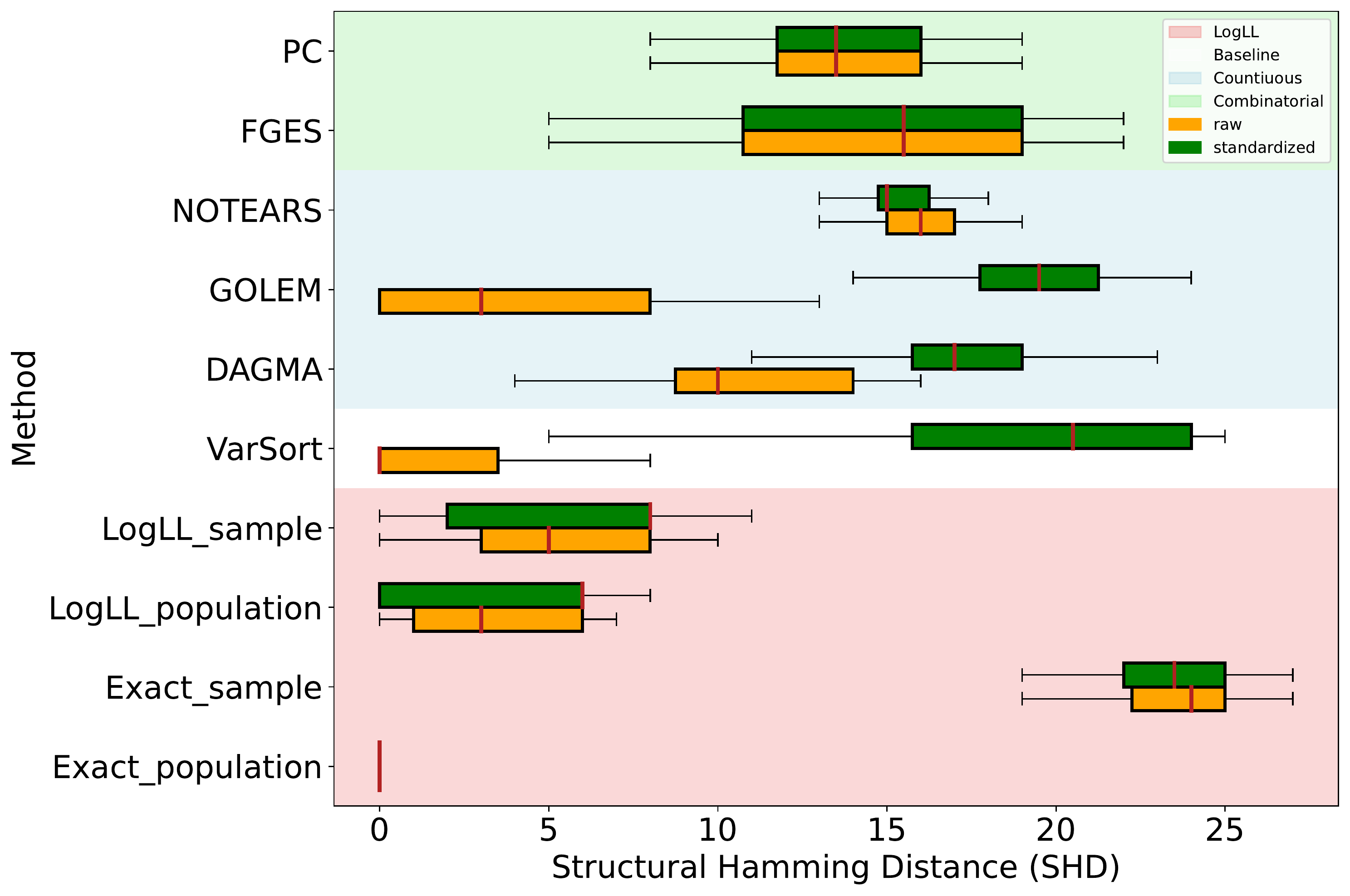}
    \caption{Structural Hamming Distance (SHD, with lower values indicating better performance) between Markov equivalence classes (MEC) of recovered and ground truth graphs for ER-2 graphs with 8 nodes. Here {Exact-search} is added to illustrate Theorem \ref{thm:sample_EC}.
    Standardization does not affect the DAG structure if the optimization \eqref{eq:quasi_MCP} can be solved globally. Both {Exact-sample} and {Exact-population} produce the same DAG structure for raw data \(\rmX\) and standardized data \(\rmZ\). When the population covariance matrix is known, \(\gE_{\min}(\Theta^0) = \gM(G^0)\), resulting in an SHD of zero.
    The poor performance of {Exact-sample} can be attributed to the lack of thresholding applied to the coefficients recovered from Ordinary Least Squares (OLS). Since \(\widehat{\Sigma}\) is only an approximation of \(\Sigma\), coefficients derived from OLS based on different permutations \(\pi\) may shift from zero to nonzero, even though such coefficients might be very small. However, since {Exact} is impractical for real-world applications, we use this example primarily for illustrative purposes, and thus no threshold is applied to this method.}
    \label{fig:3}
\end{figure}
\newpage
\begin{figure}[H]
    \centering
    \includegraphics[width = 0.9\linewidth]{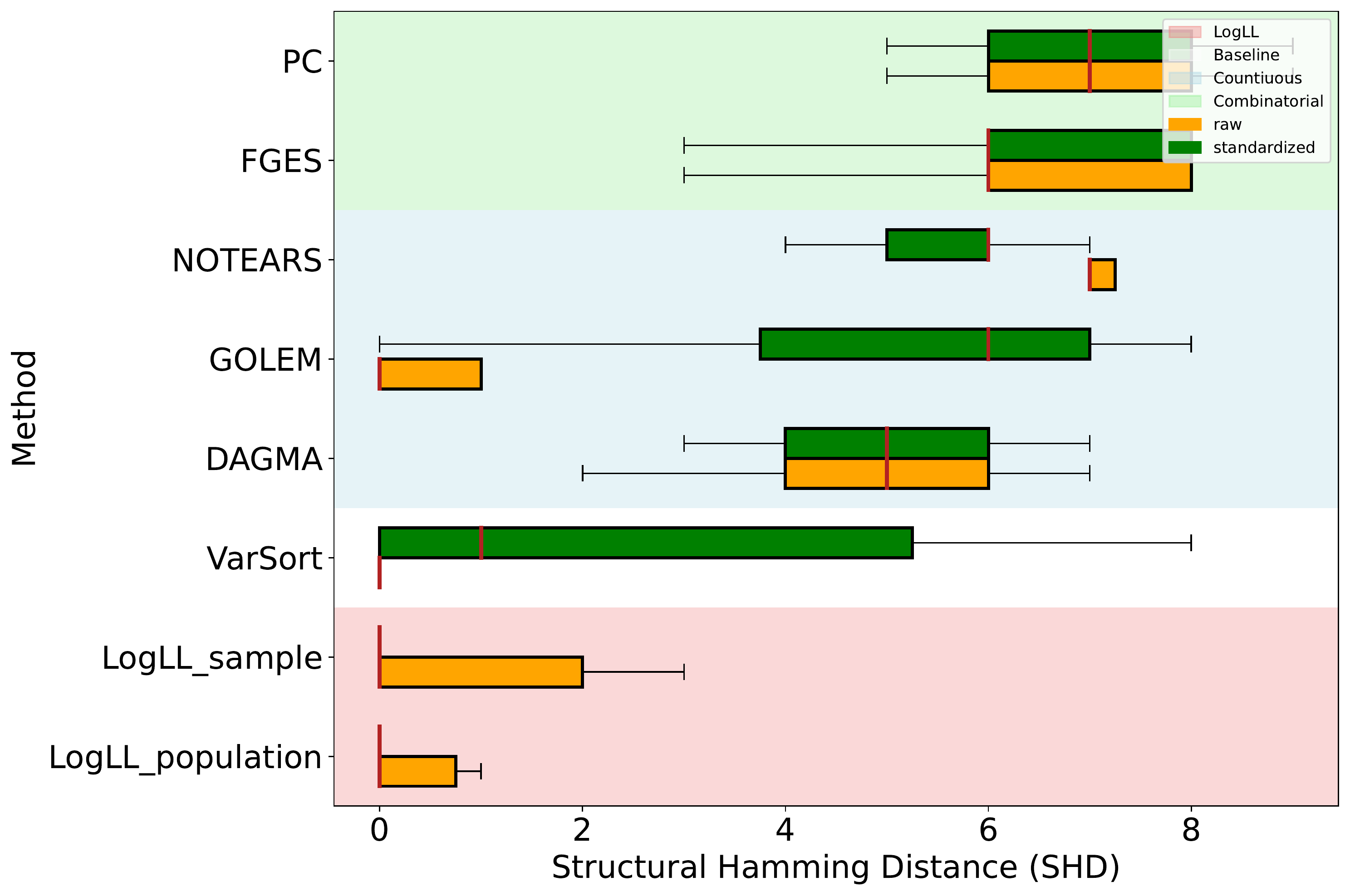}
    \caption{Comparison of raw (orange) vs. standardized (green) data. Structural Hamming Distance (SHD, with lower values indicating better performance) between Markov equivalence classes (MEC) of recovered and ground truth graphs for ER-2 graphs with 
    $5$ nodes}
    \label{fig:4}
\end{figure}
\begin{figure}[H]
    \centering
    \includegraphics[width =0.9\linewidth]{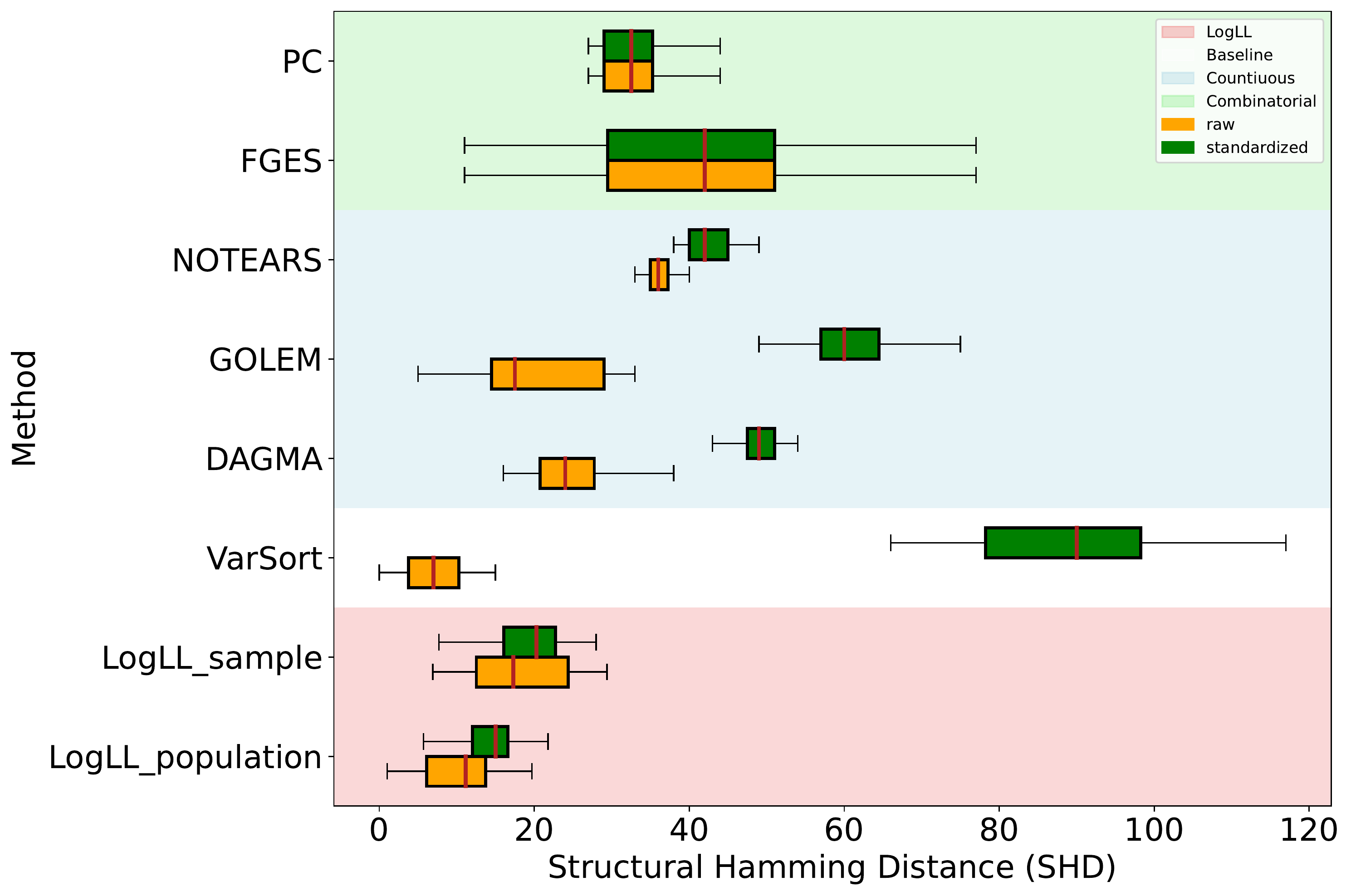}
    \caption{Comparison of raw (orange) vs. standardized (green) data. Structural Hamming Distance (SHD, with lower values indicating better performance) between Markov equivalence classes (MEC) of recovered and ground truth graphs for ER-2 graphs with 
    $20$ nodes}
    \label{fig:5}
\end{figure}

\begin{figure}[H]
    \centering
    \includegraphics[width =0.8\linewidth]{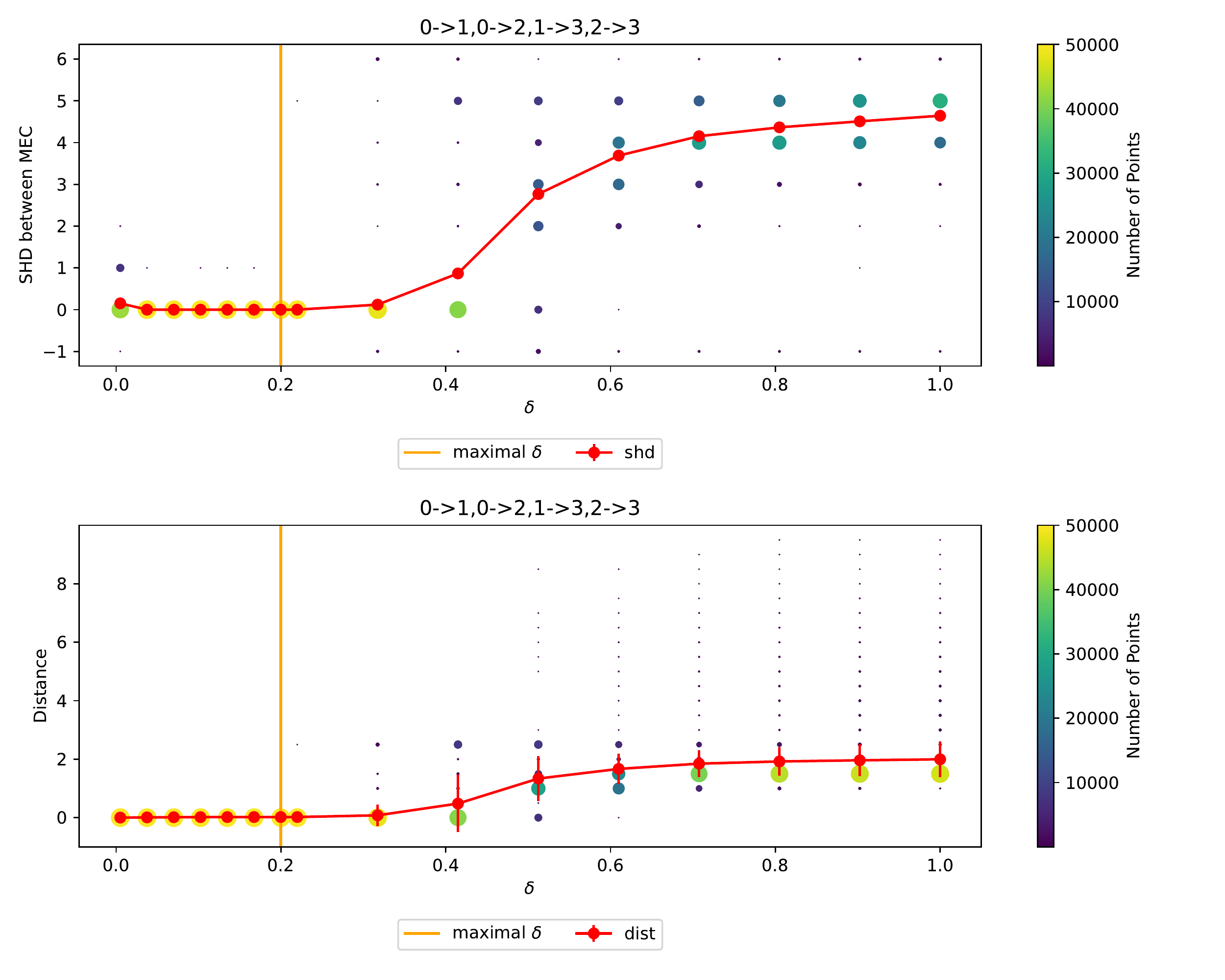}
    \caption{Graph: structure $X_0\rightarrow X_1, X_0\rightarrow X_2, X_1\rightarrow X_3,X_2\rightarrow X_3$. For $0<\delta<\delta_0$, the estimated $(B_{\text{est}},\Omega_{\text{est}})\in \gEmin(\Theta^0)$ because SHD and distance are closed to $0$. }
    \label{fig:6}. 
\end{figure}

\begin{figure}[H]
    \centering
    \includegraphics[width =0.8 \linewidth]{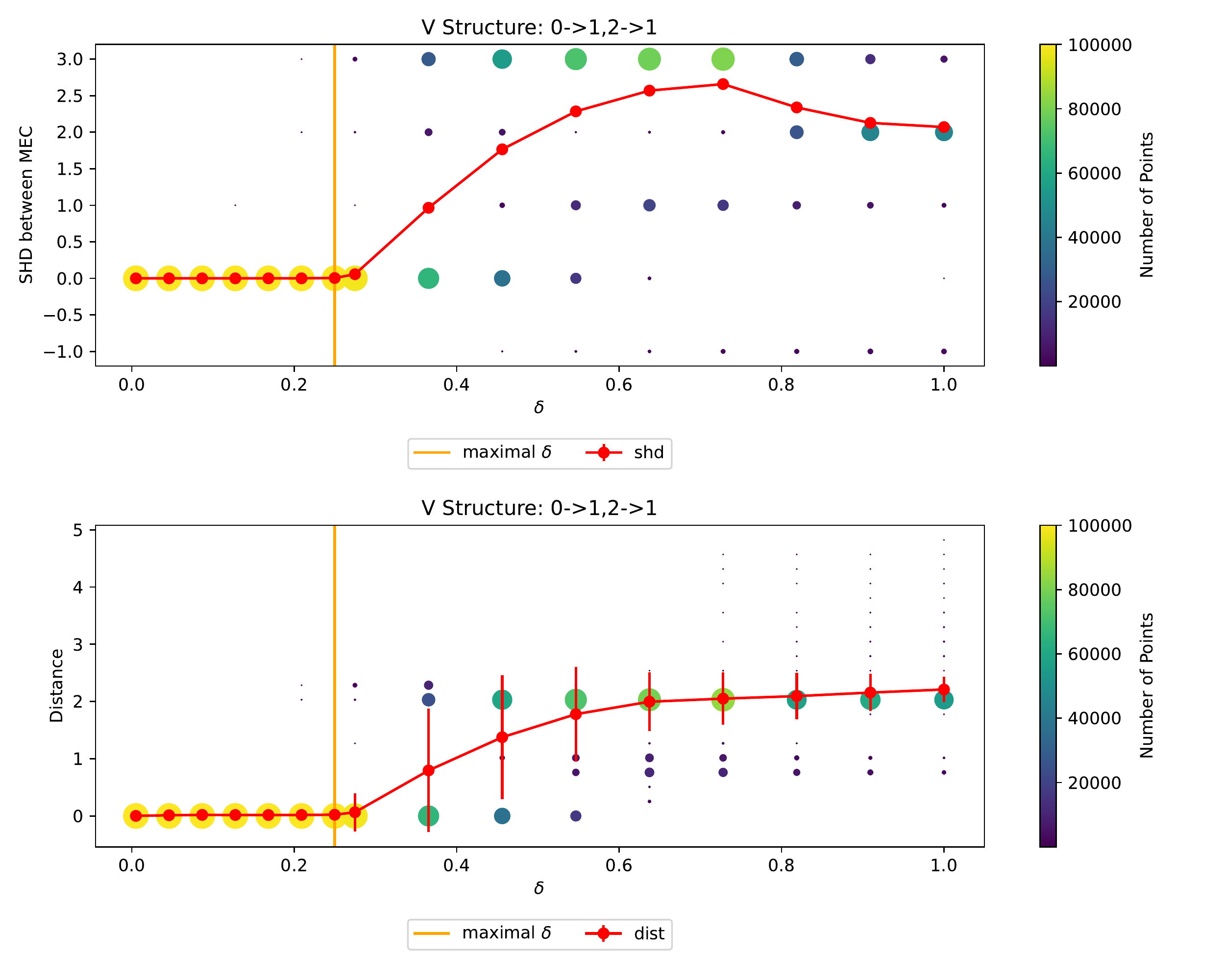}
    \caption{Graph: structure $X_0\rightarrow X_1, X_2\rightarrow X_1$. For $0<\delta<\delta_0$, the estimated $(B_{\text{est}},\Omega_{\text{est}})\in \gEmin(\Theta^0)$ because SHD and distance are closed to $0$. }
    \label{fig:7}. 
\end{figure}
\subsection{Linear Model (Time)}
\begin{figure}[H]
    \centering
    \includegraphics[width =0.9 \linewidth]{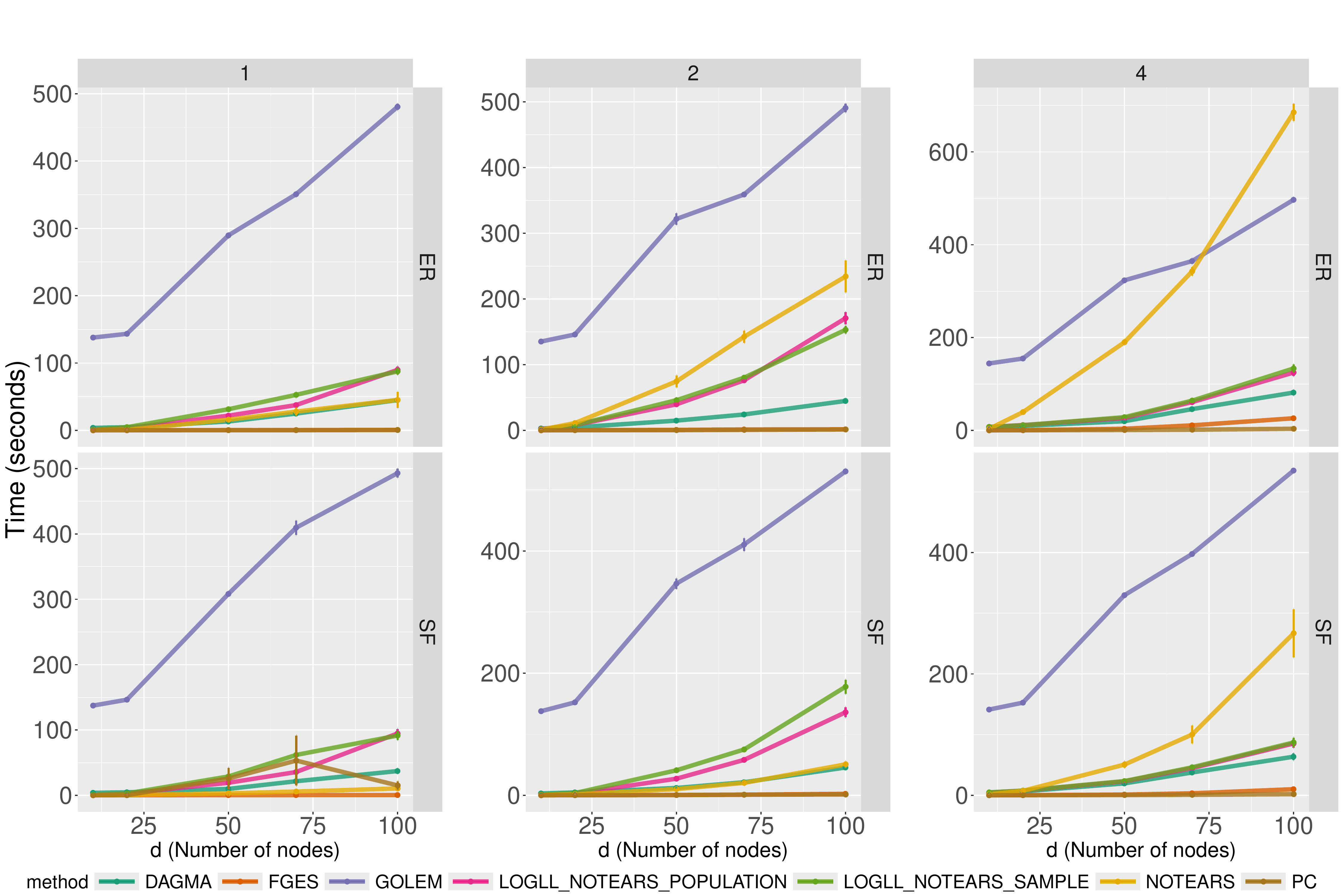}
    \caption{Results in term of Time. Lower is better.  Column: $k = \{1,2,4\}$. Row: random graph types. \{ER,SF\}-$k$ = \{Scale-Free,\Erdos-\Renyi\} graphs with $kd$ expected edges. Here $d=\{10,20,50,70,100\},n=1000$. Standard error is removed for better visualization. It is for different methods on raw data $\rmX$}
    \label{fig:8}
\end{figure}

\begin{figure}[H]
    \centering
    \includegraphics[width =0.9 \linewidth]{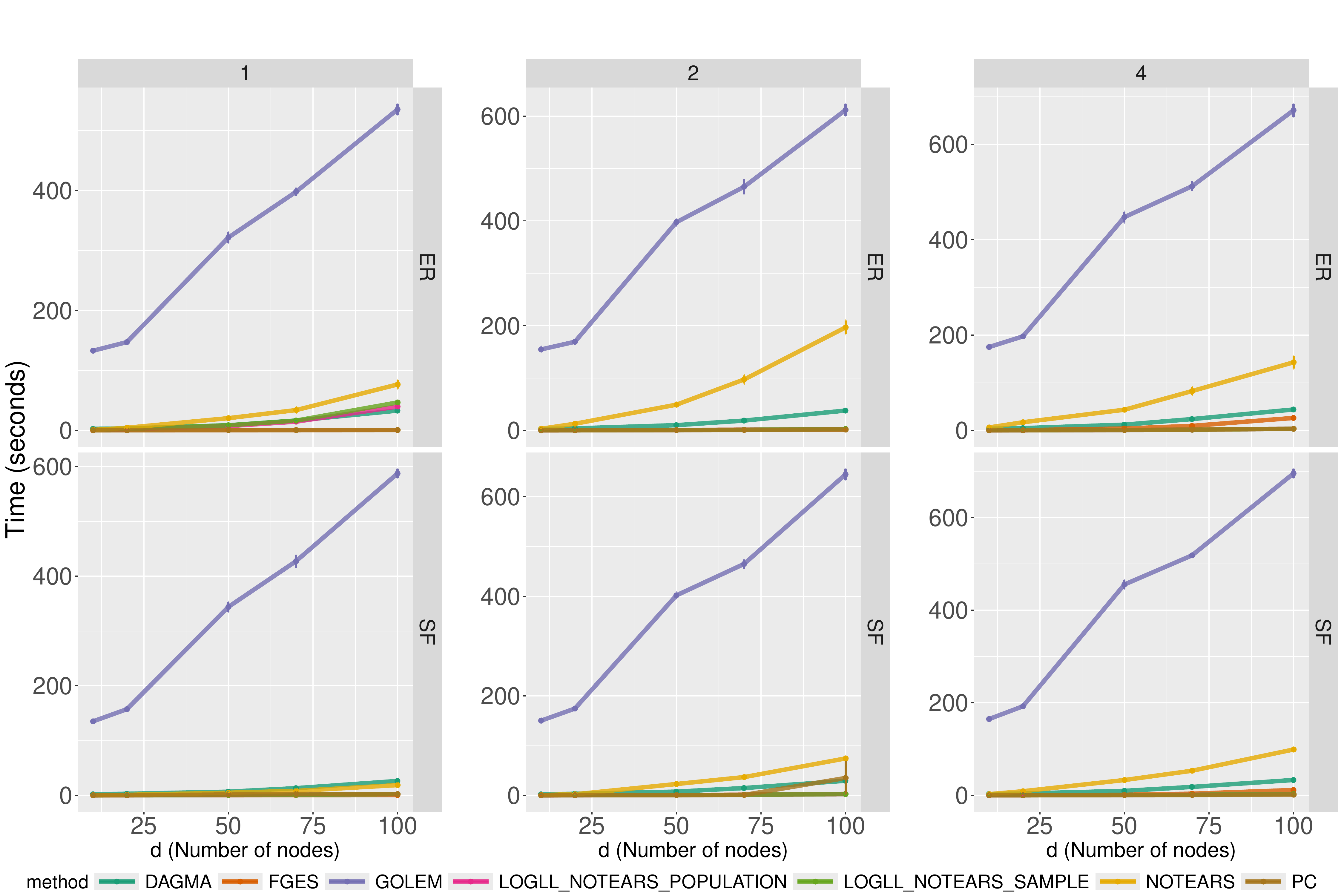}
    \caption{
    Results in term of Time. Lower is better.  Column: $k = \{1,2,4\}$. Row: random graph types. \{ER,SF\}-$k$ = \{Scale-Free,\Erdos-\Renyi\} graphs with $kd$ expected edges. Here $d=\{10,20,50,70,100\},n=1000$. Standard error is removed for better visualization. It is for different methods on standardized data $\rmZ$}
    \label{fig:9}
\end{figure}

\newpage
\subsection{Nonlinear Model (SHD)}\label{subsec:nonlinear}

\subsubsection{Neural Network}\label{subsubsec:neural_network}

\begin{figure}[th!]
    \centering
    \includegraphics[width=0.75\linewidth]{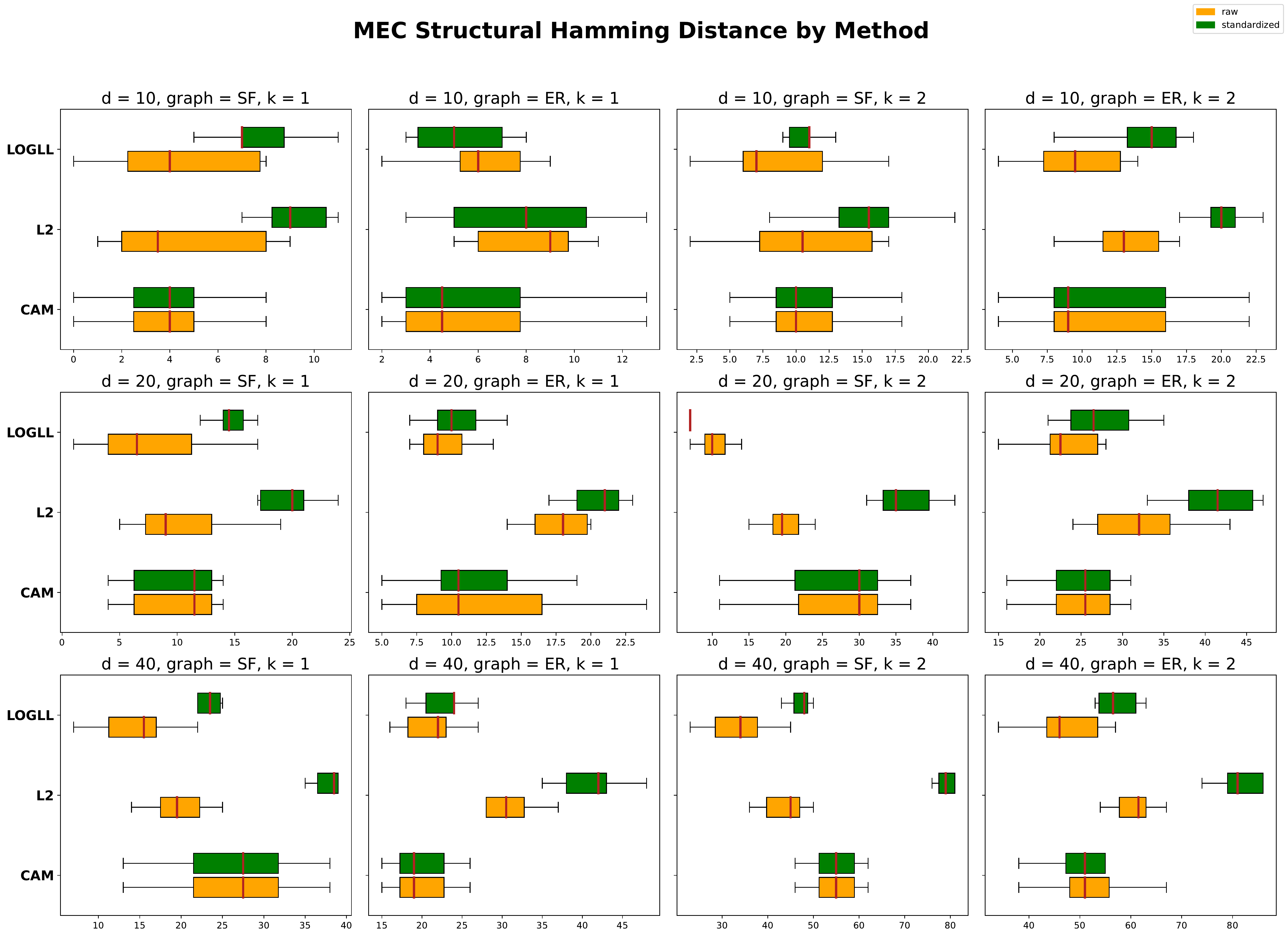}
    \caption{Structural Hamming distance (SHD) between Markov equivalence classes (MEC) of recovered and ground truth graphs. \textbf{LOGLL} (i.e. \textsc{logll-notears}) stands for NOTEARS method with log-likelihood and quasi-MCP, \textbf{L2} (i.e. {NOTEARS}) stands for NOTEARS method with least square and $\ell_1$. \textbf{CAM}\citep{buhlmann2014cam} standards for causal additive model with log-likelihood loss. } 
\label{fig:10}
\end{figure}
\subsubsection{General Linear Model with Binary Output (Logistic Model) }\label{subsubsec:logistic}
\begin{figure}[H]
    \centering
\includegraphics[width=0.75\linewidth]{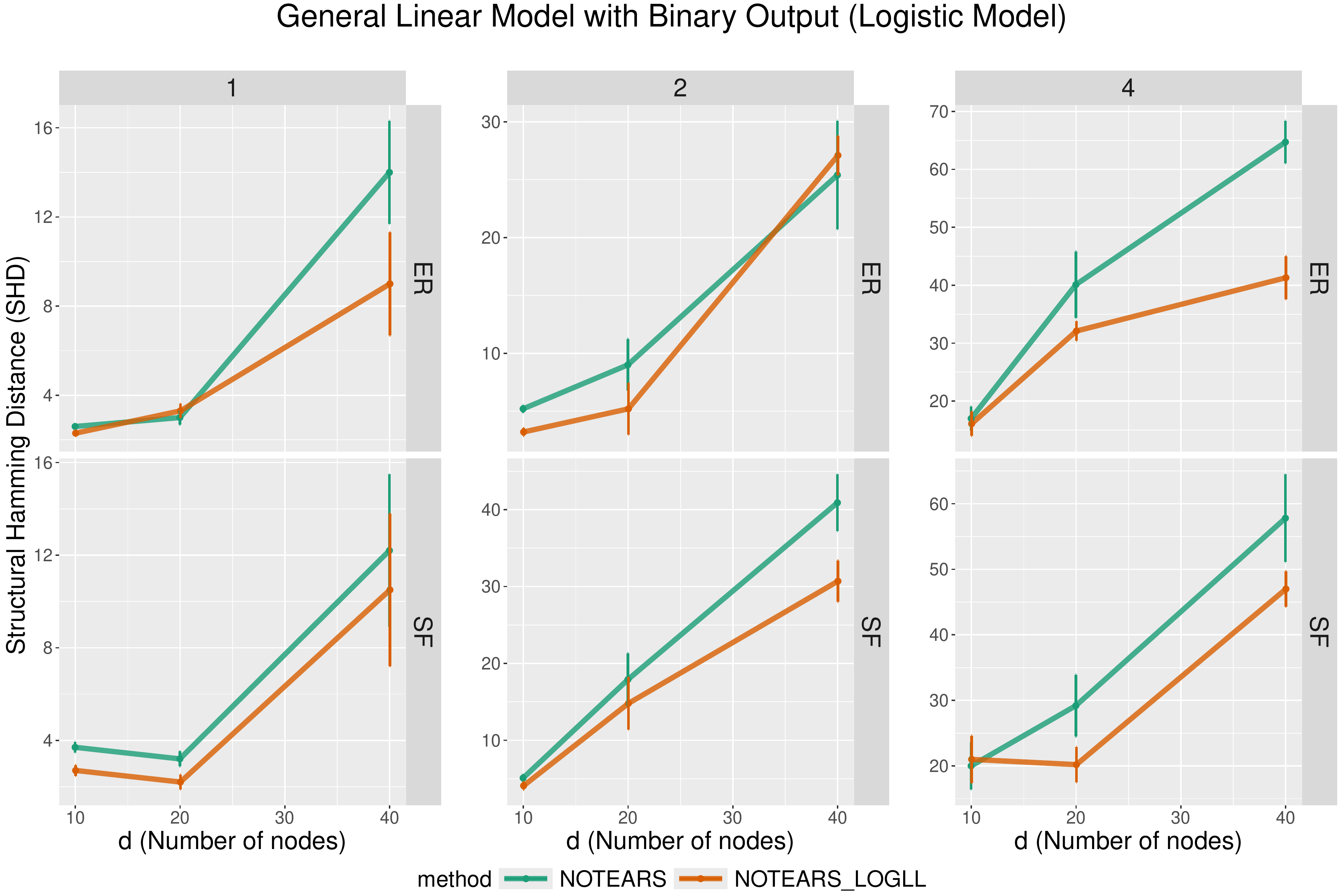}
    \caption{Structural Hamming distance (SHD) for Logistic Model, Row: random graph types, \{SF, ER\}-$k$= \{Scale-Free, Erd\"os-Rényi\} graphs. Columns: $kd$ expected edges. NOTEARS\_LOGLL (i.e. \textsc{logll-notears}) uses log-likelihood with quasi-MCP, NOTEARS use log-likelihood with $\ell_1$. Error bars represent standard errors over 10 simulations. }
    \label{fig:enter-label}
\end{figure}

\end{document}